\title{Reinforcement Learning in near-continuous time for continuous state-action spaces}
\author{%
        Lorenzo Croissant\textsuperscript{1,~2}\hspace{4em} Marc Abeille\textsuperscript{2}\hspace{4em} Bruno Bouchard\textsuperscript{1} \vspace{.35em}\\
        \textsuperscript{1}CEREMADE, Université Paris Dauphine-PSL, CNRS \hspace{2em} \textsuperscript{2}Criteo AI Lab\vspace{.4em}\\
        \texttt{\{croissant,bouchard\}@ceremade.dauphine.fr}\\
        \texttt{m.abeille@criteo.com}
}
\pgfplotsset{width=8cm,compat=1.15}
\theoremstyle{plain}
\newtheorem{theorem}{Theorem}[section]
\newtheorem{lemma}[theorem]{Lemma}
\newtheorem{corollary}[theorem]{Corollary}
\theoremstyle{definition}
\newtheorem{definition}[theorem]{Definition}
\newtheorem{assumption}{Assumption}
\crefname{assumption}{Assumption}{Assumptions}
\theoremstyle{remark}
\newtheorem{remark}[theorem]{Remark}
\def\x{\times}
\DeclareMathOperator*{\argmax}{argmax}
\DeclareMathOperator*{\argmin}{argmin}
\DeclareMathOperator{\Trace}{Tr}
\DeclareMathOperator{\Tr}{Tr}
\DeclareMathOperator{\idmat}{\boldmath{I}}
\def\mybigtimes{\mathop{\mathchoice{%display:
   \vcenter{\hbox to10bp{\vrule height15bp width0pt \pdfliteral{
   q 1 J .8 w 0 1 m 10 14 l S 0 14 m 10 1 l S Q
}\hss}}}{%text:
   \vcenter{\hbox to10bp{\kern1bp\vrule height10bp width0pt \pdfliteral{
   q 1 J .65 w 0 0 m 8 10 l S 0 10 m 8 0 l S Q
}\hss}}}{\times}{\times}%script, scriptscript not defined
}}
\newcommand{\abs}[1]{\left\lvert #1 \right\rvert}
\newcommand{\norm}[1]{\left\lVert#1\right\rVert}
\newcommand{\snorm}[1]{\lVert#1\rVert}
\newcommand{\NN}{\mathbb{N}}
\newcommand{\RR}{\mathbb{R}}
\newcommand{\PP}{\mathbb{P}}
\newcommand{\EE}{\mathbb{E}}
\newcommand{\1}{\mathds{1}}
\def\op{{\rm op}}
\def\deb{1}
\newcommand{\de}{\mathrm{d}}
    \def\Ab{\mathbb{A}}
    \def\Db{\mathbb{D}}
    \def\Fb{\mathbb{F}}
    \def\Hb{\mathbb{H}}
    \def\Rb{\mathbb{R}}
    \def\Ac{\mathcal{A}}
    \def\Bc{\mathcal{B}}
    \def\Cc{\mathcal{C}}
    \def\Dc{\mathcal{D}}
    \def\Ec{\mathcal{E}}
    \def\Fc{\mathcal{F}}
    \def\Hc{\mathcal{H}}
    \def\Nc{\mathcal{N}}
    \def\Oc{\mathcal{O}}
    \def\Rc{\mathcal{R}}
    \def\Xc{\mathcal{X}}
    \def\Er{\mathrm{E}}
    \def\dr{\mathrm{d}}
    \def\As{\mathscr{A}}
    \def\Cs{\mathscr{C}}
    \def\Fs{\mathscr{F}}
    \def\Ns{\mathscr{N}}
    \def\Vs{\mathscr{V}}
    \def\cf{\mathfrak{c}}
\def\actionset{\Ab}
\def\parset{\Theta}
\def\thetaopt{\tilde\theta}
\def\policy{\pi}
\def\ve{\varepsilon}
\def\pve{\psi^\ve}
\DeclareMathOperator{\dimension}{dim}
\def\dime{\dimension_{\Er}}
\def\dee{\dr_{\Er}}
\newcommand{\vvast}{\bBigg@{3}}
\newcommand{\vast}{\bBigg@{4}}
\newcommand{\Vast}{\bBigg@{5}}
\newcommand{\mypar}[1]{{\textbf{#1.}}}
\begin{document}
\maketitle
\begin{abstract}
        We consider the Reinforcement Learning problem of controlling an unknown dynamical system to maximise the long-term average reward along a single trajectory. 
        Most of the literature considers system interactions that occur in discrete time and discrete state-action spaces. 
        Although this standpoint is suitable for games, it is often inadequate for mechanical or digital systems in which interactions occur at a high frequency, if not in continuous time, and whose state spaces are large if not inherently continuous. 
        Perhaps the only exception is the Linear Quadratic framework for which results exist both in discrete and continuous time.  
        However, its ability to handle continuous states comes with the drawback of a rigid dynamic and reward structure.
        This work aims to overcome these shortcomings by modelling interaction times with a Poisson clock of frequency $\varepsilon^{-1}$, which captures arbitrary time scales: from discrete ($\varepsilon=1$) to continuous time ($\varepsilon\downarrow0$). 
        In addition, we consider a generic reward function and model the state dynamics according to a jump process with an arbitrary transition kernel on $\mathbb{R}^d$. 
        We show that the celebrated optimism protocol applies when the sub-tasks (learning and planning) can be performed effectively. 
        We tackle learning within the eluder dimension framework and propose an approximate planning method based on a diffusive limit approximation of the jump process.
        Overall, our algorithm enjoys a regret of order $\tilde{\mathcal{O}}(\varepsilon^{1/2} T+\sqrt{T})$. 
        As the frequency of interactions blows up, the approximation error $\varepsilon^{1/2} T$ vanishes, showing that $\tilde{\mathcal{O}}(\sqrt{T})$ is attainable in near-continuous time.
\end{abstract}

\section{Introduction}

Controlling a dynamical system to drive it to optimal long-term average behaviour is a key challenge in many applications, ranging from mechanical engineering to econometrics. Reinforcement Learning (RL) aims to do so when the system is a priori unknown by tackling jointly both the control and the statistical inference of the system. This joint objective is even more important in the online version of the problem, in which one interacts with the system along a single trajectory (no resets or episodes). In the last decades, the insights of Bandit Theory (see e.g. \cite{lattimore2020bandit}) have been leveraged to tackle the RL problem, while addressing the inherent exploration-exploitation dilemma that naturally arises in sequential decision-making (see e.g. \cite[\S~4.2]{szepesvari2010algorithms}).
However, most literature considers interactions that occur in discrete time, which is not always applicable when events are triggered by a digital system. Such systems are pervasive in finance and advertising, for instance, and typically have interactions occurring at a very high frequency, with each interaction having only a marginal impact on the state of the system.

\mypar{Near-continuous time, continuous state-space} A natural approach to plan in such systems is to directly model the problem in continuous time. This is the common approach in finance, see for instance \cite{chen2001fundamentals, cont_financial_2004, obizhaeva2013optimal}.
However, the continuous-time approach conflicts with the sample-based nature of statistical learning theory that fundamentally takes place in discrete time. 
As such, learning requires careful modelling of the data-generating process and its arrival times.
We consider interactions governed by a Poisson clock, setting the expected inter-arrival time of the clock to a parameter $\ve\in(0,1)$. This allows us to model a continuum of situations: from discrete time $\ve=1$, to continuous time $\ve\downarrow0$. We are interested in the regime in which $\ve\ll 1$.

Concurrently, a prerequisite for real-world applicability is the ability to model complex dynamics and rich reward signals for continuous state variables. With this in mind, we focus on the model-based approach where the transition and the reward function belong to a parameterised class of functions operating on a continuous state-action space. This level of generality poses challenges regarding all three key sub-tasks of RL: which are planning, learning, and the explore-exploit trade-off.

\mypar{Discrete and continuous control} For discrete-time dynamics on finite state-action spaces, the planning problem falls under the umbrella of Markov Decision Processes (MDPs) which have been extensively reviewed in \cite{puterman_markov_2005}. The finite nature of MDPs is at the heart of their theoretical and computational success. Their extension to countable or even continuous state spaces is, however, non-trivial; see e.g. \cite[\S~4.6,~p.245]{bertsekas2011dynamic} for a review of the challenges. Perhaps the only exception which retains those nice theoretical and computational properties is the celebrated Linear Quadratic (LQ) framework \cite{kalman}. However, both frameworks are limited in their expressive power.
In contrast, the continuous-time theory of Stochastic Control has demonstrated how to effectively solve the control problem for arbitrary regular dynamics on continuous state-spaces. It enjoys a rich and mature literature \cite{arisawa1998ergodic,arapostathis2012ergodic,lions1983optimal}, both on the theoretical aspects as well as numerical solvers based on Partial Differential Equations (PDEs), another storied field \cite{kushner_numerical_2001,barles1991convergence,bonnans_consistency_2003}.
The near-continuous time framework lies between the two theories, and recent results of \cite{abeille_diffusive_2022} show how to navigate between them and approximately solve the planning problem in the high-frequency interactions regime by solving its diffusive counterpart.

\mypar{Learning non-linear systems} 
Similar to the planning problem, the natural way to move beyond finite Markov chain models and towards continuous state dynamics is through linear models.  The least-squares estimator enjoys strong theoretical guarantees including adaptive confidence sets that can be efficiently maintained online, see e.g. \cite{abbasi-yadkori_improved_2011}.
Extensions \cite{russo2013eluder,osband2014model} showed how to extend this approach to richer model classes through the use of Non-Linear Least Squares (NLLS). This framework subsumes standard least squares and has been successful in many dynamics by retaining its key properties regarding confidence sets. While providing a protocol for learning with NLLS, \citeauthor{russo2013eluder} characterised, in \cite{russo2013eluder}, the trade-off between the richness of the model and the hardness of its learning through two quantities of the model class: the log-covering number, and the eluder dimension which summarises the difficulty of turning the information from data into predictive power.

 %We follows the same rationale and show that in our near-continuous time regime with continuous state-spaces we can achieve a regret of $\tilde\Oc(\ve T + \sqrt{T})$.

\mypar{Optimistic exploration} Optimism in the Face of Uncertainty (OFU) has proven highly successful in sequential decision-making from bandits to RL.
The works of \cite{jaksch_near-optimal_2010,auer_logarithmic_2006,bartlett_REGAL_2009} showed how to extend the celebrated UCB \cite{auer2002finite} algorithm from bandits to finite MDPs; later, extensions were made to continuous state in the LQ setting, see e.g. \cite{pmlr-v19-abbasi-yadkori11a,pmlr-v119-abeille20a,cohen2019learning} and references therein.
Extension from bandit to MDP and then to LQ raised new challenges that persist in our setting. First, the agent should not revise its behaviour too often to prevent dithering, which requires the design of a lazy update scheme. Second, generic continuous states-spaces models come with inherent unboundedness, and one must carefully address stability issues.

In this work, we consider the near-continuous time system interaction model and propose an optimistic algorithm for online reinforcement learning in the average reward setting\footnote{Also known as, \emph{average cost per stage}, \emph{long-run average}, or \emph{ergodic} setting.}. Our approach builds on the work of \cite{abeille_diffusive_2022} and the connection to the diffusive regime to address the planning sub-task, yielding $\ve^{1/2}$-optimal policies. Furthermore, we perform the learning with NLLS extending the work of \cite{russo2013eluder} to our near-continuous time and unbounded state setting. Underlying the extension of both these two approaches is a careful treatment of the state boundedness which we do with Lyapunov stability arguments. Overall, our algorithm enjoys near-optimal performance as its regret scales with $\tilde\Oc(\ve^{1/2} T + \sqrt{T})$. As the frequency of interactions increases ($\ve\downarrow 0$) the approximation error vanishes, showing that $\tilde\Oc(\sqrt{T})$ is attainable in near-continuous time.

\section{Setting}\label{sec: prelim} 

%\subsection{}

We consider an agent interacting with its environment to maximise a long-term average reward. At each interaction, it observes the current state of the system $x\in\RR^d$, takes action $a\in\Ab\subset\RR^{d_\Ab}$, and receives reward $r(x,a)$, for $r:\RR^d\x\Ab\to\RR$. The system then transitions to the state $x'$ according to 
\[x'=x+\mu_{\theta^*}(x,a)+\Sigma\xi \quad \mbox{ with } \quad \xi\sim \Nc(0,\idmat_d),\]
$\Sigma\in\RR^{d \x d}$, and in which $\mu_{\theta^*}:\RR^d\x\Ab\to\RR^d$ is the deterministic motion of the system\footnote{While the additive noise structure is a design choice that simplifies the analysis, the choice of parameterising the drift as $x + \mu_{\theta^*}(x,a)$ instead of $\mu_{\theta^*}(x,a)$ does not affect its generality and is made only for convenience.}. Contrasting with the standard setting, we consider here the interactions to occur in a random fashion, which we model by an independent Poisson process of intensity $\ve^{-1}$. As such, $\ve$ parameterises the mean wait time between events and gives us a direct control on the frequency of interactions.

\mypar{State dynamics} Let $\Omega:=\Db$ be the space of {\itshape c\`adl\`ag} functions from $[0,+\infty)$ to $\RR^d$, and let $\PP$ be a probability measure on $\Omega$. We formalise the interaction time and the noise process as a marked $\PP$-compound Poisson process $(N_t)_{t\in\RR_+}$ of intensity $\ve^{-1}\ge 1$. We denote by $(\tau_n)_{n\in\NN}$ its arrival (interaction) times, with $\tau_0:=0$, and by $(\xi_{n})_{n\in\NN}$ its marks, which are independent of everything else and drawn i.i.d. according to the centred standard Gaussian measure $\nu$ on $\RR^d$.
We encode the information available at time $t\in\RR_+$ in the $\sigma$-algebra $\Fc_t:= \sigma((\tau_n,\xi_n)_{\tau_n\le t})$ and with the filtration $\Fb$ defined as the completion of $(\Fc_t)_{t\in\RR_+}$.
Let $\Ac$ be the set of $\Fb$-adapted $\Ab$-valued processes, referred to as \textit{controls}. For any initial state $x_0\in\RR^d$ and $\alpha\in\Ac$, we let $X^{\alpha,\theta^*}$ denote the pathwise-unique solution of
\begin{align}
    \begin{cases}
        X_{\tau_{n}}^{\alpha,\theta^*}=X_{\tau_{n-1}}^{\alpha,\theta^*}+\mu_{\theta^*}(X_{\tau_{n-1}}^{\alpha,\theta^*},\alpha_{\tau_{n-1}}) + \Sigma\xi_{n}\\
        X_{\tau_0}^{\alpha,\theta^*}=x_0
    \end{cases}\,.
    \label{eq: intro def process}
\end{align}
In \eqref{eq: intro def process}, we model the dynamic according to a jump process and $X^{\alpha,\theta^*}$ is then defined at any time $t\in\RR_+$ by considering that it is piece-wise constant on each interval $[\tau_{n-1},\tau_{n})$, $n\in\NN^*$. Although involved, this definition allows us to define the state process at any time and feature the interplay of the Poisson and wall-time clocks. 

\mypar{Reinforcement learning problem} In our model based paradigm, ignorance about the system is condensed to a single parameter set $\Theta\subset\RR^{d_\Theta}$ containing the unknown nominal parameter $\theta^*$. To single out the RL challenges, we further assume that $\theta^*$ only affects the drift assuming other quantities (i.e.\ $\Sigma$, $\ve$, and $r$) are known to the agent. 
For any $x_0\in\RR^d$, we evaluate the performance of any strategy $\alpha\in\Ac$ with the long-term average reward criterion defined by
\begin{align}
    \rho^\alpha_{\theta^*}(x_0):=\liminf_{T\to\infty}\frac1{T}\EE\left[\sum_{n=\deb}^{N_T} r(X_{\tau_n}^{\alpha,\theta^*},\alpha_{\tau_n})\right]\,.
    \label{eq: intro def rho}
\end{align}
The goal of the agent is to accumulate as much reward as possible, i.e. to compete with the best an omniscient agent can achieve: $\rho^*_{\theta^*}(x_0):= \sup_{\alpha\in\Ac}\rho^\alpha_{\theta^*}(x_0)$. We evaluate the quality of a learning algorithm generating $\alpha$ according to its regret.
\begin{definition}\label{def: regret}
    For any $T\in\RR_+$, $x_0\in\RR^d$, and $\alpha\in\Ac$, the regret of $\alpha$ is
    \begin{align}
    \Rc_T(\alpha):= T\rho^*_{\theta^*}(x_0) - \sum_{n=\deb}^{N_T}r(X_{\tau_n}^{\alpha,\theta^*},\alpha_{\tau_n})\,.\label{eq: def regret}
    \end{align}
\end{definition}
Noticing that $N_T$ is the number of events up to time $T$, the definitions of the optimal performance \eqref{eq: intro def rho} and the regret \eqref{eq: def regret} highlight the interplay between the wall-clock ($T$) and Poisson clock ($N_T$). The agent's realised trajectory uses the Poisson clock, which governs interactions, while the ideal performance is understood per unit of wall-clock time.

\subsection{Working Assumptions}\label{subsec:asmps}

Of particular interest in our approach is the high-frequency regime in which $\ve\downarrow0$. In this framework, many interactions occur per unit of time, each of which is of negligible impact both in terms of dynamics and reward. This regime can be encoded by introducing, for any parameter $\theta\in\Theta$, rescaled coefficients $(\bar\mu_{\theta},\bar\Sigma,\bar r)$ connected to the original parametrisation by 
\begin{align*}
    \mu_{\theta}=\ve\bar\mu_{\theta}\,,\quad\Sigma=\ve^{\frac12}\bar\Sigma\,,\quad \mbox{ and } r=\ve\bar r\,.
\end{align*}
In this rescaled parametrisation, $\bar\mu_\theta$, $\bar\Sigma$, and $\bar r$ are understood as independent of $\ve$. To improve legibility, we will use both representations $(\mu_\theta,\Sigma,r)$ and $(\bar\mu_\theta, \bar\Sigma,\bar r)$.
While the scaling of $\mu_\theta$ and $r$
in $\ve$  arises naturally, the one of $\Sigma$ is a design choice: we consider the covariance $\Sigma\Sigma^\top$ to be linear in $\ve$. Known as the diffusive regime, this preserves stochasticity\footnote{Another common, but more rigid, regime is to consider $\Sigma=\ve\bar\Sigma$, whose limit regime is deterministic and known as the fluid limit, see \cite{fernandez-tapia_optimal_2016}.} as $\ve\downarrow0$. 

We now impose regularity assumptions on the drift and reward signal, uniformly over the possible parametrisations and controls $(\alpha,\theta)\in\Ac\x\Theta$. We take $\snorm{\cdot}$ to be the Euclidian norm on $\RR^d$ and $\snorm{\cdot}_{\op}$ for the operator norm on $\RR^{d\x d}$ associated to $\snorm{\cdot}$.

\begin{restatable}{assumption}{AsmpBasics}\label{asmp: basics}
    The map $(\bar\mu,\bar r)$ is continuous, and there is $L_0>0$ such that for all $(\theta,a)\in\Theta\x\Ab$
    \begin{align*}
        L_0\!>\!&\sup_{x\in\RR^d}\frac{\norm{\bar\mu_\theta(x,a)}}{1+\norm{x}} +\sup_{\substack{x\neq x'}}
        \frac{\norm{\bar\mu_\theta(x,a)-\bar\mu_\theta(x'\!,a)}}{\norm{x-x'}}%\\
        +\sup_{x\in\RR^d} \norm{\bar r(x,a)} + \sup_{\substack{x\neq x'}}\frac{\norm{\bar r(x,a)-\bar r(x'\!,a)}}{\norm{x-x'}}.
    \end{align*}
     Furthermore, $L_0>\snorm{\bar\Sigma}_\op$ and $\bar\Sigma\bar\Sigma^\top\succeq \varsigma \idmat_d$ for some $\varsigma>0$, where $\succeq$ denotes the Loewner order.
\end{restatable}
\Cref{asmp: basics} mainly imposes regularity on both $\bar\mu_\theta$ and $\bar r$ through a Lipschitz condition. We also assume rewards to be bounded, which may be relaxed, but doing so is highly technical and involves trading off the growth of $r$ with the stability of the process (see \cref{asmp: both asmp joint}). Note that we do not assume boundedness of $\bar{\mu}_\theta$. Finally, we assume non-degeneracy of the noise by requiring $\bar\Sigma$ to be full rank.

We conclude with \cref{asmp: both asmp joint} to ensure the stability of the state process. Let $\RR_*^d:=\RR^d\setminus\{0\}$ and $\RR_+:=(0,+\infty)$. For $k\in\NN$, let $\Cc^k(\RR_*^d;\RR_+)$ denote the set of $k$-times continuously differentiable functions from $\RR^d_*$ to $\RR_+$. Let $\nabla$ and $\nabla^2$ denote the gradient and Hessian operator respectively.
\begin{restatable}{assumption}{AsmpJoint}\label{asmp: both asmp joint}
    There is $(\ell_\Vs,L_\Vs,\cf_\Vs,M_\Vs,M_\Vs')\in\RR_+^5$  and a Lyapunov function $\Vs\in\Cc^{2}(\RR^d_*;\RR_+)$ satisfying, for any $(x,x',a,\theta)\in\RR^d\x\RR^d\x\Ab\x\Theta$, $x\neq x'$, and $\ve \in (0,1)$:
\begin{align}
    &\text{({\rm{i}.})}\qquad\qquad\quad \ell_\Vs\norm{x-x'}\le \Vs(x-x')\le L_\Vs\norm{x-x'}\,,\notag\\
    &\text{({\rm{ii}.})}\quad\quad \sup_{x\in\RR^d_*}\norm{\nabla \Vs(x)}\le M_\Vs \mbox{ and } \sup_{x\in\RR^d_*}\norm{\nabla^2\Vs(x)}_\op\le M_\Vs'\,,\notag\\
    &\text{({\rm{iii}.})}\quad \Vs(x+ \ve\bar\mu(x,a)-x'-\ve\bar\mu(x',a))\le (1-\ve \cf_\Vs)\Vs(x-x')\,. \label{eq:lyapunov asmp joint on jump problem}
\end{align}
\end{restatable}
\cref{asmp: both asmp joint} is a Lyapunov-like condition through the function $\Vs$. The condition {({\rm{i}.})} requires that $\Vs$ behaves similarly to a norm, while {({\rm{ii}.})} asks that $\Vs$ be smoothly differentiable everywhere but at $0$ and {({\rm{iii}.})} imposes a contraction condition on the drifts.

\mypar{Connection to linear stability} Stability theory has been extensively studied the special case of linear dynamics. In this case, we recover \cref{asmp: both asmp joint} from the Continuous Algebraic Riccati Equation (CARE; see e.g. \cite[\S~4.4]{lancaster1995algebraic}). Considering linear dynamics $\bar\mu_\theta(x,a)= \bar A x+\bar B a$ (given matrices $(\bar A,\bar B)$ of appropriate dimensions), continuous stability is guaranteed when the eigenvalues of $\bar A$ have negative real-part or, equivalently, by 
 the existence of a positive semi-definite matrix $P$ solving the CARE $\bar A^\top P + P \bar A = -\idmat_d$. For this $P$, its associated norm $\Vs=\norm{\cdot}_P$ is the appropriate Lyapunov function for \cref{asmp: both asmp joint}. Indeed, conditions {({\rm{i}.})} and {({\rm{ii}.})} follow as $\Vs$ is a norm and, for $\ve \le 1/2\lambda_{{\rm max}}(P)$, we have
 \begin{align*}
     \Vs(x+ \ve\bar\mu(x,a)-x'-\ve\bar\mu(x',a))^2  &= (x - x')^{\top} ( P + \ve \bar A^{\top} P + \ve P \bar A + \ve^2 P)(x - x') \\
     &= (x - x')^{\top} ( P  -\ve\idmat_d + \ve^2 P)(x - x') \\
     &\leq (x - x')^{\top} (P -  \ve P / \lambda_{\max}(P) + \ve^2 P)(x - x') \\
     &\leq (1 - \ve /2\lambda_{\max}(P)) \Vs(x-x')^2\,.
 \end{align*}
 Taking the square-root and using $\sqrt{1 - \ve /2\lambda_{\max}(P)} \leq 1 - \ve /4\lambda_{\max}(P)$ leads to {({\rm{iii}.})} with $\cf_\Vs = 1/4 \lambda_{\max}(P)$.

\section{Main results}\label{sec: challenges}

Our main contribution is a demonstration of the OFU protocol in the near-continuous time continuous state-action RL problem. 
The ingredients of OFU are: learning from accumulated data to design confidence sets; lazy updates to trade off policy revision and learning guarantees; and planning amongst plausible parameterisations. We summarise this protocol in \cref{alg: RL 1}. 

\begin{algorithm}[htpb]
    \caption{\texttt{OFU-Diffusion} } 
    \label{alg: RL 1}
 \begin{algorithmic}
    \STATE {\bfseries Input:} confidence level $\delta$, initial state $x_0$, initial control $\varpi_0$
    \FOR{$n\in\NN^*$}
    \STATE At time $\tau_n$, receive $r(X_{\tau_{{n-1}}}^{\varpi,\theta^*},\varpi_{\tau_{n-1}})$ and $X_{\tau_{n}}^{\varpi,\theta^*}$.
    \IF{$n$ satisfies \eqref{eq: def lazsy update}}
        \STATE $n_{k}\gets n$, $k\gets k+1$,
        \STATE Compute $\hat\theta_{n_k}$ using \eqref{eq: def NLLS main text} and $\Cc_{n_k}(\delta/3)$ with \eqref{eq: def conf sets}.
        \STATE $\thetaopt_k\gets \argmax_{\theta\in\Cc_{n_k}(\delta/3)}\bar\rho^*_\theta$
        \STATE $\policy_k\gets \bar\pi_{\tilde\theta_k}^*$ using \eqref{eq: argsup diffusive}
    \ENDIF
    \STATE Play $\varpi_{\tau_n}:=\policy_k(X_{\tau_n}^{\varpi,\theta^*})$.
    \ENDFOR
 \end{algorithmic}
 \end{algorithm}

\mypar{Learning} Our algorithm proceeds by episodes, indexed by $k\in\NN$ with $n_k$ denoting the start of the $k\textsuperscript{th}$ episode. At each $n_k$,  \cref{alg: RL 1} revises its knowledge using the Non-Linear Least-Square fit and the associated confidence set $\Cc_{n_k}(\delta)$, defined (for $\beta_n(\delta)$ given in \eqref{eq: def beta_n main text} for all $n\in\NN$) by 
\begin{align}
    \hat\theta_{n_k}&\in\argmin_{\theta\in\Theta}\sum_{n=0}^{n_k-1}\norm{X_{\tau_{n+1}}^{\varpi,\theta^*}-X_{\tau_{n}}^{\varpi,\theta^*}-\mu_\theta(X_{\tau_{n}}^{\varpi,\theta^*},\varpi_{\tau_n})}^2 \label{eq: def NLLS main text}\,,\\
    \Cc_{n_k}(\delta)&:= \left\{\theta\in\parset: \sqrt{\sum_{n=0}^{n_k-1} \norm{\mu_\theta(X_{\tau_n}^{\varpi,\theta^*},\varpi_{\tau_n})-\mu_{\hat\theta_{n_k}}(X_{\tau_{n}}^{\varpi,\theta^*},\varpi_{\tau_{n}})}^2} \le \beta_{n_k}(\delta)\right\}\,.\label{eq: def conf sets}
\end{align}

\mypar{Lazy Updates} Our episodic scheme follows the same rationale as in \cite{jaksch_near-optimal_2010,pmlr-v19-abbasi-yadkori11a}, and triggers updates as soon as enough information is collected. Formally, it constructs a sequence of episodes $\{S_k\}_{k\in\NN}$ whose starting times are defined by $n_0:=0$ and, for any $k\in\NN$, $n_{k+1}$ is the first time $n>n_k$ satisfying \eqref{eq: def lazsy update}
\begin{align}
    \sqrt{\sup_{\theta\in\Cc_{n_k}(\delta)}\sum_{i=0}^{n}\norm{\mu_\theta(X_{\tau_i}^{\varpi,\theta^*},\varpi_{\tau_i}) - \mu_{\hat\theta_{n_k}}(X_{\tau_i}^{\varpi,\theta^*},\varpi_{\tau_i})}^2}>2\beta_n(\delta)\,.\label{eq: def lazsy update}
\end{align}

\mypar{Planning} At the heart of our proposal is the way in which we address the optimistic planning, detailed in \cref{subsec:results planning}. For a given parameter $\theta\in\Cc_{n_k}(\delta)$, we leverage the connection between our setting and its continuous-time counterpart. We consider  continuous-time controls $\bar\alpha\in\bar\Ac$ with diffusive average reward given by
\begin{align}
    \bar\rho^{\bar\alpha}_\theta(x_0):=\liminf_{T\to+\infty}\frac1T\EE\left[\int_0^T \bar r(\bar X^{\bar\alpha,\theta}_t,\bar\alpha_t)\de t\right]\,\mbox{ in which }\begin{cases}
        \de \bar X^{\bar\alpha,\theta}_t = \bar\mu_{\theta}(\bar X^{\bar\alpha,\theta}_t,\bar\alpha_t)\de t + \bar\Sigma\de W_t\\
        \bar X_0^{\bar\alpha,\theta}=x_0
    \end{cases}\!\!\!\!\label{eq: rho and diff process}
\end{align}
in which $W$ denotes a $\PP$-Brownian motion, $\bar\Fb$ its filtration, and $\bar\Ac$ the set $\Ab$-valued $\bar\Fb$-predictable processes.
This diffusive problem gives us an optimality criterion and associated optimal control\footnote{We will use the obvious notational confusion between the policy $\bar\pi^*_\theta$ and the control process it generates.}:
\begin{align}\bar\rho^*_{\theta}(x_0):=\sup_{\alpha\in\Ac}\bar\rho^{\bar\alpha}_\theta(x_0)\mbox{ and } \bar\pi^*_{\theta}\circ \bar X^{\bar\alpha,\theta} \in\argmax_{\bar\alpha\in\bar\Ac}\bar\rho^{\bar\alpha}_\theta(x_0)\label{eq: argsup diffusive}\end{align}
which approximates the original jump-process problem $\rho^*_\theta(x_0)$. This problem admits a Hamilton-Jacobi-Bellman (HJB) equation (given in \eqref{eq: prelim HJB diff} below) characterising an optimal policy $\bar\pi_\theta^*:\RR^d\to\Ab$ which yields a computable optimal Markov control for \eqref{eq: argsup diffusive}. % for .
\vfill

\begin{restatable}{theorem}{RegretAlgOne}\label{thm: regret alg 1}
    Under \cref{asmp: basics,asmp: both asmp joint}, for any $\delta\in(0,1)$, $x_0\in\RR^d$, and $\gamma\in(0,1)$, there is a pair $(C_\gamma,C)\in\RR_+^2$ of constants independent of $\ve$ such that \cref{alg: RL 1} achieves 
    \begin{align}
        R_T(\varpi) \le 2C_\gamma\ve^{\frac{\gamma}{2}}T + C\sqrt{\mathrm{d}_{E,T\ve^{-1}}\log(\Ns_{T\ve^{-1}}^\ve) T\log(T\delta^{-1})}
    \end{align}
    with probability at least $1-\delta$, in which $\mathrm{d}_{E,T\ve^{-1}}$ is the $2\ve /\sqrt{T}$-eluder dimension (see \cite[Def.~4.]{russo2013eluder} and \eqref{eq: def eluder dim in appendix} in \cref{app:widths}) of the class $\{\mu_\theta\}_{\theta\in\Theta}$ restricted to a ball of radius $\Oc(\sqrt{\log(T/\ve)})$, and $\log(\Ns_{T\ve^{-1}}^\ve)$ is the $\ve^{2}\snorm{\bar\Sigma}_\op^2/T$-log-covering number of this same restricted class.
\end{restatable}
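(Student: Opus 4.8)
The plan is to carry out the standard three-part OFU regret decomposition — planning approximation, confidence-set validity, and accumulated on-policy estimation error — adapted to the near-continuous, unbounded-state regime, with every scaling tracked in $\ve$. I would work throughout on a \emph{good event} $\Gc$, of probability at least $1-\delta$, on which simultaneously: the Poisson count satisfies $N_T \in [c_1 T\ve^{-1}, c_2 T\ve^{-1}]$ (a Chernoff bound, since $N_T$ has mean $T\ve^{-1}$); the true parameter $\theta^*$ lies in every confidence set $\Cc_{n_k}(\delta/3)$; and the visited states remain in a ball of radius $O(\sqrt{\log(T/\ve)})$. The confinement comes from \Cref{asmp: both asmp joint}: conditions (i) and (iii) give a geometric drift condition on $\Vs$ evaluated at the centred state, which with the Gaussian kicks $\ve^{1/2}\bar\Sigma\xi_n$ yields sub-exponential stationary tails and hence the logarithmic radius after a union bound over the $\asymp T\ve^{-1}$ steps. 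This confinement is what lets the Lipschitz and boundedness constants of \Cref{asmp: basics} be used, and justifies restricting the eluder dimension and covering number to that ball.

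I would then obtain confidence sets and optimism. Adapting the NLLS self-normalised concentration of \cite{russo2013eluder} to the (growing-ball–restricted) class $\{\mu_\theta\}$ gives $\theta^* \in \Cc_{n_k}(\delta/3)$ for all $k$ with radius $\beta_n(\delta)^2 = O\!\big(\ve\snorm{\bar\Sigma}_\op^2[\log\Ns^\ve_n + \log(n/\delta)]\big)$; the per-step noise $\Sigma\xi = \ve^{1/2}\bar\Sigma\xi$ fixes the variance proxy $\ve\snorm{\bar\Sigma}_\op^2$ and the resolution $\ve^2\snorm{\bar\Sigma}_\op^2/T$ of the covering number. By construction $\tilde\theta_k$ maximises $\bar\rho^*_\theta$ over $\Cc_{n_k}(\delta/3)$, so on $\Gc$ we obtain optimism $\bar\rho^*_{\tilde\theta_k} \ge \bar\rho^*_{\theta^*}$.

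The core is a Bellman/Itô decomposition using the ergodic HJB solution $\bar h_{\tilde\theta_k}$ of \eqref{eq: prelim HJB diff} as comparator. Applying a second-order Taylor expansion of $\bar h_{\tilde\theta_k}$ to one jump $X_{\tau_{n+1}} = X_{\tau_n} + \ve\bar\mu_{\theta^*} + \ve^{1/2}\bar\Sigma\xi_{n+1}$ and taking conditional expectation reproduces, up to an $O(\ve^{3/2})$ remainder, the drift-plus-Laplacian of the HJB; subtracting the HJB identity for $\tilde\theta_k$ leaves the realised reward equal to $\ve\bar\rho^*_{\tilde\theta_k}$ minus a telescoping increment of $\bar h_{\tilde\theta_k}$, a martingale increment, and the first-order mismatch $\ve(\bar\mu_{\theta^*}-\bar\mu_{\tilde\theta_k})\cdot\nabla\bar h_{\tilde\theta_k}$. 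Summing over the horizon and using $N_T\ve \asymp T$, the $\bar\rho^*_{\tilde\theta_k}$ terms become $T\bar\rho^*_{\tilde\theta_k}$; optimism together with the diffusive-approximation bound of \cite{abeille_diffusive_2022} (namely $|\rho^*_{\theta^*}-\bar\rho^*_{\theta^*}| = O(\ve^{\gamma/2})$ per unit time and $\ve^{\gamma/2}$-optimality of the executed diffusive policy) turns $T\rho^*_{\theta^*}$ into $T\bar\rho^*_{\tilde\theta_k}$ up to $C_\gamma\ve^{\gamma/2}T$; the $O(\ve^{3/2})$ Itô remainders sum to $O(N_T\ve^{3/2}) = O(\ve^{1/2}T)$, also absorbed into the approximation term; the telescoping and martingale parts are $O(\mathrm{polylog}(T))$ and $O(\sqrt{T})$ by Azuma, the former controlled because the lazy rule \eqref{eq: def lazsy update} bounds the number of episodes by $O(\mathrm{d}_{E,N_T}\log N_T)$. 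The surviving dominant term is $\sum_n \ve(\bar\mu_{\theta^*}-\bar\mu_{\tilde\theta_k})\cdot\nabla\bar h_{\tilde\theta_k}$; bounding $\snorm{\nabla\bar h_{\tilde\theta_k}}$ uniformly and using $\bar\mu = \ve^{-1}\mu$ collapses it to $O\!\big(\sum_n\snorm{\mu_{\theta^*}-\mu_{\tilde\theta_k}}\big)$, the two $\ve$ factors cancelling. Finally, the eluder argument of \cite{russo2013eluder}, valid because $\tilde\theta_k,\theta^*\in\Cc_{n_k}$ and widths do not double within an episode, gives $\sum_{n=1}^{N_T}\snorm{\mu_{\theta^*}-\mu_{\tilde\theta_k}} = O\!\big(\sqrt{\mathrm{d}_{E,N_T}\,\beta_{N_T}^2\,N_T}\big)$; substituting $\beta_{N_T}^2 = O(\ve(\log\Ns + \log(T/\delta)))$ and $N_T \asymp T\ve^{-1}$ again cancels the $\ve$'s and, after collecting logarithmic factors, yields the claimed $C\sqrt{\mathrm{d}_{E,T\ve^{-1}}\log(\Ns^\ve_{T\ve^{-1}})\,T\log(T\delta^{-1})}$.

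The main obstacle I anticipate is establishing the $\ve$- and $\theta$-uniform regularity of the HJB value functions $\bar h_{\tilde\theta_k}$ — a bounded gradient, and a Hessian bound for the $O(\ve^{3/2})$ remainder, valid on the confinement ball — \emph{simultaneously} with the state confinement of the first step. \Cref{asmp: both asmp joint} is the linchpin: its contraction must be strong enough to deliver both the $\Vs$-based confinement and the Lipschitz-in-state regularity of the relative value function, with constants independent of $\ve$ and of the optimistic parameter, or else the delicate $\ve$-cancellations above would be spoiled. Keeping the Itô remainder and the clock-conversion fluctuations genuinely lower order — so that the statistical $\sqrt{T}$ term and the approximation $\ve^{\gamma/2}T$ term separate cleanly — is the heart of the bookkeeping.
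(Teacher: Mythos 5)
Your proposal follows essentially the same route as the paper: the same good event (Poisson-clock concentration, Lyapunov-based state confinement of radius $\Oc(\sqrt{\log(T/\ve)})$, validity of the NLLS confidence sets), the same optimistic comparison through the diffusive optimal average reward, the same one-step expansion of the diffusive relative value function along the realised jump trajectory, the same eluder-dimension conversion of confidence-set widths into on-policy prediction error, and the same martingale plus lazy-update bookkeeping. The only structural difference is cosmetic: you re-derive the HJB-type identity by an inline Taylor expansion, whereas the paper invokes it pre-packaged as \cref{prop: approx diff limit} (equation \eqref{eq: approx diff limit evolution}), whose proof is exactly that expansion.

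One quantitative point needs correcting. The per-step Taylor remainder cannot be taken to be $\Oc(\ve^{3/2})$: that would require a Lipschitz-Hessian (third-derivative) estimate on $\bar W^*_\theta$, uniform in $\theta$ and $\ve$, which is precisely what is \emph{not} available. The regularity inherited from \cite{abeille_diffusive_2022} is H\"older-type, yielding a remainder of order $\ve^{1+\gamma/2}(1+\norm{x}^3)$ for $\gamma<1$, with a constant $C_\gamma$ that blows up as $\gamma\uparrow 1$ --- this is exactly why the theorem is stated with the approximation term $\ve^{\gamma/2}T$, $\gamma<1$, rather than $\ve^{1/2}T$. You flag this yourself as the main anticipated obstacle, and since the remainder is absorbed into the approximation term either way, the final bound is unaffected; but the correct fix is to use the H\"older bound (the paper's $e_\theta$ with $\abs{e_\theta(x)}\le C_\gamma'\ve^{1+\frac\gamma2}(1+\norm{x}^3)$, combined with the confinement $\norm{x}\le H_\delta(N_T)$) rather than to attempt to establish the $\ve^{3/2}$ rate.
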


\Cref{thm: regret alg 1} contains two terms of different nature. The linear term is inherited from the diffusive approximation planning method and scales with $C_\gamma\ve^{\gamma/2}$. The dependency of the constant in $\gamma$ is inherited from the analysis of \cite{abeille_diffusive_2022} and $C_\gamma<+\infty$ holds for $\gamma<1$. Quantifying the behaviour of $C_\gamma$ as $\gamma\uparrow 1$ is technically intricate. Nevertheless, our bound indicates that the long run approximation error vanishes as $\ve\downarrow0$ almost as a fast as $\sqrt{\ve}$. The second term quantifies all other sources of error, and exhibits
the expected scaling in the complexity measures of \cite{russo2013eluder}, in terms of both eluder dimension and log-covering numbers, as well as the $\sqrt{T}$ horizon dependency. %Relative to \cite{russo2013eluder}, our bounds suffer from an extra $\sqrt{d}$ factor that arises from the stability analysis. 

\section{Ideas of the Proof}\label{sec: proofs and results}

\subsection{Stability}

Working with unbounded processes and generic drift requires us to prevent state blow-up, which could degrade regret regardless of learning. In \cref{prop: bounded state jump} we combine the Lyapunov stability of \eqref{eq:lyapunov asmp joint on jump problem} with concentration arguments to show that unstable trajectories can only happen with low probability. A detailed proof is given in \cref{app:stability}.
\begin{restatable}{proposition}{BoundedStateJump}\label{prop: bounded state jump}
        Under \cref{asmp: both asmp joint,asmp: basics}, there is a function $H_\delta(n)=\Oc\big(\sqrt{\log(n\delta^{-1}})\big)$ such that for any $\delta\in(0,1)$, $\alpha\in\Ac$, $x_0\in\RR^d$, and $\theta\in\parset$ we have
        \begin{align}
            \PP\left(\sup_{t\in\RR_+}\frac{\snorm{X_t^{\alpha,\theta}}}{H_\delta(N_t)}\ge 1 \right)\le \delta\,.\label{eq: sup bound process}
        \end{align}
\end{restatable}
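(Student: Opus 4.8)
The plan is to track the Lyapunov potential $V_n:=\Vs(X^{\alpha,\theta}_{\tau_n})$ along the jump chain and show it obeys a contraction-plus-noise recursion whose stationary fluctuations are of constant order; a concentration argument on this (essentially sub-Gaussian) recursion then produces a uniform $\Oc(\sqrt{\log(n\delta^{-1})})$ envelope. Since $X^{\alpha,\theta}_t$ is piecewise constant with $X^{\alpha,\theta}_t=X^{\alpha,\theta}_{\tau_{N_t}}$, controlling $\snorm{X^{\alpha,\theta}_{\tau_n}}$ uniformly over the discrete index $n$ automatically controls $\sup_t \snorm{X^{\alpha,\theta}_t}/H_\delta(N_t)$, so the Poisson clock itself plays no role here and the recursion can be studied purely in the jump index.

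\textbf{One-step drift estimate.} Write $y_n:=X^{\alpha,\theta}_{\tau_{n-1}}+\ve\bar\mu_\theta(X^{\alpha,\theta}_{\tau_{n-1}},\alpha_{\tau_{n-1}})$, so that $X^{\alpha,\theta}_{\tau_n}=y_n+\ve^{1/2}\bar\Sigma\xi_n$. Applying the contraction (iii) of \cref{asmp: both asmp joint} with $x'=0$, and correcting the resulting shift by $\ve\bar\mu_\theta(0,a)$ via the Lipschitz bound $\snorm{\nabla\Vs}\le M_\Vs$ from (ii) together with $\snorm{\bar\mu_\theta(0,a)}\le L_0$ from \cref{asmp: basics}, gives $\Vs(y_n)\le (1-\ve\cf_\Vs)\Vs(X^{\alpha,\theta}_{\tau_{n-1}})+\ve M_\Vs L_0$. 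A second-order Taylor expansion of $\Vs$ at $y_n$, with remainder controlled by $\snorm{\nabla^2\Vs}_\op\le M_\Vs'$, then yields
\begin{align*}
V_n\le (1-\ve\cf_\Vs)V_{n-1}+\ve M_\Vs L_0+\ve^{1/2}\langle\nabla\Vs(y_n),\bar\Sigma\xi_n\rangle+\tfrac12\ve M_\Vs'\snorm{\bar\Sigma\xi_n}^2.
\end{align*}
Since $y_n$ is $\Fc_{\tau_{n-1}}$-measurable, the term $W_n:=\langle\nabla\Vs(y_n),\bar\Sigma\xi_n\rangle$ is a martingale difference, conditionally sub-Gaussian with proxy at most $\snorm{\bar\Sigma}_\op^2 M_\Vs^2$, whereas $\snorm{\bar\Sigma\xi_n}^2$ is sub-exponential with mean and proxy of order $\snorm{\bar\Sigma}_\op^2$.

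\textbf{Unrolling and concentration.} Unrolling the linear recursion and using the geometric bound $\sum_{j}(1-\ve\cf_\Vs)^{n-j}\le(\ve\cf_\Vs)^{-1}$, the deterministic contributions (from $V_0$ and from $\ve M_\Vs L_0$) are bounded by a constant independent of both $\ve$ and $n$. The decisive point is the diffusive scaling: the martingale part $\sum_{j\le n}(1-\ve\cf_\Vs)^{n-j}\ve^{1/2}W_j$ has variance proxy
\begin{align*}
\ve\,\snorm{\bar\Sigma}_\op^2 M_\Vs^2\sum_{j\le n}(1-\ve\cf_\Vs)^{2(n-j)}\le\frac{\ve\,\snorm{\bar\Sigma}_\op^2 M_\Vs^2}{1-(1-\ve\cf_\Vs)^2}\le\frac{\snorm{\bar\Sigma}_\op^2 M_\Vs^2}{\cf_\Vs},
\end{align*}
which is of constant order because the factor $\ve^{1/2}$ from the noise exactly compensates the $\Oc(\ve)$ mixing rate. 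Bernstein's inequality shows the centred sub-exponential contribution is only $\Oc(\sqrt{\ve\log(n\delta^{-1})})$, hence negligible. A sub-Gaussian tail bound then yields, for each $n$, constants $c_1,c_2$ independent of $\ve$ with $\PP(V_n\ge c_1+c_2\sqrt{\log(n\delta_n^{-1})})\le\delta_n$. Choosing $\delta_n$ summable with $\sum_n\delta_n=\delta$ (for instance $\delta_n\propto\delta/n^2$) and using $\snorm{x}\le\Vs(x)/\ell_\Vs$ from (i), a union bound over $n\in\NN$ furnishes $H_\delta(n)=\Oc(\sqrt{\log(n\delta^{-1})})$ with $\PP(\exists\,n:\snorm{X^{\alpha,\theta}_{\tau_n}}\ge H_\delta(n))\le\delta$, which is exactly \eqref{eq: sup bound process}.

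\textbf{Main difficulty.} Two points demand care. First, $\Vs$ is only $\Cc^2$ on $\RR^d_*$, so the Taylor step is not licensed when $y_n$ or the interpolating segment passes through the origin; as this regime concerns vanishingly small states where no blow-up can occur, it can be isolated and handled separately, but it must be argued rather than silently dropped. Second, and conceptually central, is confirming that the $\ve$-dependence genuinely cancels so that $H_\delta$ is uniform in $\ve$: this is precisely where the diffusive covariance scaling $\Sigma=\ve^{1/2}\bar\Sigma$ is indispensable, as it is the only scaling that balances the $\ve\cf_\Vs$ contraction to keep the stationary fluctuations at constant order rather than blowing up or degenerating as $\ve\downarrow0$.
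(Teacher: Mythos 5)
Your proposal is correct in outline but takes a genuinely different route from the paper. The paper never writes a pathwise recursion: it bounds the conditional moment generating function $\EE[e^{\lambda\Vs(X^{\alpha,\theta}_{\tau_n})}]$ directly (using that Lipschitz functions of Gaussians are sub-Gaussian), feeds in a drift estimate that holds \emph{in conditional expectation}, and concludes by a Chernoff bound plus a union bound over $n$. Crucially, because $\Vs$ is not differentiable at the origin, the paper's in-expectation drift estimate is only proved for states outside a small ball $\Bc_2(\ve^{1/2}\snorm{\bar\Sigma}_\op R_\ve)$ (after a symmetrisation trick and a truncation of the noise at radius $R_\ve$), and the entire MGF bound is organised as a partition over the last time the chain visited this ball, with a backward induction restarting from the bounded value at that visit. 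Your route --- Taylor-expand pathwise at $y_n$, unroll the resulting affine recursion, and concentrate the martingale part and the quadratic part separately --- is more elementary where it applies, and your accounting of the $\ve$-cancellation (variance proxy $\ve\sum_j(1-\ve\cf_\Vs)^{2(n-j)}\lesssim 1/\cf_\Vs$) is exactly right and matches the constant-order fluctuations the paper obtains.

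The soft spot is the one you flag, and it is larger than your remark suggests. The validity of the unrolled recursion requires the segment $[y_j,\,y_j+\Sigma\xi_j]$ to avoid the origin at \emph{every} step $j\le n$; this cannot be dismissed as a regime of "vanishingly small states", because the chain is positive recurrent and visits neighbourhoods of the origin persistently, so the recursion must genuinely be restarted there. What saves you when $d\ge2$ is nondegeneracy ($\bar\Sigma\bar\Sigma^\top\succeq\varsigma\idmat_d$): conditionally on $\Fc_{\tau_{j-1}}$, a crossing forces $\Sigma\xi_j$ to lie on a ray, a Lebesgue-null set, so almost surely no crossing ever occurs and your proof is complete (and arguably simpler than the paper's). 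For $d=1$, crossings occur with positive probability, and "handling them separately" means restarting the recursion at the last crossing time --- using that on a crossing event $\snorm{X^{\alpha,\theta}_{\tau_j}}\le\snorm{\Sigma\xi_j}$, so the restart value is small --- and taking a union bound over the restart index, while being careful that the event "last crossing at $i$" constrains the future noises (one must bound the crossing-restricted path by the unconstrained unrolled sum before concentrating). That partition-over-the-last-restart is precisely the paper's partition over the last visit to the small ball: it is the heart of their argument, not a footnote, and making your proof dimension-uniform would lead you to re-derive it.
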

Working on the high-probability event of \cref{prop: bounded state jump} allows us to handle the unbounded state in the learning, planning, and optimism.

\subsection{Learning}\label{subsec: results learning}

\mypar{Confidence Sets} The crux of our analysis is incorporating \cref{prop: bounded state jump} into the NLLS method of \cite{russo2013eluder} by refining it to be adaptive to the norm of the state process. For $R>0$, let $\Bc_2(R)\subset\RR^d$ denotes the Euclidean ball of radius $R$ at $0$. To adapt the log-covering number, we can work with $H_\delta$ by formally defining $\Ns_n^\ve$ as the size of the smallest cover $\Cs_n^\ve$ of $\Fs_\parset:=(\mu_\theta)_{\theta\in\Theta}$ such that
\begin{align}
    \sup_{\mu_1\in\Fs_\Theta}\min_{\mu_2\in\Cs_n^\ve} \sup_{x\in\Bc_2(H_\delta(n))}\norm{\mu_1(x)-\mu_2(x)}\le \frac{\ve\snorm{\bar\Sigma}_\op^2}n\,.\label{eq: def cover RvR intro}
\end{align}
Restricting the domain of $\Fs_{\Theta}$ allows us to handle the richness of unbounded models and states while following \cite{russo2013eluder} to define confidence sets. Let $\delta\in(0,1)$, set $\beta_0:=\ve^{\frac12}$, and let 
\begin{align}
    \!\!\!\!\beta_n(\delta)&:= \beta_0\vee 2\ve^{\frac12}\snorm{\bar\Sigma}_\op\left(\sqrt{1 + 2\left(\sqrt{2\log\left(\frac{4\pi^2n^3}{3\delta}\right)} \!+\! \sqrt{2\ve^{\frac12}\snorm{\bar\Sigma}_\op^{-1}\kappa_n(\delta) }\right)} +\sqrt{\kappa_n(\delta)} \right)\!\!\!\label{eq: def beta_n main text}
\end{align}
in which
    \[\kappa_n(\delta):= \log\left(\frac{2\pi^2n^2\ve\Ns_{n}^\ve}{3\delta}\left(\snorm{\bar\Sigma}_\op^2 + 8L_0^2(1+H_\delta(n))\right)\right)\,.\]
Using this choice $(\beta_n)_{n\in\NN}$ and replacing $n_k$ by $n$ in \eqref{eq: def conf sets} formally defines the confidence sets $(\Cc_n(\delta))_{n\in\NN}$. For any $\alpha\in\Ac$, the probability that the state process $X_t^{\alpha,\theta^*}$ outgrows $H_\delta(N_t)$ is small and, thus, this confidence set will hold with high probability as shown by \cref{prop: conf sets OvR}.
\begin{restatable}[{Adapted from \citep[Prop.~5]{osband2014model}}]{proposition}{ConfSetsVanRoy}\label{prop: conf sets OvR}
    Under \cref{asmp: basics,asmp: both asmp joint}, for any $x_0\in\RR^d$, and $\delta>0$,
    \begin{align}
        \PP\left(\left\{\theta^*\in\bigcap_{n=\deb}^\infty \Cc_{n}(\delta)\right\}\cap\left\{\sup_{n\in\NN^*}\frac{\norm{X_{\tau_n}^{\varpi,\theta^*}}}{H_\delta(n)}\le1\right\}\right)\ge 1-\delta\,, \label{eq: theta star in all confidence sets}
    \end{align}
\end{restatable}
Well-posed confidence sets are insufficient for low-regret approaches in the OFU paradigm. This high confidence (low fit error) of the NLLS estimator must be translated as low online prediction error. 

\mypar{Prediction error} To adapt the $\epsilon$-eluder dimension (defined for $\epsilon>0$ in \cite[Def.~3.]{osband2014model}), which we denote $\dime$, to our unbounded state we proceed on the trajectory. 
The relevant extension for us is given for $n\in\NN^*$ by the $2\sqrt{\ve/n}$-eluder dimension of the class $\{f\vert_B\}_{f\in\Fs_\Theta}$ of elements of $\Fs_\Theta$ restricted to the set $B_{n}:=\Bc_2(\sup_{t\le \tau_n}\snorm{X_t^{\varpi,\theta^*}})$, denoted by $\mathrm{d}_{\mathrm{E},n}:=\dime(\{f\vert_{B_{n}}\}_{f\in\Fs_\Theta},2\sqrt{\ve/n})$. In \cref{cor: conf set width no lazy updates} we obtain first and second order prediction error bounds from this eluder dimension. In \cref{cor: conf set width no lazy updates} the order notation $\tilde\Oc$ hides terms that are poly-logarithmic in $N_t$ and $\mathrm{d}_{\mathrm{E},N_t}$ whose the full details are given in \cref{app:widths}.
\begin{restatable}{proposition}{ConfSetWidthNoLazyUpdateMAINTEXT}\label{cor: conf set width no lazy updates}
    Under \cref{asmp: basics,asmp: both asmp joint}, for any $\delta\in(0,1)$, $\alpha\in\Ac$, $x_0\in\RR^d$, and $t\in\RR_+$, we have with probability at least $1-\delta$
    \begin{align}
        \sum_{n=\deb}^{N_t}\norm{\mu_{\hat\theta_n}(X_{\tau_n}^{\alpha,\theta^*},\alpha_{\tau_n}) - \mu_{\theta^*}(X_{\tau_n}^{\alpha,\theta^*},\alpha_{\tau_n})  } &\le \tilde\Oc\left(\sqrt{\ve\mathrm{d}_{\mathrm{E},{N_t}}\log(\Ns_{N_{t}}^\ve)N_{t}} +\mathrm{d}_{\mathrm{E},{N_t}}\right)\,,\label{eq: main text conf set width order 1}
        \intertext{and}
        \sum_{n=\deb}^{N_t} \norm{\mu_{\hat\theta_n}(X_{\tau_n}^{\alpha,\theta^*},\alpha_{\tau_n}) - \mu_{\theta^*}(X_{\tau_n}^{\alpha,\theta^*},\alpha_{\tau_n})    }^2&\le \tilde\Oc\left(\mathrm{d}_{\mathrm{E},{N_t}}\log(\Ns_{N_{t}}^\ve)\right)\,.\label{eq: main text conf set width order 2}
    \end{align}
\end{restatable}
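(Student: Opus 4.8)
The plan is to establish the two prediction-error bounds of \cref{cor: conf set width no lazy updates} by invoking the eluder-dimension machinery of \cite{russo2013eluder,osband2014model}, suitably restricted to the high-probability event on which the state stays inside the ball $B_{N_t}$ of radius $\sup_{s\le\tau_n}\snorm{X_s^{\alpha,\theta^*}}$. First I would work on the intersection of the events from \cref{prop: bounded state jump} and \cref{prop: conf sets OvR}, each holding up to probability $\delta/2$ (or with appropriately split confidence budget), so that simultaneously $\theta^*\in\Cc_n(\delta)$ for all $n$ and $\snorm{X_{\tau_n}^{\alpha,\theta^*}}\le H_\delta(n)$. On this event the functions in play are effectively restricted to the bounded domain $B_n$, which is what legitimises working with the restricted class $\{f\vert_{B_n}\}_{f\in\Fs_\Theta}$ and its $2\sqrt{\ve/n}$-eluder dimension $\mathrm{d}_{\mathrm{E},n}$.

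The core step is the standard potential/eluder argument. Since both $\hat\theta_n$ and $\theta^*$ lie in $\Cc_n(\delta)$, the triangle inequality over the confidence-set constraint \eqref{eq: def conf sets} bounds the cumulative past squared prediction discrepancy $\sum_{i<n}\norm{\mu_{\hat\theta_n}(X_{\tau_i},\alpha_{\tau_i})-\mu_{\theta^*}(X_{\tau_i},\alpha_{\tau_i})}^2$ by $4\beta_n(\delta)^2$. The key combinatorial lemma (the eluder-dimension width bound of \cite[Prop.~3]{russo2013eluder}, adapted in \cite{osband2014model}) then controls how often the instantaneous width can be large relative to the accumulated squared width: the number of indices $n$ at which $\norm{\mu_{\hat\theta_n}(X_{\tau_n},\alpha_{\tau_n})-\mu_{\theta^*}(X_{\tau_n},\alpha_{\tau_n})}$ exceeds a threshold $\epsilon$ is at most $(4\beta^2/\epsilon^2 + 1)\mathrm{d}_{\mathrm{E},n}$. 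Summing these widths, a Cauchy--Schwarz step converts the per-step widths into the order-one bound \eqref{eq: main text conf set width order 1}, with the $\sqrt{\ve\,\mathrm{d}_{\mathrm{E},N_t}\log(\Ns^\ve_{N_t})N_t}$ term coming from $\beta_n^2=\tilde\Oc(\ve\log\Ns^\ve_n)$ and the additive $\mathrm{d}_{\mathrm{E},N_t}$ from the small-width regime. The order-two bound \eqref{eq: main text conf set width order 2} follows similarly but more directly, since $\beta_n^2$ already has the $\ve$ factor that cancels against the $2\sqrt{\ve/n}$ accuracy in the eluder count, yielding $\tilde\Oc(\mathrm{d}_{\mathrm{E},N_t}\log\Ns^\ve_{N_t})$ without the extra $\sqrt{N_t}$.

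The main obstacle, and the place where genuine work beyond \cite{russo2013eluder} is required, is the $\ve$- and $n$-dependence of the discretisation. The eluder dimension, the covering number, and the confidence radius $\beta_n$ all inherit scalings in $\ve$ from the diffusive rescaling $\mu_\theta=\ve\bar\mu_\theta$, $\Sigma=\ve^{1/2}\bar\Sigma$, so I must track these factors carefully to see that the accuracy level $2\sqrt{\ve/n}$ and the covering tolerance $\ve\snorm{\bar\Sigma}_\op^2/n$ in \eqref{eq: def cover RvR intro} are the consistent choices that make $\beta_n^2$ comparable to the squared accuracy times the eluder count. A second subtlety is that the restriction radius $\sup_{s\le\tau_n}\snorm{X_s^{\alpha,\theta^*}}$ is itself random and grows like $H_\delta(n)=\Oc(\sqrt{\log(n\delta^{-1})})$; I would absorb this slowly-growing radius into the poly-logarithmic $\tilde\Oc$ factors, using monotonicity of the eluder dimension and covering number in the restriction radius so that evaluating them at $B_{N_t}$ gives a valid uniform-in-$n$ bound. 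Finally, the union bound over the failure events from \cref{prop: bounded state jump} and \cref{prop: conf sets OvR} must be folded into a single $1-\delta$ guarantee, which just requires rescaling the confidence parameters.
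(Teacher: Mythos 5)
Your overall architecture matches the paper's: you work on the event of \cref{prop: conf sets OvR} (which in the paper already contains both the confidence-set validity and the state bound, so a separate union with \cref{prop: bounded state jump} is not even needed), you restrict the model class to the random ball $B_{N_t}$ so that the restricted eluder dimension $\mathrm{d}_{\mathrm{E},N_t}$ and the covering number $\Ns^\ve_{N_t}$ are finite, and you invoke the width-count lemma (the paper's \cref{lemma: eluder dimension russo}, adapted from \cite[Prop.~3]{russo2013eluder}) for the sets $\Cc_n(\delta)$, noting that $\hat\theta_n,\theta^*\in\Cc_n(\delta)$. This is exactly the route the paper takes through \cref{prop: conf set width generic} and \cref{prop: conf set width no lazy updates}.

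The gap is in the step that converts the count bound into the first-order bound \eqref{eq: main text conf set width order 1}: a Cauchy--Schwarz step does not deliver the claimed $\ve$-scaling. Concretely, any second-order bound obtained from the count lemma necessarily carries an additive term of order $\mathrm{d}_{\mathrm{E},N_t}$ with no factor of $\ve$ (it comes from the ``$+1$'' in the count, i.e.\ from the at most $\mathrm{d}_{\mathrm{E},N_t}$ largest widths; see the term $2\mathrm{d}_{\mathrm{E},N_t}(1+2\beta_{N_t}^2\mathrm{d}_{\mathrm{E},N_t})(1+\sup_{s\le t}\snorm{X_s^{\alpha,\theta^*}}^2)$ in \eqref{eq: conf set width order 2}). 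Writing $\sum_n w_n\le\sqrt{N_t\sum_n w_n^2}$ then produces a term of order $\sqrt{\mathrm{d}_{\mathrm{E},N_t}N_t}\approx\sqrt{\mathrm{d}_{\mathrm{E},N_t}T/\ve}$, which is not $\tilde\Oc\big(\sqrt{\ve\,\mathrm{d}_{\mathrm{E},N_t}\log(\Ns^\ve_{N_t})N_t}+\mathrm{d}_{\mathrm{E},N_t}\big)$ and blows up as $\ve\downarrow0$; fed into the regret decomposition it would destroy the $\sqrt T$ rate that is the point of \cref{thm: regret alg 1}. Variants (splitting at a threshold and applying Cauchy--Schwarz only to the large widths, or bounding each large width by the maximal width and optimising the threshold) fare no better: they yield terms like $N_t^{2/3}(\ve\,\mathrm{d}_{\mathrm{E},N_t}\log\Ns^\ve_{N_t})^{1/3}$, again losing powers of $\ve$. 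The correct conversion is the layer-cake/sorted-widths argument (the paper's \cref{lemma: widths functional inequality}): integrating the count over thresholds shows the width of rank $i>\mathrm{d}_{\mathrm{E},N_t}$ is $\lesssim\beta_{N_t}\sqrt{\mathrm{d}_{\mathrm{E},N_t}/(i-\mathrm{d}_{\mathrm{E},N_t})}$, so the sum is $\Oc(\beta_{N_t}\sqrt{\mathrm{d}_{\mathrm{E},N_t}N_t})$ plus $\mathrm{d}_{\mathrm{E},N_t}$ times the maximal width; this is where $\beta_{N_t}=\tilde\Oc(\sqrt{\ve\log\Ns^\ve_{N_t}})$ enters \emph{multiplicatively} against $\sqrt{N_t}$ and gives \eqref{eq: main text conf set width order 1}. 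Relatedly, your attribution of the additive $\mathrm{d}_{\mathrm{E},N_t}$ to the ``small-width regime'' is backwards: the small widths contribute the $N_t\epsilon$ piece, which is absorbed into the $\sqrt{\cdot}$ term at the optimal threshold, while the additive $\mathrm{d}_{\mathrm{E},N_t}$ comes from the top-$\mathrm{d}_{\mathrm{E},N_t}$ widths.
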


\mypar{Lazy updates} We leverage the second order bound \eqref{eq: main text conf set width order 2} of \cref{cor: conf set width no lazy updates} to define our lazy-update scheme \eqref{eq: def lazsy update}. We show in \cref{app: regret bounds} that this scheme does not degrade the speed at which \cref{alg: RL 1} learns by more than a constant factor, while also ensuring that the policy is only updated logarithmically in the number of interactions up to any horizon. 

\subsection{Planning}\label{subsec:results planning}

\Cref{alg: RL 1} requires us to be able to plan using any $\theta\in\Theta$, and as such we will extend the definitions of $X^{\alpha,\theta}$, $\rho^{\alpha}_{\theta}(x_0)$, $\rho^*_\theta(x_0)$ to any $(\alpha,\theta)\in\Ac\x\Theta$ by replacing $\theta^*$ by $\theta$ in \eqref{eq: intro def process} and \eqref{eq: intro def rho}. Let $\As$ be the set of measurable maps from $\RR^d$ to $\Ab$.
For a given $\theta\in\Theta$, the well-posedness of the control problem $\rho^*_{\theta}(x_0)$ and its resolution are non-trivial.
\begin{restatable}[{Adapted from \citep[Thm.~2.3, Rem.~2.4.]{abeille_diffusive_2022}}]{proposition}{PropertiesRhoJump}\label{prop: PropertiesRhoJump}
    Under \cref{asmp: basics,asmp: both asmp joint}, there is $L_W\in\RR_+$, independent of $\ve$, such that for any $\theta\in\parset$ 
    \begin{enumerate}
    \item[{(i.)}] The map $x\mapsto\rho^*_\theta(x)$ is constant, taking only one value which we denote by $\rho^*_\theta\in\RR$; 
    \item[({ii.})] There is an $L_W$-Lipschitz function $W^*_\theta$ such that
        \begin{align}
            \ve \rho^*_\theta = \max_{a\in\actionset}\left\{  \EE[ W_\theta^*(x + \mu_{\theta}(x,a) + \Sigma\xi)] -W^*_\theta(x) + r(x,a) \right\}\;\forall x\in \RR^d\,;\label{eq: HJB jump}
        \end{align}
    \item[(iii.)] There is $\pi^*_\theta\in\As$, such that for all $x\in\RR^d$, $\pi_\theta^*(x)$ maximises the right hand side in~\eqref{eq: HJB jump}, and $\pi^*_\theta\circ X^{\pi^*_\theta,\theta}$ is an optimal Markov control, i.e. $\rho^{\pi^*_\theta}_\theta(\cdot)\equiv\rho^*_\theta$.
    \end{enumerate}
\end{restatable}
\cref{prop: PropertiesRhoJump}.({i.}) shows that the control problem $\rho^*_\theta$ is independent of the initial conditions and meaningfully ergodic, which follows from stability analysis of the process using \eqref{eq:lyapunov asmp joint on jump problem}. Points ({ii.}) and ({iii.}) show that there is an optimal policy, which can be computed by solving the HJB equation \eqref{eq: HJB jump}. As before, confusing policies in $\As$ and controls in $\Ac$, we will write $\rho^\pi_\theta$ and $X^{\pi,\theta}$ to simplify notation. Unfortunately \eqref{eq: HJB jump} is an integral equation with low regularity, owing to the non-local jumps of the system, which complicates its analysis and the construction of numerical solvers.

\mypar{Diffusion limit} In the limit regime of interest, i.e.\ as $\ve\downarrow0$, the non-local behaviour of \eqref{eq: HJB jump} vanishes and it becomes a diffusive HJB equation. The associated diffusive control problem $\bar\rho^*_\theta(x_0)$ has been extensively studied, see e.g. \citep{arisawa1998ergodic,arapostathis2012ergodic}. 
\begin{restatable}[{Adapted from \citep[Thm.~3.4.]{abeille_diffusive_2022}}]{proposition}{PropertiesRhoDiff}\label{prop: PropertiesRhoDiff}
    Under \cref{asmp: basics,asmp: both asmp joint}, for any $\theta\in\parset$,
    \begin{enumerate}
        \item[({i.})] The map $x\mapsto\bar\rho^*_\theta(x)$ is constant, taking only one value which we denote by $\bar\rho^*_\theta\in\RR$.  
        \item[({ii.})] There is an $L_W$-Lipschitz function $\bar W^*_\theta\in\Cc^2(\RR^d;\RR)$ such that
        \begin{align}
            \bar\rho^*_\theta&=\max_{a\in\actionset}\left\{\bar\mu_\theta(x, a)^\top\nabla \bar W_\theta^*(x)  + \bar r(x,a)\right\}+ \frac12\Trace[\bar\Sigma\bar\Sigma^\top\nabla^2\bar W_\theta^*(x)], \;\forall x\in\RR^d\,.\label{eq: prelim HJB diff}\end{align}
        \item[({iii.})]  There is $\bar\pi^*_\theta\in\As$ such that, for all $x\in\RR^d$,  $\bar\pi^*_\theta(x)$ maximises the right hand side in \eqref{eq: prelim HJB diff}, and $\bar\pi^*_\theta\circ \bar X^{\bar\pi^*_\theta,\theta}$ is an optimal Markov control, i.e. $\bar\rho^{\bar\pi^*_\theta}_\theta(\cdot) \equiv \bar\rho^*_\theta$.
    \end{enumerate}
\end{restatable}
\Cref{prop: PropertiesRhoDiff} ensures that the diffusive problem satisfies all the properties of \cref{prop: PropertiesRhoJump} (ergodicity, optimal policy, and HJB equation). However, the HJB \eqref{eq: prelim HJB diff} is now a second-order local PDE instead of a non-local integral equation. This local equation does not have cross-dependencies between points: the solution at $x$ depends only on its derivatives at $x$, which is fundamentally simpler than the non-local behaviour of \eqref{eq: HJB jump}. Moreover, this diffusive PDE belongs to a well-studied family, both from the points of view of theory \cite{GT, ladyzhenskaya1968linear} and of numerics \cite{knabner_numerical_2003,kushner_probability_1977}. These facts motivate the use of these tools to construct approximate planning methods for \eqref{eq: HJB jump} in the near-continuous time regime as $\ve\downarrow 0$.
\begin{restatable}[{Adapted from \citep[Thm.~3.6.]{abeille_diffusive_2022}}]{proposition}{PropRhoApprox}\label{prop: approx diff limit}
    Under \cref{asmp: both asmp joint,asmp: basics}, for any $\gamma\in(0,1)$, there is a constant $C_\gamma>0$, independent of $\ve$, such that, for any $\theta\in\parset$, 
    \begin{align}
        \abs{\bar\rho^*_\theta-\rho^*_\theta} \le C_\gamma \ve^{\frac\gamma2}\mbox{ and } \rho^*_\theta-\rho^{\bar\pi^*_\theta}_\theta(0) \le C_\gamma\ve^{\frac\gamma2}\,. \label{eq: approx diff limit for rho}
    \end{align}
    Moreover, there is a function $e_\theta:\RR^d\to\RR$ such that,
    \begin{align}
        \ve\rho^{\bar\pi^*_\theta}_\theta(0) &=\EE[ \bar W_\theta^*(x+ \mu_{\theta}(x,a) + \Sigma\xi)] -\bar W^*_\theta(x) +r(x,\bar\pi^*_\theta(x)) + e_\theta(x)\,,\,\forall x\in \RR^d \label{eq: approx diff limit evolution}
    \end{align}
    and there is $C_\gamma'>0$, independent of $\ve$, such that $\abs{e_\theta(x)}\le C_\gamma' \ve^{1+\frac\gamma2}(1+\norm{x}^3)$ for all $x\in\RR^d$.
\end{restatable}
\Cref{prop: approx diff limit}, combined with \eqref{eq: prelim HJB diff} provides a certifiable approximation for solving the control problem \eqref{eq: intro def rho} with off-the-shelf diffusive HJB solvers, at a cost independent of $\ve$. An example of this methodology is seen in \cite[\S~4]{abeille_diffusive_2022}, in which \cite[Fig.~1, p.~30]{abeille_diffusive_2022} shows the reduction in computational effort. \Cref{prop: approx diff limit} also provides in \eqref{eq: approx diff limit evolution} an HJB-like representation of the approximation, which provides a key with which to analyse the regret incurred when using this approximation. 

\subsection{Regret Decomposition}\label{subsec:regret}

To sketch the proof of \cref{thm: regret alg 1}, we work on the high-probability event of \cref{prop: conf sets OvR}, and omit martingale measurability issues this could cause. We will also ignore the randomness of jump times and consider $T \lesssim \ve N_T$, with $\lesssim$ denoting inequality up to a constant. \Cref{app: regret bounds} is dedicated to a complete proof.

\begin{proof}[Proof sketch of {\cref{thm: regret alg 1}}]  Let $k:\NN\to\NN$ map an event $n$ to the episode $k(n)$ to which it belongs and let $\theta_n:=\tilde\theta_{k(n)}$.
    We begin the regret decomposition by applying the HJB-like equation \eqref{eq: approx diff limit evolution} of \cref{prop: approx diff limit}.(iii.) to the rewards collected along the trajectory $r(X_{\tau_n}^{\varpi,\theta^*},\varpi_{\tau_n})$ in the definition of the regret. Conditioning as appropriate, this yields
    \begin{align}
    \Rc_T(\varpi) &= T\rho^*_{\theta^*} -\ve \sum_{n=\deb}^{N_T} \rho^{\bar\pi^*_{\theta_n}}_{\theta_n}(0)\tag{$R_1$}\label{eq: regret decomp R1}\\
    &+ \sum_{n=\deb}^{N_T} \EE[\bar W^{*}_{\theta_{n}}(\tilde X^{\varpi,\theta_n}_{\tau_{n+1}})\vert\Fc_{\tau_n}]-\bar W^{*}_{\theta_{n}}(X_{\tau_{n}}^{\varpi,\theta^*})\tag{$R_2$}\label{eq: regret decomp R2}\\
    & + \sum_{n=\deb}^{N_T} e_{\theta_n}(X_{\tau_n}^{\varpi,\theta^*})\tag{$R_3$} \label{eq: regret decomp sketch remainder hjb term}
    \end{align} 
    in which $\tilde X_{\tau_{n+1}}^{\varpi,\theta} := X_{\tau_n}^{\varpi,\theta^*}+\mu_\theta(X_{\tau_n}^{\varpi,\theta^*},\varpi_{\tau_n})+\Sigma\xi_{n+1}$, for $(n,\theta)\in\NN\x\Theta$, is a counterfactual one-step transition assuming parameter $\theta\in\Theta$. 
    
    On the event of \cref{prop: conf sets OvR}, $\theta^*$ is in $\cap_{n\in\NN}\Cc_{n}(\delta)$ and the optimism of \cref{alg: RL 1} ensures that $\bar\rho^*_{\theta^*}\le \bar\rho^*_{\theta_n}=\bar\rho^{\bar\pi^*_{\theta_n}}_{\theta_n}$ for all $n\in\NN$. Combining this with \cref{prop: approx diff limit}, show that \eqref{eq: regret decomp R1} decomposes into 
    \begin{align*}
        R_1&\lesssim \ve\left(\sum_{n=\deb}^{N_T}\left(\rho^*_{\theta^*}-\bar\rho^*_{\theta^*}\right) + \sum_{n=\deb}^{N_T} \left(\bar\rho_{\theta_n}^{*}-\rho_{\theta_n}^{\bar\pi^*_{\theta_n}}\right)\right)\le 4N_T C_\gamma \ve^{1+\frac{\gamma}{2}}\,.%\label{eq:regret decomp R12}
    \end{align*}
    Also by \cref{prop: approx diff limit}, $R_3\le \ve^{1+\frac\gamma2}N_T(1+H_\delta(N_T)^3)$. Thus $R_1+R_3\lesssim C_\gamma \ve^{\frac\gamma2}T$.
    
 For \eqref{eq: regret decomp R2}, the identity
 \[\tilde X_{\tau_{n+1}}^{\varpi,\theta} = \tilde X_{\tau_{n+1}}^{\varpi,\theta^*}-\mu_{\theta^*}(X_{\tau_n}^{\varpi,\theta^*},\varpi_{\tau_n})+\mu_\theta(X_{\tau_n}^{\varpi,\theta^*},\varpi_{\tau_n}) \]
 combined with the Lipschitzness of $\bar W^*_\theta$ from \cref{prop: PropertiesRhoDiff}, yields
    \begin{align}
        R_2&\le L_{\bar W} \sum_{n=\deb}^{N_T} \norm{\mu_{\theta_n}(X_{\tau_n}^{\varpi,\theta^*},\varpi_{\tau_n})-\mu_{\theta^*}(X_{\tau_n}^{\varpi,\theta^*},\varpi_{\tau_n})}\tag{$R_4$}\label{eq: pf sketch, learning}\\
        &+\quad \sum_{n=\deb}^{N_T} \EE[\bar W_{\theta_n}^{*}( X_{\tau_{n+1}}^{\varpi,\theta^*}) - \bar W_{\theta_{n+1}}^{*}( X_{\tau_{n+1}}^{\varpi,\theta^*})\vert \Fc_{\tau_n}] \tag{$R_5$}\label{eq: pf sketch, lazy updates}\\
        &+\quad \sum_{n=\deb}^{N_T} \EE[\bar W_{\theta_{n+1}}^{*}( X_{\tau_{n+1}}^{\varpi,\theta^*})\vert\Fc_{\tau_n}]- \bar W_{\theta_n}^{*}(X_{\tau_n}^{\varpi,\theta^*})\tag{$R_6$}\label{eq: pf sketch, martingale}\,,
    \end{align}
    by adding and subtracting $\EE[\bar W^*_{\theta_{n+1}}(\tilde X^{\varpi,\theta^*}_{\tau_{n+1}})\vert \Fc_{\tau_n}]=\EE[\bar W^*_{\theta_{n+1}}(X^{\varpi,\theta^*}_{\tau_{n+1}})\vert \Fc_{\tau_n}]$. \eqref{eq: pf sketch, martingale} is a martingale term, which we can bound using concentration theory. Our lazy update-scheme ensures that $\theta_n\neq\theta_{n+1}$ only $\Oc(\log(N_T))$ times by time $T$, keeping 
    \eqref{eq: pf sketch, lazy updates} small. 
    
    It remains to show that the lazy update-scheme, does not degrade the learning of \eqref{eq: pf sketch, learning}, which is controlled by improvements to \cref{cor: conf set width no lazy updates} in \cref{app: learning} which yield
    \[ \sum_{n=\deb}^{N_T} \sup_{(\theta_1,\theta_2)\in\Cc_{k(n)}(\delta)^2} \norm{\mu_{\theta_1}(X_{\tau_{n}}^{\varpi,\theta^*},\varpi_{\tau_{n}})-\mu_{\theta_2}(X_{\tau_{n}}^{\varpi,\theta^*},\varpi_{\tau_{n}})} \lesssim \tilde\Oc(\sqrt{\dee(T\ve^{-1})\log(\Ns_{T\ve^{-1}}^\ve)T})\,.\]
\end{proof}

\section{Conclusion}

In this work we proposed a general framework for the Reinforcement Learning problem of controlling an unknown dynamical system, on a continuous state-action space, to maximise the long-term average reward along a single trajectory. In particular, we focused on the understudied high-frequency systems driven by many small movements. Modelling such systems as controlled jump processes, we provided an optimistic algorithm which leverages Non-Linear Least Squares for learning and the diffusive limit regime for approximate planning. This proof of concept calls for several further refinements to be implementable in practice.

\mypar{Optimism} The optimistic step of \cref{alg: RL 1} chooses $\tilde\theta_n$ in an inefficient manner. Like in UCRL2 \cite{jaksch_near-optimal_2010}, optimistic exploration can be performed at the same time as planning by solving an expanded HJB equation, i.e. \eqref{eq: prelim HJB diff} with the maximum now taken over $(a,\theta)\in{\Ab\x\Theta}$. Since our assumptions are uniform in $\theta$, this is possible up to a modified regret decomposition, as in \cite{jaksch_near-optimal_2010}.

\mypar{Lazy updates} The way we quantify learning progress to design the lazy update-scheme \eqref{eq: def lazsy update} remains fundamentally discrete. Computationally cheaper lazy update-schemes might be obtained through simpler heuristics. For instance, the scaling of the drift with $\ve$ suggests it could be possible to update periodically, directly in terms of the wall-clock time $T$.

\mypar{Case-by-case} As a proof of concept, we endeavoured to study the RL problem in high generality. However, practical applications must use all available model information to refine the method ad-hoc. This is true for the learning method (replace NLLS with a fit specialised to the model at hand and bound the eluder dimension and log-covering numbers), and for numerical schemes on the PDE \eqref{eq: prelim HJB diff} which are built on a case-by-case basis for $d>1$, see \cite{kushner_numerical_2001}.

\newpage
\bibliography{biblio.bib}

%%%%%%%%%%%%%%%%%%%%%%%%%%%%%%%%%%%%%%%%%%%%%%%%%%%%%%%%%%%%%%%%%%%%%%%%%%%%%%%
%%%%%%%%%%%%%%%%%%%%%%%%%%%%%%%%%%%%%%%%%%%%%%%%%%%%%%%%%%%%%%%%%%%%%%%%%%%%%%%
% APPENDIX
%%%%%%%%%%%%%%%%%%%%%%%%%%%%%%%%%%%%%%%%%%%%%%%%%%%%%%%%%%%%%%%%%%%%%%%%%%%%%%%
%%%%%%%%%%%%%%%%%%%%%%%%%%%%%%%%%%%%%%%%%%%%%%%%%%%%%%%%%%%%%%%%%%%%%%%%%%%%%%%

\newpage
\appendix
\part*{Appendices}

\section{Preliminaries}\label{app: notation}

\subsection{Organisation of Appendices}

We prove the results one by one, starting with stability, then learning, planning, and finally concluding with the regret proof of \cref{thm: regret alg 1}.

In \cref{app:stability}, we go over the probabilistic properties of our problem and show several bounds on the stability of the process, in the sense of high-probability and moment boundedness. In particular the main objective of this appendix is to prove \cref{prop: bounded state jump}.% as well as \cref{lemma: moment boundedness condition from paper}.

In \cref{app: learning}, we show a generalisation of the existing theory of learning with NLLS to the case of unbounded functions on unbounded domains. The key results are \cref{prop: conf sets OvR,cor: conf set width no lazy updates}

In \cref{app:ctrl}, we provide a characterisation of the control part of the RL problem we analyse, including the diffusion limit approximation, namely \cref{prop: PropertiesRhoJump,prop: PropertiesRhoDiff,prop: approx diff limit}.

In \cref{app: regret bounds}, we perform regret analysis and collect the last few results used to prove the regret bound of \cref{thm: regret alg 1}. This includes treatement of the lazy update-scheme. 

The remainder of \cref{app: notation} is devoted to notations and short-hands used throughout, but each appendix is meant to be as notationally stand-alone as possible.

\subsection{General notation} 

The set of natural numbers including $0$ is denoted $\NN$, while $\NN^*:=\NN\setminus\{0\}$ denotes the set of (strictly) positive integers. For $n\in\NN^*$, we use $[n]$ to denote the set of positive integers up to and including $n$, i.e. $[n]:=\{1,\dots,n\}$. Let $\RR$ denote the set of real numbers and define $\RR_+:=(0,+\infty)$ and $\RR_*^d:=\RR^d\setminus\{0\}$. The space of sequences taking values in $S$ will be denoted by $S^{\NN}$. For $S\subset\RR^d$, we also denote the complement of $S$ by $S^{\rm c}:=\RR^d\setminus S$, we use the same notation for the complement of a probability event.

We denote by $\langle\cdot\vert\cdot\rangle$ the inner product on $\RR^d$, by $\snorm{\cdot}$ the Euclidean norm on $\RR^d$, and by $\snorm{\cdot}_\op$ the associated operator norm on $\RR^{d\x d}$. 
 For $R\in\RR_+$ and $x\in\RR^d$, we denote the Euclidean ball of radius $R$ centred at $x$ by $\Bc_2(x,R)$, and when $x=0$ we use the shorthand $\Bc_2(R)$ for $\Bc_2(0,R)$.

For $d\ge1$, $\Dc\subset\RR^d$ and $\Dc'\subset\RR$, we denote the space of continuous functions from $\Dc$ to $\Dc'$ by $\Cc^0(\Dc;\Dc')$. For any $k\in\NN^*$, we denote $\Cc^k(\Dc;\Dc')$ the subset of $\Cc^0(\Dc;\Dc')$ containing all functions which are continuously differentiable up to order $k$. 

%Processes, SDE solutions, sample spaces etc. including $\tilde X$, $\Fc_{\tau_n}$ sigma algebras. 
\subsection{Problem dependent notation} 

 The space of {\itshape càdlàg} (rcll) functions from $[0,+\infty)$ to $\RR^d$, for $d\in\NN^*$, is denoted $\Db$ and $\PP$ is a probability measure on $\Omega:=\Db$. $(N_t)_{t\in\RR_+}$ denotes a marked $\PP$-compound Poisson process of intensity $\ve^{-1}>1$, $(\tau_n)_{n\in\NN}$ denotes the sequence of its arrival times, with $\tau_0:=0$, and $(\xi_{n})_{n\in\NN}$ denotes the sequence of its marks. Namely, the sequences $(\tau_n)_{n\in\NN}$ and $(\xi_{n})_{n\in\NN}$ are independent, $(\tau_{n+1}-\tau_n)_{n\in\NN}$ is i.i.d. with exponential distribution of parameter $\ve$ and $(\xi_{n})_{n\in\NN}$ is i.i.d. with standard Gaussian measure on $\RR^d$, which we denoted by $\nu$.  

 For $t\in[0,+\infty)$, $\Fc_t:=\sigma((\tau_n,\xi_n)_{\tau_n\le t})$ and the filtration $\Fb$ is the completion of $(\Fc_t)_{t\in\RR_+}$. The set of $\Fb$-adapted $\Ab$-valued processes, which we consider as admissible controls, is denoted $\Ac$. For any $(x_0,\alpha,\theta)\in\RR^d\x\Ac\x\Theta$, $X^{\alpha,\theta}$ is the solution of  
\begin{align}
    \begin{cases}
        X_{\tau_{n}}^{\alpha,\theta}=X_{\tau_{n-1}}^{\alpha,\theta}+\mu_{\theta}(X_{\tau_{n-1}}^{\alpha,\theta},\alpha_{\tau_{n-1}}) + \Sigma\xi_{n}\\
        X_{\tau_0}^{\alpha,\theta}=x_0
    \end{cases}\,.
    \label{eq: not def process}
\end{align}
When specifying the dependence on the initial condition $x_0\in\Rb^d$ is necessary, we write $X^{x_0,\alpha,\theta}$.
This process is defined for any $t\in[0,+\infty)$ by considering its trajectories as piece-wise constant on any interval of the form $[\tau_{n-1},\tau_n)$ for $n\in\NN^*$. For any $(x_0,\alpha,\theta)\in\RR^d\x\Ac\x\Theta$, the control problem is denoted by
\begin{align*}
    \rho^*_\theta(x_0) := \sup_{\alpha\in\Ac} \rho^\alpha_{\theta}(x_0) \mbox{ in which } \rho^\alpha_{\theta}(x_0):=\liminf_{T\to\infty}\frac1{T}\EE\left[\sum_{n=\deb}^{N_T} r(X_{\tau_n}^{x_0,\alpha,\theta},\alpha_{\tau_n})\right]\,.
    %\label{eq: def rho}
\end{align*}

We denote by $W$ a $\PP$-Wiener process (a.k.a Brownian motion), by $\bar\Fb$ the $\PP$-augmentation of the filtration it generates, and by $\bar\Ac$ the collection of $\Ab$-valued and $\bar\Fb$-predictable processes. For any $(x_0,\bar\alpha,\theta)\in\RR^d\x\bar\Ac\x\Theta$, we denote by $\bar X^{\bar\alpha,\theta}$ (or $\bar X^{x_0,\bar\alpha,\theta}$ if specifying the initial condition) the solution of 
\begin{align}
    \begin{cases}
        \de \bar X_{t}^{\bar\alpha,\theta}=\bar\mu_\theta(\bar X_{t}^{\bar\alpha,\theta},\bar\alpha_t)\de t + \bar\Sigma\de W_t\\
        \bar X_{0}^{\bar\alpha,\theta}=x_0
    \end{cases}\,.
    \label{eq: def diffusion}
\end{align}
The associated control problem is denoted by 
\begin{align*}
    \bar\rho^*_\theta(x_0) := \sup_{\bar\alpha\in\bar\Ac} \bar\rho^{\bar\alpha}_{\theta}(x_0) \mbox{ in which } \bar\rho^{\bar\alpha}_{\theta}(x_0):=\liminf_{T\to\infty}\frac1{T}\EE\left[\int_0^T r(\bar X_{t}^{x_0,\bar\alpha,\theta},\bar\alpha_{t})\de t\right]\,.
    %\label{eq: def rho diff}
\end{align*}
According to \cref{prop: PropertiesRhoDiff,prop: PropertiesRhoJump}, we defined the constants $\rho^*_\theta:=\rho^*_\theta(0)$ and $\bar\rho^*_\theta:=\bar\rho^*_\theta(0)$. For $\theta\in\Theta$, $\bar\pi^*_{\theta}$ denotes a policy in $\As$ ( the set of measurable maps from $\RR^d$ to $\Ab$) which maximises the right-hand side of the HJB equation \eqref{eq: HJB jump}  associated to $\bar\rho^*_\theta$ (see \cref{prop: PropertiesRhoDiff}). Throughout, we use the same notation for policies and the Markov controls they induce, provided there is no ambiguity.

We use $\varpi$ to denote the control process output of \cref{alg: RL 1} mathematically. For any $\omega\in\Omega$, the trajectory generated by \cref{alg: RL 1} is therefore defined as in \eqref{eq: not def process} by $X^{\varpi,\theta^*}_\cdot(\omega)$. By definition of \cref{alg: RL 1}, in its $k\textsuperscript{th}$ episode (i.e. for $t\in[\tau_{n_k},\tau_{n_k+1})$), $\varpi_t = \pi_k(X^{\varpi,\theta^*}_t)$, with $\pi_k:=\bar\pi_{\tilde\theta_k}^*$.

Throughout these appendices, we will use the shorthand $\pve_\theta(x,a):= x+\ve\bar\mu_\theta(x,a)$, for any $(x,a,\theta)\in\RR^d\x\Ab\x\theta$.

\newpage
\section{State Process Stability}\label{app:stability}

A key aspect of our setting is that both the state process $X^{\alpha,\theta}$, for any $(\alpha,\theta)\in\Ac\x\Theta$, and the drift $\mu$ itself are unbounded. This can lead to an exponential blow-up of the state process, which can be harmful to both the learning and control aspects. In order to avoid this difficulty we imposed \cref{asmp: both asmp joint}, which corresponds to a stochastic Lyapunov condition, and ensures that the state will not explode in expectation. We reinforce this result by leveraging concentration theory to obtain the high-probability bound of \cref{prop: bounded state jump}. \Cref{app: HP state bound} is dedicated to its proof, and it will be used in the proofs of learning results and high-probability regret bounds (\cref{app: learning,app: regret bounds}).

\BoundedStateJump*

Unlike learning and regret, the analysis of the control task is done in expectation via the HJB equation. Here the unbounded drift will materialise as higher moments of $X^{\alpha,\theta}$. The counterpart of \cref{prop: bounded state jump} in this case is a moment result, given by \cref{lemma: moment boundedness condition from paper}, which is proved in \cref{app: moment bounds state} and will then be used in \cref{app:ctrl}.

\begin{restatable*}{lemma}{BoundedMomentStateJump}\label{lemma: moment boundedness condition from paper}
    Under \cref{asmp: basics,asmp: both asmp joint}, for any $p\ge2$, there is a constant $\cf_p'>0$ independent of $\ve$ such that
    \[\EE\left[\snorm{X^{x_0,\alpha,\theta}_t}^p\right]\le \frac1{\ell_\Vs^p}\left(L_\Vs^p e^{-\frac{\cf_\Vs}4 t}\norm{x_0}^p + \frac{4\cf_p'}{\cf_\Vs}\left(1-e^{-\frac{\cf_\Vs}4 t}\right) \right)\,, \]
    for any $(x_0,\alpha,\theta)\in\RR^d\x\Ac\x\Theta$ and $t\in[0,+\infty)$.
\end{restatable*}

\subsection{Proof of \texorpdfstring{\cref{prop: bounded state jump}}{Proposition \ref{prop: bounded state jump}}}\label{app: HP state bound}

This appendix is dedicated to the proof of \cref{prop: bounded state jump} which is a high probability bound on the state process. This proof follows the Chernoff method. Thus, we will derive an exponential moment bound for the state process in \cref{lemma: MGF bound on process}. We will first obtain a stochastic stability condition in expectation in \cref{lemma: stoch lyap semi-contraction for x large}. In what follows, let $R_\ve:=\sqrt{8d\log(1/\ve)}$ and  $\xi\sim\nu$. 

\begin{lemma}\label{lemma: stoch lyap semi-contraction for x large}
    Under \cref{asmp: both asmp joint,asmp: basics}, 
    \begin{enumerate}
        \item[{\rm (i.)}] for any $(\eta,x,a,\theta)\in\RR^d\x\RR^d\x\Ab\x\Theta$, we have
        \begin{align}
             \Vs(\pve_\theta(x,a)-\sqrt\ve\eta)\le (1-\ve\cf_\Vs)\Vs(x-\sqrt\ve\eta) + \ve M_\Vs L_0 (1+\norm{\eta})\,;\label{eq: removing drift effect p=1} 
        \end{align}
        \item[{\rm (ii.)}] and, for any $(a,\theta)\in\Ab\x\Theta$, and any $x\not\in\Bc_2(\ve^{\frac12}\snorm{\bar\Sigma}_\op R_\ve)$ we have
        \[\EE[\Vs(\pve_\theta(x,a)+\Sigma\xi)]\le (1-\ve\cf_\Vs)\Vs(x) + \ve\cf_\Vs'\]
        in which $\cf_\Vs'$ is a constant independent of $\ve$.
    \end{enumerate}
\end{lemma}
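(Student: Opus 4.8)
The plan is to obtain (i.) directly from the contraction hypothesis \eqref{eq:lyapunov asmp joint on jump problem} together with the Lipschitz regularity of $\Vs$, and then to deduce (ii.) from (i.) by a truncated second-order Taylor expansion that is valid away from the origin. For (i.), I would instantiate the contraction of \eqref{eq:lyapunov asmp joint on jump problem} at the two points $x$ and $x'=\sqrt\ve\eta$, which gives $\Vs(\pve_\theta(x,a)-\pve_\theta(\sqrt\ve\eta,a))\le(1-\ve\cf_\Vs)\Vs(x-\sqrt\ve\eta)$. Since $\pve_\theta(\sqrt\ve\eta,a)=\sqrt\ve\eta+\ve\bar\mu_\theta(\sqrt\ve\eta,a)$, the argument on the left differs from the target $\pve_\theta(x,a)-\sqrt\ve\eta$ by exactly $\ve\bar\mu_\theta(\sqrt\ve\eta,a)$. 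Because $\nabla\Vs$ is bounded by $M_\Vs$ (so that $\Vs$, being continuous, comparable to a norm by condition (i.), and $\Cc^2$ off $0$, is globally $M_\Vs$-Lipschitz), transferring this perturbation costs at most $\ve M_\Vs\snorm{\bar\mu_\theta(\sqrt\ve\eta,a)}$, and the growth bound of \cref{asmp: basics} yields $\snorm{\bar\mu_\theta(\sqrt\ve\eta,a)}\le L_0(1+\sqrt\ve\snorm{\eta})\le L_0(1+\snorm{\eta})$ since $\ve<1$. This proves (i.), requiring no restriction on $x$.

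For (ii.), recall the scaling $\Sigma=\sqrt\ve\bar\Sigma$, so that applying (i.) pathwise with the random vector $\eta=-\bar\Sigma\xi$ (for which $-\sqrt\ve\eta=\Sigma\xi$) and taking expectations gives
\[\EE[\Vs(\pve_\theta(x,a)+\Sigma\xi)]\le(1-\ve\cf_\Vs)\,\EE[\Vs(x+\Sigma\xi)]+\ve M_\Vs L_0\big(1+\EE\snorm{\bar\Sigma\xi}\big),\]
where $\EE\snorm{\bar\Sigma\xi}\le\sqrt{\Tr(\bar\Sigma\bar\Sigma^\top)}$ is an $\ve$-independent constant. It then remains to prove the one-step noise estimate $\EE[\Vs(x+\Sigma\xi)]\le\Vs(x)+c_1\ve$ for an $\ve$-independent $c_1$; collecting the resulting $O(\ve)$ contributions delivers (ii.) with $\cf_\Vs'=c_1+M_\Vs L_0(1+\sqrt{\Tr(\bar\Sigma\bar\Sigma^\top)})$.

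To establish the noise estimate I would split the expectation over $\{\snorm{\xi}\le R_\ve\}$ and its complement. On $\{\snorm{\xi}\le R_\ve\}$, the hypothesis $x\notin\Bc_2(\sqrt\ve\snorm{\bar\Sigma}_\op R_\ve)$ forces $\snorm{\Sigma\xi}\le\sqrt\ve\snorm{\bar\Sigma}_\op R_\ve<\snorm{x}$, so the whole segment $[x,x+\Sigma\xi]$ lies in $\RR^d_*$ and a second-order Taylor expansion of $\Vs$ is licit there. The first-order term $\nabla\Vs(x)^\top\Sigma\xi$ has vanishing expectation over this event, because the truncation ball is symmetric and $\xi$ is a centred Gaussian, making the integrand odd; the quadratic remainder is controlled by $\snorm{\nabla^2\Vs}_\op\le M_\Vs'$ and bounded by $\tfrac12 M_\Vs'\EE\snorm{\Sigma\xi}^2=\tfrac12 M_\Vs'\ve\Tr(\bar\Sigma\bar\Sigma^\top)$. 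On the complementary event I would use the $M_\Vs$-Lipschitzness to bound the increment by $M_\Vs\sqrt\ve\snorm{\bar\Sigma}_\op\,\EE[\snorm{\xi}\1_{\snorm{\xi}>R_\ve}]$; the choice $R_\ve=\sqrt{8d\log(1/\ve)}$ is calibrated precisely so that this Gaussian (chi) tail makes $\EE[\snorm{\xi}\1_{\snorm{\xi}>R_\ve}]=O(\sqrt\ve)$, rendering the whole contribution $O(\ve)$ uniformly over $\ve\in(0,1)$.

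The crux of the argument, and the reason for both the exclusion $x\notin\Bc_2(\sqrt\ve\snorm{\bar\Sigma}_\op R_\ve)$ and the specific truncation radius $R_\ve$, is the failure of $\Vs$ to be $\Cc^2$ at the origin: the Taylor step is valid only while the noisy segment avoids $0$, and the event decomposition above is exactly what isolates that region, with the tail term tuned to stay at order $\ve$. Controlling this interplay between the second-order expansion and the singularity of $\Vs$ at $0$ is the main obstacle; everything else reduces to the Lipschitz and growth estimates of \cref{asmp: basics,asmp: both asmp joint}.
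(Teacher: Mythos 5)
Your proposal is correct and follows essentially the same route as the paper's proof: part (i.) via the contraction hypothesis at $x$ and $x'=\sqrt\ve\eta$ plus the $M_\Vs$-Lipschitz transfer of the $\ve\bar\mu_\theta(\sqrt\ve\eta,a)$ perturbation, and part (ii.) via the truncation at $R_\ve$, a second-order Taylor expansion off the origin with the first-order term killed by symmetry of the truncated Gaussian, and a Gaussian tail estimate on the complement. The only (immaterial) differences are that you substitute $\eta=-\bar\Sigma\xi$ pathwise where the paper invokes symmetry of the law of $\bar\Sigma\xi$, and your constants use $\Tr(\bar\Sigma\bar\Sigma^\top)$ in place of the paper's operator-norm bounds.
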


\begin{proof}\hfill
    \begin{enumerate}
        \item[{(\rm i.)}] By Lipschitzness of $\Vs$ and \eqref{eq:lyapunov asmp joint on jump problem}, for any $(\eta,x,a,\theta)\in\RR^d\x\RR^d\x\Ab\x\Theta$, we have
        \begin{align*}
            \Vs(\pve_\theta(x,a)-\sqrt\ve\eta)&= \Vs(\pve_\theta(x,a)-\pve_\theta(\sqrt\ve\eta,a)+\ve\bar\mu(\sqrt\ve\eta,a))\\
            &\le  \Vs(\pve_\theta(x,a)-\pve_\theta(\sqrt\ve\eta,a)) + M_\Vs\ve\norm{\bar\mu(\sqrt{\ve}\eta,a)}\\
            &\le (1-\ve\cf_\Vs)\Vs(x-\sqrt\ve\eta)+M_\Vs\ve\norm{\bar\mu(\sqrt{\ve}\eta,a)} \,,
        \end{align*}
        from which \eqref{eq: removing drift effect p=1} follows by using \cref{asmp: basics}, which implies $\norm{\bar\mu(\sqrt{\ve}\eta,a)}\le L_0 (1+\sqrt\ve\norm{\eta})\le L_0 (1+\norm{\eta})$ since $\ve\in(0,1)$.

    \item[{(\rm ii.)}] For any $x\in\RR^d$, by the symmetry of the law of $\bar\Sigma\xi$, by \eqref{eq: removing drift effect p=1} applied for $\eta=\bar\Sigma\xi$, and by taking the expectation, we have
    \begin{align}
        \EE[\Vs(\pve_\theta(x,a)+\Sigma\xi)]&=\EE[\Vs(\pve_\theta(x,a)-\sqrt\ve\bar\Sigma\xi)]\notag\\
        &\le (1-\ve\cf_\Vs)\EE[\Vs(x-\sqrt\ve\bar\Sigma\xi)] + \ve M_\Vs L_0 (1+\snorm{\bar\Sigma}_\op\EE[\norm{\xi}])\,.\label{eq:proof of stoch lyap of order 1 in jump problem removing mu}
    \end{align}
    Since $\xi$ is a standard Gaussian, $\norm{\xi}^2$ is a random variable following a $\chi^2$ distribution with $d$ degrees of freedom, thus $\EE[\norm{\xi}^2]=d$, and by Jensen's inequality $\EE[\norm{\xi}]\le \sqrt{d}$. Thus the second term is bounded by $\ve M_\Vs L_0 (1+\norm{\bar\Sigma}_\op\sqrt{d})$.

    We now focus on bounding $\EE[\Vs(x-\Sigma\xi)]$. We would like to use a Taylor expansion, but care needs to be taken to handle the non-differentiability of $\Vs$ at $0$. Under the expectation, we distinguish two events: the event on which $\norm{\xi}<R_\ve$, which supports the main mass of $\nu$, and the event on which $\norm{\xi}\ge R_\ve$, corresponding to the tails. 

    \begin{enumerate}
        \item For the first event we consider (on which $\norm{\xi}<R_\ve$), for any $x\not\in\Bc_2(\norm{\Sigma}_\op R_\ve)$, we must have $0\not\in\Bc_2(x,\norm{\Sigma\xi})$, and thus $0\not\in (x+\Delta\Sigma\xi)_{\Delta\in[0,1]}$. Since this line segment doesn't contain $0$ (the only point at which $\Vs$ is not continuously differentiable), we can perform a second-order Taylor expansion of $\Vs$ to obtain
        \begin{align*}
            \EE[\Vs(x+&\Sigma\xi)\1_{\{\norm{\xi}<R_\ve\}}]\\
            &\le \EE\left[\left(\Vs(x)+ \xi^\top\Sigma^\top \nabla\Vs(x)+ \frac12\Tr[\Sigma\xi\xi^\top\Sigma^\top \nabla^2\Vs(\hat x)]\right)\1_{\{\norm{\xi}<R_\ve\}}\right]
        \end{align*}
        for some $\hat x\in (x+\Delta\Sigma\xi)_{\Delta\in[0,1]}$. By the Cauchy-Schwartz inequality and the derivative bounds of \cref{asmp: both asmp joint}, we obtain
        \begin{align*}
            \EE[\Vs(x+\Sigma\xi)\1_{\{\norm{\xi}_2<R_\ve\}}]&\le \Vs(x)+ \EE[ \xi^\top\1_{\{\norm{\xi}<R_\ve\}}]\Sigma^\top \nabla\Vs(x) + \frac\ve2 M_\Vs'\snorm{\bar\Sigma}_\op^2\\
            &\le \Vs(x)+ \frac\ve2 M_\Vs'\snorm{\bar\Sigma}_\op^2\,,
        \end{align*}
        since $\EE[ \xi^\top\1_{\{\norm{\xi}<R_\ve\}}]=0$ by the rotational invariance property of a truncated Gaussian.
        \item On the second event (on which $\norm{\xi} \ge R_\ve$), we cannot use a Taylor expansion. Instead, we use the Lipschitzness of $\Vs$ followed by the Cauchy-Schwartz inequality, and then apply a sub-Gaussian concentration inequality (see e.g.~\cite[(3.5)]{ledoux1991probability}):
    \begin{align*}
        \EE[\Vs(x+\Sigma\xi)\1_{\{\norm{\xi}\ge R_\ve\}}]&\le \Vs(x) + M_\Vs\norm{\Sigma}_\op\EE[\norm{\xi}\1_{\{\norm{\xi}\ge R_\ve\}}]\\
        &\le \Vs(x) + M_\Vs\norm{\Sigma}_\op\sqrt{\EE[\snorm{\xi}^2]\PP(\norm{\xi}\ge R_\ve)}\\
        &\le  \Vs(x) + M_\Vs\norm{\Sigma}_\op\sqrt{4de^{-\frac{R_\ve^2}{8d}}}\\
        &\le \Vs(x) + 2\ve M_\Vs\snorm{\bar\Sigma}_\op\sqrt{d}\,.
    \end{align*}
    \end{enumerate}
    To complete the proof, we combine both cases in \eqref{eq:proof of stoch lyap of order 1 in jump problem removing mu}, and let
    \[\cf_\Vs':= M_\Vs L_0(1+\snorm{\bar\Sigma}_\op\sqrt{d}) + 2M_\Vs\snorm{\bar\Sigma}_\op\sqrt{d} + \frac{M_\Vs'}2\snorm{\bar\Sigma}^2_\op . \]
    \end{enumerate}%
    \end{proof}

    \begin{lemma}\label{lemma: MGF bound on process}
        Under \cref{asmp: both asmp joint,asmp: basics}, for any $(x_0,\alpha,\theta)\in\RR^d\x\Ac\x\Theta$ and any $\lambda\in\RR_+$, we have
        \[\EE[e^{\lambda\Vs(X_{\tau_n}^{x_0,\alpha,\theta})}]\le (n+1)\exp\left(\lambda\left(\frac{\cf'_\Vs}{\cf_\Vs}+ L_\Vs(\ve^{\frac12}\snorm{\bar\Sigma}_\op R_\ve+ \norm{x_0})\right)+\frac{\lambda^2M_\Vs^2\snorm{\bar\Sigma}^2_\op}{2\cf_\Vs}\right)\,,\]
        for any $n\in\NN$.
    \end{lemma}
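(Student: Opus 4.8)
The plan is to derive a single conditional one-step bound on the moment generating function of $V_n := \Vs(X_{\tau_n}^{x_0,\alpha,\theta})$ that is valid for every state, and then to iterate it. Recall from \eqref{eq: not def process} that $X_{\tau_n}^{x_0,\alpha,\theta} = \pve_\theta(X_{\tau_{n-1}}^{x_0,\alpha,\theta},\alpha_{\tau_{n-1}}) + \Sigma\xi_n$ with $\xi_n\sim\nu$ independent of $\Fc_{\tau_{n-1}}$. For fixed $(x,a)$ the map $\xi\mapsto\Vs(\pve_\theta(x,a)+\ve^{1/2}\bar\Sigma\xi)$ is $M_\Vs\ve^{1/2}\snorm{\bar\Sigma}_\op$-Lipschitz by \cref{asmp: both asmp joint}(ii.), so the Gaussian concentration inequality for Lipschitz functions yields the sub-Gaussian control
\begin{align*}
\EE\big[e^{\lambda\Vs(\pve_\theta(x,a)+\Sigma\xi)}\big]\le \exp\Big(\lambda\,\EE[\Vs(\pve_\theta(x,a)+\Sigma\xi)]+\tfrac12\lambda^2\ve M_\Vs^2\snorm{\bar\Sigma}_\op^2\Big),\quad\lambda\ge0\,.
\end{align*}
Outside the ball $\Bc_2(\ve^{1/2}\snorm{\bar\Sigma}_\op R_\ve)$, \cref{lemma: stoch lyap semi-contraction for x large}(ii.) bounds the mean by $(1-\ve\cf_\Vs)\Vs(x)+\ve\cf_\Vs'$, while inside the ball \cref{asmp: both asmp joint}(i.) gives $\Vs(x)\le L_\Vs\ve^{1/2}\snorm{\bar\Sigma}_\op R_\ve$ together with the crude Lipschitz estimate on the mean. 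Setting $A_\lambda := \exp(\lambda\ve\cf_\Vs'+\tfrac12\lambda^2\ve M_\Vs^2\snorm{\bar\Sigma}_\op^2)$, these two cases combine into the uniform bound $\EE[e^{\lambda V_n}\mid\Fc_{\tau_{n-1}}]\le A_\lambda\,e^{\lambda(1-\ve\cf_\Vs)V_{n-1}}+B_\lambda$, where $B_\lambda$ collects the state-independent in-ball contribution.

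Taking expectations turns this into the recursion $f_n(\lambda)\le A_\lambda\,f_{n-1}((1-\ve\cf_\Vs)\lambda)+B_\lambda$ for $f_n(\lambda):=\EE[e^{\lambda V_n}]$, in which the contraction factor $(1-\ve\cf_\Vs)$ crucially shrinks the argument of $f_{n-1}$. Unfolding the homogeneous part produces the products $\prod_{j\ge0}A_{(1-\ve\cf_\Vs)^j\lambda}$, whose exponent is the pair of convergent geometric series $\sum_{j\ge0}(1-\ve\cf_\Vs)^j\lambda\ve\cf_\Vs'=\lambda\cf_\Vs'/\cf_\Vs$ and $\sum_{j\ge0}(1-\ve\cf_\Vs)^{2j}\tfrac12\lambda^2\ve M_\Vs^2\snorm{\bar\Sigma}_\op^2\le \lambda^2 M_\Vs^2\snorm{\bar\Sigma}_\op^2/(2\cf_\Vs)$ (using $2-\ve\cf_\Vs\ge1$). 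These are exactly the two $\ve$-free terms of the claimed exponent, and the terminal factor $f_0((1-\ve\cf_\Vs)^n\lambda)=e^{(1-\ve\cf_\Vs)^n\lambda\Vs(x_0)}\le e^{\lambda L_\Vs\snorm{x_0}}$ supplies the $\snorm{x_0}$ term.

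Unfolding the inhomogeneous part of the recursion yields a sum of $n$ terms (one per application of $B_\lambda$) plus the terminal $f_0$ term, i.e. $n+1$ contributions in total; this is the origin of the $(n+1)$ prefactor. Each inhomogeneous term is multiplied by a partial product bounded by the same geometric series, and the in-ball factor $B_{(1-\ve\cf_\Vs)^i\lambda}$ carries the bound $L_\Vs\ve^{1/2}\snorm{\bar\Sigma}_\op R_\ve$ on $\Vs$ inside the ball in its exponent. Bounding every one of the $n+1$ contributions by the single target exponential and summing then gives the stated inequality.

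The main obstacle is the in-ball transition defining $B_\lambda$. Inside $\Bc_2(\ve^{1/2}\snorm{\bar\Sigma}_\op R_\ve)$ the symmetry-and-Taylor argument behind \cref{lemma: stoch lyap semi-contraction for x large}(ii.) is unavailable, since the segment $[x,x+\Sigma\xi]$ may cross the singularity of $\Vs$ at the origin; one must fall back on the Lipschitz estimate $\Vs(\pve_\theta(x,a)+\Sigma\xi)\le L_\Vs\ve^{1/2}\snorm{\bar\Sigma}_\op R_\ve+M_\Vs\snorm{\ve\bar\mu_\theta(x,a)+\Sigma\xi}$, which carries a first-order $O(\ve^{1/2})$ remainder absent outside the ball. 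Verifying that this remainder, the extra noise contribution of the restart step, and the geometric-series overshoot all fit within the $\ve$-free constants $\cf_\Vs'/\cf_\Vs$, $L_\Vs\ve^{1/2}\snorm{\bar\Sigma}_\op R_\ve$, and $M_\Vs^2\snorm{\bar\Sigma}_\op^2/(2\cf_\Vs)$ is the delicate bookkeeping step; it relies on $\ve\in(0,1)$, the explicit form of $\cf_\Vs'$ from \cref{lemma: stoch lyap semi-contraction for x large}, and the divergence $R_\ve\to\infty$ as $\ve\downarrow0$.
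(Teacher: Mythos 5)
Your proof is correct in substance and rests on the same ingredients as the paper's — per-step Gaussian concentration for the Lipschitz function $\Vs$ (producing the $\tfrac12\lambda^2\ve M_\Vs^2\snorm{\bar\Sigma}_\op^2$ factor), the out-of-ball contraction of \cref{lemma: stoch lyap semi-contraction for x large}, a crude Lipschitz bound inside the ball, and geometric series generated by the shrinking arguments $(1-\ve\cf_\Vs)^j\lambda$ — but it organises them along a genuinely different route. The paper partitions the sample space according to the last time the trajectory visited $\Bc_2(\snorm{\Sigma}_\op R_\ve)$ (the events $E_{i,n-1}$ and $\bar E_{n-1}$) and runs a conditional induction on each of the $n+1$ cells; you never introduce that partition. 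Instead, your uniform conditional bound $\EE[e^{\lambda V_n}\,\vert\,\Fc_{\tau_{n-1}}]\le A_\lambda e^{\lambda(1-\ve\cf_\Vs)V_{n-1}}+B_\lambda$ replaces the case split by a sum, and unrolling the scalar recursion $f_n(\lambda)\le A_\lambda f_{n-1}((1-\ve\cf_\Vs)\lambda)+B_\lambda$ reproduces the same $n+1$ contributions: the $i$-th inhomogeneous term mirrors the paper's event ``last visit at step $n-1-i$'', and the homogeneous term mirrors $\bar E_{n-1}$. Your formulation is cleaner and makes the origin of the $(n+1)$ prefactor and of the $\ve$-free geometric sums mechanical, at the price of hiding the probabilistic meaning that the explicit partition keeps visible; the two are otherwise term-by-term equivalent.

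One caveat, which is not a gap relative to the paper: your in-ball factor $B_\lambda$ cannot be made to carry only $L_\Vs\ve^{1/2}\snorm{\bar\Sigma}_\op R_\ve$ in its exponent, because the restart transition adds drift and noise terms of order $\ve^{1/2}$ with $\ve$-free coefficients (your flagged ``remainder''), and there is no reason these are absorbed by $\cf_\Vs'/\cf_\Vs$. The paper's own proof has exactly the same feature: it concludes with the constant $C_H$ of \eqref{eq: def CH}, which dominates $L_\Vs\ve^{1/2}\snorm{\bar\Sigma}_\op R_\ve$, so what is actually proved (and what is used to define $H_\delta$ in \eqref{eq: def H delta}) is the lemma with $C_H$ in place of the stated $L_\Vs\ve^{\frac12}\snorm{\bar\Sigma}_\op R_\ve$. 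Your bookkeeping should end the same way: absorb all in-ball remainder terms into a single $\ve$-independent constant rather than trying to fit them under the statement's literal constants.
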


    \begin{proof}
        For $n\in\NN^*$, let us define the following events for $i<n$: $E_{i,n-1}:=\{i = \sup\{j\in\{0,\dots, n-1\}: \snorm{X^{\alpha,\theta}_{\tau_j}}\le \snorm{\Sigma}_\op R_\ve\}\}$ and $\bar E_{n-1}:=\{\min_{j\in\{0,\dots, n-1\}}\snorm{X^{\alpha,\theta}_{\tau_j}}>\snorm{\Sigma}_\op R_\ve \}$. Note that both these events are $\Fc_{\tau_{n-1}}$-measurable and that $\cup_{i\le n-1} E_{i,n-1} = \bar E_{n-1}^c$, so that $\{\bar E_{n-1},E_{0,n-1},\dots,E_{n-1,n-1}\}$ induces a partition of $\Omega$. We begin by working conditionally to each of these events, and in a second part we will collect them to bound $\EE[\exp(\lambda\Vs(X^{\alpha,\theta}_{\tau_n})]$.

         For any $0\le i<n$, by the tower rule and by adding and subtracting $\EE[\exp\left(\EE[\lambda\Vs(X^{\alpha,\theta}_{\tau_{n}})\vert \Fc_{\tau_{n-1}}]\right)\1_{E_{i,{n-1}}}]$, we have
        \begin{align}
            \EE[e^{\lambda \Vs(X_{\tau_n}^{\alpha,\theta})}\1_{E_{i,n-1}}]&=\EE[\EE[e^{\lambda \Vs(X_{\tau_{n}}^{\alpha,\theta})}\vert \Fc_{\tau_{n-1}}]\1_{E_{i,{n-1}}}]\notag\\
            &=  \EE\bigg[\exp\left(\EE[\lambda\Vs(X^{\alpha,\theta}_{\tau_{n}})\vert \Fc_{\tau_{n-1}}]\right)\1_{E_{i,{n-1}}}\notag\\
            &\qquad\x \EE\left[ \exp\left(\lambda\Vs(X^{\alpha,\theta}_{\tau_{n}})-\EE[\lambda\Vs(X^{\alpha,\theta}_{\tau_{n}})\vert \Fc_{\tau_{n-1}}]\right)\vert \Fc_{\tau_{n-1}}\right] \bigg].\notag
            \intertext{Using a result for Lipschitz functions of Gaussian random variables (see e.g. \citep[Thm~5.5]{boucheron_concentration_2013}) applied to $\Vs$ and $\xi$, we obtain}
            \EE[e^{\lambda \Vs(X_{\tau_n}^{\alpha,\theta})}\1_{E_{i,n-1}}]&\le  e^{\frac{ \lambda^2}{2}M_\Vs^2\norm{\Sigma}^2_\op }\EE\bigg[\exp\left(\EE[\lambda\Vs(X^{\alpha,\theta}_{\tau_{n}})\vert \Fc_{\tau_{n-1}}]\right)\1_{E_{i,{n-1}}}\bigg]\notag\\
            &= e^{\frac{ \lambda^2}{2} M_\Vs^2\norm{\Sigma}^2_\op }\EE\bigg[\exp\left(\EE[\lambda\Vs(\pve_\theta(X^{\alpha,\theta}_{\tau_{n-1}}, \alpha_{\tau_{n-1}}) +\Sigma\xi_{n} )\vert \Fc_{\tau_{n-1}}]\right)\1_{E_{i,{n-1}}}\bigg]\!.\label{eq: concentration bound process intermediate decomp}
        \end{align}

        If $i=n-1$, $\snorm{X^{\alpha,\theta}_{\tau_{n-1}}}\le \snorm{\Sigma}_\op R_\ve$ on the event $E_{i,n-1}$, and thus we have 
        \begin{align*}
            \EE\left[ \lambda \Vs(\pve_\theta(X^{\alpha,\theta}_{\tau_{n-1}}, \alpha_{\tau_{n-1}}) +\Sigma\xi_{n} )\big\vert \Fc_{\tau_{n-1}}\right]&\le \EE\left[\lambda L_\Vs \norm{X^{\alpha,\theta}_{\tau_{n-1}} + \mu(X^{\alpha,\theta}_{\tau_{n-1}}, \alpha_{\tau_{n-1}}) + \Sigma\xi} \big\vert \Fc_{\tau_{n-1}}\right]\\
            &\le \lambda L_\Vs \big((1+L_0)\snorm{\Sigma}_\op R_\ve+ 1 + \snorm{\Sigma}_\op\sqrt{d}\big)
        \end{align*}
        by using the fact that $\EE[\norm{\xi}]\le\sqrt{\EE[\snorm{\xi}^2]}=\sqrt{d}$, as $\xi\sim\nu$. Noticing that $\sup_{\ve\in(0,1)}\ve^{\frac12}R_\ve=\sqrt{8de^{-1}}$, let us introduce 
        \begin{align}\label{eq: def CH}
            C_H:=L_\Vs \big((1+L_0)\snorm{\bar\Sigma}_\op \sqrt{8de^{-1}}+ 1 + \snorm{\bar \Sigma}_\op\sqrt{d}\big)\,.
        \end{align}  
        
        Combining this with \eqref{eq: concentration bound process intermediate decomp} yields
        \begin{align}
            \EE[e^{\lambda \Vs(X_{\tau_n}^{\alpha,\theta})}\1_{E_{i,n-1}}]&\le \exp\left(\frac{ \lambda^2}{2}M_\Vs^2\norm{\Sigma}^2_\op +\lambda C_H \right)\,,\label{eq: concentration bound process intermediate case 1}
        \end{align}
        in the case $i=n-1$.
        
        If $i<n-1$, we can apply the same methodology, and continuing from \eqref{eq: concentration bound process intermediate decomp} apply  \cref{lemma: stoch lyap semi-contraction for x large} to obtain
        \begin{align}
            \EE[e^{\lambda \Vs(X_{\tau_n}^{\alpha,\theta})}\1_{E_{i,n-1}}]&\le e^{\frac{ \lambda^2}{2}M_\Vs^2\norm{\Sigma}^2_\op }\EE\bigg[\!\exp\left(\EE\!\left[\lambda\Vs(\pve_{\theta}(X^{\alpha,\theta}_{\tau_{n-1}},\alpha_{\tau_{n-1}})+\Sigma\xi_{n})\vert \Fc_{\tau_{n-1}}\right]\right)\nonumber\\
        &\qquad\qquad\qquad\qquad \x\1_{\{X^{\alpha,\theta}_{\tau_{n-1}}>\snorm{\Sigma}_\op R_\ve\}}\1_{E_{i,{n-2}}}\bigg]\,,\label{eq: bbd recu}\\
        &\le  e^{\frac{ \lambda^2}{2}M_\Vs^2\norm{\Sigma}_\op^2+ \lambda\ve\cf_\Vs'}\EE[\exp((1-\ve\cf_\Vs)\lambda\Vs(X^{\alpha,\theta}_{\tau_{n-1}}))\1_{E_{i,{n-2}}}]\,.\nonumber
        \end{align} 
        It remains to use an induction argument in $n$ down to $n=i+1$ and use the fact that $\snorm{X^{\alpha,\theta}_{\tau_{i}}}\le \snorm{\Sigma}_\op R_\ve$ on $E_{i,i}$, to obtain
        \begin{align}
            &\EE[e^{\lambda \Vs(X_{\tau_n}^{\alpha,\theta})}\1_{E_{i,n-1}}]\notag\\
            &\quad\le \exp\left(\lambda C_H+ \lambda\ve\cf_\Vs'\!\sum_{k=0}^{n-1-i}(1-\ve\cf_\Vs)^k +\frac{\lambda^2 M_\Vs^2\snorm{\Sigma}_\op^2}{2}\sum_{k=0}^{n-1-i} \!(1-\ve\cf_\Vs)^{2k}\right)\notag\\
            &\quad\le \exp\left(\lambda C_H  +\lambda\frac{\cf_\Vs'}{\cf_\Vs} +\frac{\lambda^2 M_\Vs^2\snorm{\bar\Sigma}_\op^2}{2 \cf_\Vs}\right)\,.\label{eq: concentration bound process intermediate case 2}
        \end{align}

        On the event $\bar E_{n-1}$, that is if the process is never in the ball $\Bc_2(\snorm{\Sigma}_\op R_\ve)$ before time $\tau_n$, we use the fact that \eqref{eq: bbd recu} is valid with $\bar E_{n-1}$ and $\bar E_{n-2}$ in place of $ E_{i,n-1}$ and $ E_{i,n-2}$. Applying the induction, we obtain 
        \begin{align}
            \EE[e^{\lambda \Vs(X_{\tau_n}^{\alpha,\theta})}\1_{\bar E_{n-1}}]
            &\le \exp\left(\lambda L_\Vs\norm{x_0} +\lambda\frac{\cf_\Vs'}{\cf_\Vs} +\frac{\lambda^2 M_\Vs^2\snorm{\bar\Sigma}_\op^2}{2 \cf_\Vs}\right)\,.\label{eq: concentration bound process intermediate case 3}
        \end{align}
            
         Using our partition and combining \eqref{eq: concentration bound process intermediate case 1}, \eqref{eq: concentration bound process intermediate case 2}, and \eqref{eq: concentration bound process intermediate case 3} we can thus write, for any $n\in\NN$
        \begin{align*}
            \EE\left[e^{\lambda\Vs(X^{\alpha,\theta}_{\tau_n})}\right]&\le \EE\left[e^{\lambda\Vs(X^{\alpha,\theta}_{\tau_n})}\left(\1_{\bar E_{n-1}}+\sum_{i=0}^{n-1}\1_{E_{i,n-1}}\right)\right]\\
            &\le (n+1)\exp\left(\lambda\left(\frac{\cf'_\Vs}{\cf_\Vs}+ C_H+L_\Vs\norm{x_0}\right)+\frac{\lambda^2M_\Vs^2\snorm{\bar\Sigma}^2_\op }{2\cf_\Vs}\right)
        \end{align*}
        which concludes the proof.
    \end{proof}

With these two lemmas, we can now prove \cref{prop: bounded state jump}, the main result of this section. First, let us give the exact definition of $H_\delta(n)$:
\begin{align}
H_\delta(n):= \frac{1}{\ell_\Vs}\left(C_H+L_\Vs\norm{x_0}\right) + \frac{\cf'_\Vs}{\ell_\Vs\cf_\Vs} + \frac{M_\Vs}{\ell_\Vs}\snorm{\bar\Sigma}_\op \sqrt{\frac{2}{\cf_\Vs}\log\left(\frac{\pi^2(n+1)^3}{6\delta}\right)}\,\label{eq: def H delta}
\end{align}
in which $C_H$ is defined in \eqref{eq: def CH}, so that $H_\delta(n)=\Oc\big(\sqrt{\log(n\delta^{-1})}\big)$.

%\label{prop: concentration of process bound}
\BoundedStateJump*

\begin{proof}
    Fix $n\in\NN$, by Markov's inequality and \cref{asmp: both asmp joint}, for any $u>0$, we have
        \begin{align*}
            \PP\left(\snorm{X_{\tau_n}^{\alpha,\theta}}>u\right)&\le \EE\left[e^{\lambda\ell_\Vs \snorm{X_{\tau_n}^{\alpha,\theta}}}\right]e^{-\lambda\ell_\Vs u}\le \EE\left[e^{\lambda\Vs(X_{\tau_n}^{\alpha,\theta})}\right]e^{-\lambda\ell_\Vs u}\,,
        \end{align*}
        which implies that
        \begin{align*}
            &\PP\left(\snorm{X_{\tau_n}^{\alpha,\theta}}-\frac{\cf'_\Vs}{\ell_\Vs\cf_\Vs}- \frac{C_H}{\ell_\Vs}- \frac{L_\Vs}{\ell_\Vs}\norm{x_0}>u\right)\\
            &\qquad\qquad\le \EE\left[e^{\lambda\Vs(X_{\tau_n}^{\alpha,\theta})}\right]\exp\left(-\lambda\ell_\Vs\left(u + \frac{\cf'_\Vs}{\ell_\Vs\cf_\Vs}+ \frac{C_H}{\ell_\Vs}+ \frac{L_\Vs}{\ell_\Vs}\norm{x_0}\right) \right).
        \end{align*}
        Applying \cref{lemma: MGF bound on process}, and taking  $\lambda =\cf_\Vs\ell_\Vs u/(M_\Vs^2\snorm{\bar\Sigma}_{op}^{2})$, we obtain
        \begin{align*}
            \PP\bigg(\snorm{X_{\tau_n}^{\alpha,\theta}}>u +\frac{\cf'_\Vs}{\ell_\Vs\cf_\Vs}+ \ve^{\frac12}\frac{L_\Vs}{\ell_\Vs}\snorm{\bar\Sigma}_\op R_\ve+&\frac{L_\Vs}{\ell_\Vs}\norm{x_0}\bigg)\\
            &\le (n+1)\exp\left(-\lambda \ell_\Vs u + \lambda^2\frac{M_\Vs^2\snorm{\bar\Sigma}_\op ^2}{2\cf_\Vs} \right)\\
            &= (n+1)\exp\left( -\frac{\cf_\Vs\ell_\Vs^2}{2M_\Vs^2\snorm{\bar\Sigma}^2_\op } u^2 \right)\,.
        \end{align*}
        Letting $u=M_\Vs\snorm{\bar\Sigma}_\op \ell_\Vs^{-1}\sqrt{2\cf_\Vs^{-1}\log((n+1)/{\delta'})}$, yields
    \[\PP\left(\snorm{X_{\tau_n}^{\alpha,\theta}}\ge \frac{C_H}{\ell_\Vs}+ \frac{L_\Vs}{\ell_\Vs}\norm{x_0}+ \frac{\cf'_\Vs}{\ell_\Vs\cf_\Vs} + \frac{M_\Vs}{\ell_\Vs}\snorm{\bar\Sigma}_\op \sqrt{\frac{2}{\cf_\Vs}\log\left(\frac{n+1}{\delta'}\right)}\right)\le \delta'\!.\]
    Setting $\delta'= 6\delta/\pi^2(n+1)^2$, and taking a union bound over $n\in\NN$ yields 
    \[\PP\left(\sup_{t\in\RR_+} \frac{X^{\alpha,\theta}_t}{H_\delta(N_t)} \ge 1\right) = \PP\left({\bigcup}_{n\in\NN}\{\snorm{X_{\tau_n}^{\alpha,\theta}}\ge H_\delta(n)\}\right)\le \delta\,,\]
    which implies the result since $\delta\in(0,1)$ implies $\log(n^3/\delta)\le \log(n^3/\delta^3)= 3\log(n/\delta)$.
\end{proof}
 
\subsection{Expectation Bounds of Higher Orders}\label{app: moment bounds state}

In this appendix, we will focus on higher moment conditions of the state process, which will be used in the control results of \cref{app:ctrl}. In \cref{lemma: small intermediate lyapunov lemma p,cor: small intermediate lyapunov lemma} we work to raise the stochastic stability condition from \cref{lemma: stoch lyap semi-contraction for x large} to a power $p\ge 2$. \Cref{lemma: moment boundedness condition from paper}, the main result of this section, will follow from this by arguments of \cite{abeille_diffusive_2022}.

\begin{lemma}\label{lemma: small intermediate lyapunov lemma p}
    Under \cref{asmp: both asmp joint,asmp: basics}, for $p\ge2$, there is a function $g:\RR^d\x\RR^d\to\RR_+$ and a constant $C_p>0$ independent of $\ve$ satisfying 
    \[g(x,\eta)\le \ve C_p\left(1+\Vs(x-\sqrt\ve \eta)^{p-1}\right)\left(1+\norm{\eta}^{p}\right)\,,\] 
    for any $(\eta,x)\in\RR^d\x\RR^d$, such that
    \begin{align}
        \Vs(\pve_\theta(x,a)-\sqrt\ve\eta)^p\le (1-\ve\cf_\Vs)\Vs(x-\sqrt\ve\eta)^p + g(x,\eta)\,.\label{eq: removing drift effect p>1}
    \end{align}%
    for any $(\eta,x,a,\theta)\in\RR^d\x\RR^d\x\Ab\x\Theta$.
\end{lemma}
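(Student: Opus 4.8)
The plan is to bootstrap the order-one contraction of \cref{lemma: stoch lyap semi-contraction for x large}.(i.) to the $p$-th power by a purely algebraic argument, with no new analytic input. Write $A:=\Vs(x-\sqrt\ve\eta)$ and $B:=\ve M_\Vs L_0(1+\norm{\eta})$, so that \eqref{eq: removing drift effect p=1} reads $\Vs(\pve_\theta(x,a)-\sqrt\ve\eta)\le (1-\ve\cf_\Vs)A+B$ for every $(a,\theta)$. Since both sides are non-negative, I may raise this to the power $p$ while preserving the inequality, obtaining $\Vs(\pve_\theta(x,a)-\sqrt\ve\eta)^p\le((1-\ve\cf_\Vs)A+B)^p$, whose right-hand side no longer depends on $(a,\theta)$. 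This is precisely what lets a single function $g$, depending only on $(x,\eta)$, dominate the remainder uniformly in $(a,\theta)$, as the statement demands. It therefore suffices to control $((1-\ve\cf_\Vs)A+B)^p$.

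First I would expand by the binomial theorem, $((1-\ve\cf_\Vs)A+B)^p=\sum_{k=0}^p\binom pk ((1-\ve\cf_\Vs)A)^{p-k}B^k$, and isolate the leading $k=0$ term. Condition \eqref{eq:lyapunov asmp joint on jump problem} together with $\Vs\ge0$ forces $0\le 1-\ve\cf_\Vs\le1$, whence $(1-\ve\cf_\Vs)^p\le 1-\ve\cf_\Vs$, so the $k=0$ term is at most $(1-\ve\cf_\Vs)A^p$, exactly the contraction required. I then set $g(x,\eta):=\sum_{k=1}^p\binom pk ((1-\ve\cf_\Vs)A)^{p-k}B^k$, which depends only on $(x,\eta)$ and provides \eqref{eq: removing drift effect p>1}. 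For $k\ge1$ each summand carries $B^k$ and hence a factor $\ve^k\le\ve$; bounding $(1-\ve\cf_\Vs)^{p-k}\le1$, using $A^{p-k}\le 1+A^{p-1}$ for $0\le p-k\le p-1$, and $(1+\norm{\eta})^k\le(1+\norm{\eta})^p\le 2^{p-1}(1+\norm{\eta}^p)$, I collect the constants into $C_p:=2^{p-1}\sum_{k=1}^p\binom pk (M_\Vs L_0)^k$, yielding $g(x,\eta)\le \ve C_p(1+A^{p-1})(1+\norm{\eta}^p)$, which is the claimed bound since $A=\Vs(x-\sqrt\ve\eta)$.

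The argument is elementary, so the only genuine care lies in two bookkeeping points. The first is the inequality $(1-\ve\cf_\Vs)^p\le 1-\ve\cf_\Vs$, which pins the leading coefficient to the exact contraction rate $1-\ve\cf_\Vs$ rather than a strictly larger one; this hinges on $1-\ve\cf_\Vs\in[0,1]$, valid for $\ve\in(0,1)$ under \cref{asmp: both asmp joint}. The second, and the more important, is ensuring that exactly one power of $\ve$ is extracted from every remainder term while all further $\ve$-dependence is absorbed into $C_p$ — this is what guarantees $C_p$ is independent of $\ve$, as required so that \cref{lemma: moment boundedness condition from paper} inherits a contraction of order $\ve$. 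No analytic obstacle (in particular the non-differentiability of $\Vs$ at the origin) arises at this step, since it is built entirely on the already-established order-one inequality \eqref{eq: removing drift effect p=1}.
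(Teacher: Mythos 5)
Your proposal is correct and follows essentially the same route as the paper's proof: raise the order-one contraction \eqref{eq: removing drift effect p=1} to the power $p$, expand by the binomial theorem, keep the leading term as $(1-\ve\cf_\Vs)\Vs(x-\sqrt\ve\eta)^p$ (using $(1-\ve\cf_\Vs)\in(0,1)$), and absorb all remainder terms — each carrying at least one factor of $\ve$ — into $g$ with an $\ve$-independent constant $C_p$. The only differences are cosmetic bookkeeping choices (you bound the $k\ge1$ summands individually, while the paper factors the tail as $b(1+b)^{p-1}(1+a)^{p-1}\sum_k\binom pk$), and your constant is in fact slightly tighter.
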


\begin{proof}
    We first raise both sides of
    \eqref{eq: removing drift effect p=1} to the power $p$
    \begin{align*}
        \Vs(\pve_\theta(x,a)-\sqrt\ve\eta)^p&\le \bigg((1-\ve\cf_\Vs)\Vs(x-\sqrt\ve\eta)+\ve M_\Vs L_0(1+\norm{\eta})\bigg)^p\,.
    \end{align*}
    We will now expand the right hand side. Let $a=(1-\ve\cf_\Vs)\Vs(x-\sqrt\ve\eta)$ and $b=\ve M_\Vs L_0(1+\norm{\eta})$, by the binomial theorem we have 
    \begin{align*}
        (a+b)^p&=\sum_{k=0}^p \binom{p}{k} a^k b^{p-k}= a^p + b\sum_{k=0}^{p-1} \binom{p}{k} a^k b^{p-1-k}\\
        &\le a^p + b(1+b)^{p-1} (1+a)^{p-1}\sum_{k=0}^{p-1}\binom pk\,.
    \end{align*}
    Since $(1-\ve\cf_\Vs)\in(0,1)$, $\ve\le 1$, $b\le 1+b$, and $\sum_{k=0}^{p-1}\binom pk\le 2^{p}$, by using the binomial identity $(1+a)^q\le 2^{q-1}(1+a^q)$ for $(a,q)\in[0,+\infty)\x[1,+\infty)$, we have
    \begin{align}
        \Vs(\pve_\theta(x,a)-\sqrt\ve\eta)^p &\le (1-\ve\cf_\Vs)\Vs(x-\sqrt\ve\eta)^p \notag\\
        &\qquad+ \ve (1+M_\Vs L_0(1+\norm{\eta}))^p(1+\Vs(x-\sqrt\ve \eta)^{p-1})2^{p-2+p}\,.\label{eq: expansion of g proof big}
    \end{align}
    Finally, we have 
    \begin{align}
        (1+M_\Vs L_0(1+\norm{\eta}))^p&= (1+ M_\Vs L_0 + M_\Vs L_0\norm{\eta})^p\notag\\
        &\le(1+ M_\Vs L_0 + (1+M_\Vs L_0)\norm{\eta})^p\notag\\
        &= (1+M_\Vs L_0)^p (1+\norm{\eta})^p\notag\\
        &\le (1+M_\Vs L_0)^{p}(1+\norm{\eta}^p) 2^{p-1}\,.\label{eq: expansion for of first term of g}
    \end{align}

    Combining \eqref{eq: expansion of g proof big} and \eqref{eq: expansion for of first term of g}, leads to the required result.
\end{proof}

    %Henceforth, let $\xi$ be a $\Nc(0,\idmat_d)$ random variable, i.e. $\xi\sim\nu$.

Recall that $\xi\sim\nu$ is a centred standard Gaussian random variable.

\begin{corollary}\label{cor: small intermediate lyapunov lemma}
    Under \cref{asmp: basics,asmp: both asmp joint}, for any $p\ge 2$, there is a constant $\cf_p>0$ independent of $\ve$ such that
    \[\EE\left[\Vs(\pve_\theta(x,a)+\Sigma\xi)^p\right]\le\left(1-\ve\frac{\cf_\Vs}2\right)\EE[\Vs(x-\sqrt\ve\xi)^p] + \ve\cf_p\, \]
    for any $(x,a,\theta)\in\RR^d\x\Ab\x\Theta$.
\end{corollary}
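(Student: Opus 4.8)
The plan is to lift the pointwise order-$p$ contraction of \cref{lemma: small intermediate lyapunov lemma p} to an expectation bound by taking the free variable $\eta$ to be the Gaussian noise, and then to reconcile the two discrepancies with the target: that the noise enters as $\bar\Sigma\xi$ rather than $\xi$, and that \cref{lemma: small intermediate lyapunov lemma p} produces the rate $(1-\ve\cf_\Vs)$ rather than the claimed $(1-\ve\cf_\Vs/2)$. First I would exploit the symmetry of the centred Gaussian law: since $\Sigma=\ve^{1/2}\bar\Sigma$ and $\bar\Sigma\xi$ and $-\bar\Sigma\xi$ share the same law, we have $\EE[\Vs(\pve_\theta(x,a)+\Sigma\xi)^p]=\EE[\Vs(\pve_\theta(x,a)-\sqrt\ve\bar\Sigma\xi)^p]$. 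Applying \cref{lemma: small intermediate lyapunov lemma p} pointwise with $\eta=\bar\Sigma\xi$ and taking expectations then yields
\[
\EE[\Vs(\pve_\theta(x,a)+\Sigma\xi)^p]\le (1-\ve\cf_\Vs)\EE[\Vs(x-\sqrt\ve\bar\Sigma\xi)^p]+\EE[g(x,\bar\Sigma\xi)].
\]

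Next I would dispose of the remainder $\EE[g(x,\bar\Sigma\xi)]$. Using the explicit bound $g(x,\eta)\le \ve C_p(1+\Vs(x-\sqrt\ve\eta)^{p-1})(1+\norm\eta^p)$ from \cref{lemma: small intermediate lyapunov lemma p}, together with $\norm{\bar\Sigma\xi}\le\snorm{\bar\Sigma}_\op\norm\xi$, the Lipschitz bound $\Vs(x-\sqrt\ve\bar\Sigma\xi)\le \Vs(x)+\sqrt\ve M_\Vs\snorm{\bar\Sigma}_\op\norm\xi$ from \cref{asmp: both asmp joint}, and the finiteness of every Gaussian moment $\EE[\norm\xi^q]$ (all independent of $\ve$ since $\xi\sim\nu$), an expansion and Hölder step give $\EE[g(x,\bar\Sigma\xi)]\le \ve \tilde C_p(1+\Vs(x)^{p-1})$ for a constant $\tilde C_p$ independent of $\ve$.

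The crux is the comparison of the two quadratic-noise expectations to $\Vs(x)^p$. I would prove that, for both $\zeta=\xi$ and $\zeta=\bar\Sigma\xi$,
\[
\bigl|\,\EE[\Vs(x-\sqrt\ve\zeta)^p]-\Vs(x)^p\,\bigr|\le \ve C''\bigl(1+\Vs(x)^{p-1}\bigr),
\]
with $C''$ independent of $\ve$. This mirrors the argument of \cref{lemma: stoch lyap semi-contraction for x large}(ii): I split the expectation on $\{\norm\zeta<R_\ve\}$ and its complement. On the bulk event, for $\norm x$ above a threshold of order $\sqrt\ve\snorm{\bar\Sigma}_\op R_\ve$ the segment $[x,x-\sqrt\ve\zeta]$ avoids the origin, so $\Vs^p$ can be Taylor-expanded to second order; the first-order term vanishes because $\EE[\zeta^\top\1_{\{\norm\zeta<R_\ve\}}]=0$ by rotational invariance of the truncated Gaussian, and the second-order remainder is $O(\ve)$ times derivatives of $\Vs^p$ that grow at most like $\Vs(x)^{p-1}$ (via the $M_\Vs,M_\Vs'$ bounds of \cref{asmp: both asmp joint}). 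The tail $\{\norm\zeta\ge R_\ve\}$ is controlled by Lipschitzness, Cauchy–Schwarz, and sub-Gaussian concentration as in \cref{lemma: stoch lyap semi-contraction for x large}, while the regime of small $\norm x$ is handled directly, since there both $\Vs(x)^p$ and $\EE[\Vs(x-\sqrt\ve\zeta)^p]\le \EE[(\Vs(x)+\sqrt\ve M_\Vs\snorm{\bar\Sigma}_\op\norm\zeta)^p]$ are of order $\ve^{p/2}\le\ve$ for $p\ge2$.

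Finally I would combine the pieces. The two comparisons let me replace $(1-\ve\cf_\Vs)\EE[\Vs(x-\sqrt\ve\bar\Sigma\xi)^p]$ by $(1-\ve\cf_\Vs/2)\EE[\Vs(x-\sqrt\ve\xi)^p]-\tfrac{\ve\cf_\Vs}{2}\Vs(x)^p$ up to further $O(\ve(1+\Vs(x)^{p-1}))$ terms, and the freed negative slack $-\tfrac{\ve\cf_\Vs}{2}\Vs(x)^p$ then absorbs every residual of the form $\ve(1+\Vs(x)^{p-1})$ — those from the two comparisons and from $\EE[g]$ — by Young's inequality $\ve a\,\Vs(x)^{p-1}\le \tfrac{\ve\cf_\Vs}{2}\Vs(x)^p+\ve C(a)$, leaving exactly a constant term $\ve\cf_p$. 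I expect the main obstacle to be the comparison step: a naive Lipschitz bound only gives an $O(\sqrt\ve)$ gap, which is fatal, so the whole argument relies on the first-order term cancelling by symmetry to upgrade this to $O(\ve)$, and on carefully treating the non-differentiability of $\Vs$ at the origin when the Gaussian perturbation can straddle it.
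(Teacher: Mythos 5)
Your overall plan is coherent and your application of \cref{lemma: small intermediate lyapunov lemma p} with $\eta=\bar\Sigma\xi$ is, if anything, more careful than the paper's own write-up; but you have taken a genuinely different and much harder route than the paper, and the hard step contains a real gap. The paper never compares expectations under two different noises, and never compares $\EE[\Vs(x-\sqrt\ve\zeta)^p]$ to $\Vs(x)^p$. It applies the order-$p$ lemma with $\eta$ equal to the noise itself, so that the \emph{same} random point $x-\sqrt\ve\eta$ appears on both sides of the inequality; it then takes expectations, bounds $\EE[g]\le \ve C_p'\,\EE[1+\Vs(x-\sqrt\ve\eta)^{p-1/p}]$ by H\"older, writes $(1-\ve\cf_\Vs)=(1-\ve\cf_\Vs/2)-\ve\cf_\Vs/2$, and absorbs the residual \emph{pointwise inside the same expectation} using the boundedness on $\RR_+$ of $u\mapsto C_p'(1+u^{p-1/p})-\tfrac{\cf_\Vs}{2}u^p$. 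No Taylor expansion, no symmetry cancellation, and no treatment of the origin is needed. Consistently with this, the right-hand side the paper actually proves, and later invokes in the proof of \cref{lemma: moment boundedness condition from paper}, is $\EE[\Vs(x+\Sigma\xi)^p]$; the unscaled ``$\sqrt\ve\xi$'' displayed in the statement drops a $\bar\Sigma$. By taking the displayed statement literally you forced yourself into the Gaussian-smoothing comparison, which is exactly where the trouble sits.

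The gap is in the small-$\norm{x}$ regime of that comparison, and it bites precisely at $p=2$. With your split at the fixed radius $R_\ve=\sqrt{8d\log(1/\ve)}$, the ``small $x$'' region is $\norm{x}\lesssim \sqrt\ve\snorm{\bar\Sigma}_\op R_\ve$, on which $\Vs(x)^p$ and $\EE[(\Vs(x)+\sqrt\ve M_\Vs\snorm{\bar\Sigma}_\op\norm{\zeta})^p]$ are of order $(\ve\log(1/\ve))^{p/2}$, \emph{not} $\ve^{p/2}$ as you claim. For $p=2$ this is $\ve\log(1/\ve)$, which is not $\Oc(\ve)$ uniformly in $\ve\in(0,1)$; pushed through your final absorption step it yields $\cf_p\sim\log(1/\ve)$, contradicting the required $\ve$-independence of $\cf_p$. (For $p>2$ the extra factor $\ve^{p/2-1}$ makes $\ve^{p/2-1}(\log(1/\ve))^{p/2}$ bounded, so the failure is exactly at the endpoint $p=2$, which the statement includes.) The repair is to abandon the fixed radius and split pointwise on $\{\sqrt\ve\norm{\zeta}\le\norm{x}/2\}$: on that event the segment avoids the origin, the second-order Taylor expansion applies with $\Vs$ along the segment comparable to $\Vs(x)$, and the truncated first moment still vanishes by central symmetry (which, not full rotational invariance, is what you need for $\zeta=\bar\Sigma\xi$); on the complement $\Vs(x)\le 2L_\Vs\sqrt\ve\norm{\zeta}$, so both terms are bounded by $C\ve^{p/2}\norm{\zeta}^p$ whose expectation is $\Oc(\ve)$ for $p\ge2$, with no logarithm. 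Alternatively, prove the corollary with $\EE[\Vs(x+\Sigma\xi)^p]$ on the right, as the paper does and uses downstream, and the entire comparison step disappears.
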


\begin{proof}\hfill
    \begin{enumerate}
        \item[{\rm i.}] Taking the expectation of the bound on $g$ from \cref{lemma: small intermediate lyapunov lemma p} and applying Hölder's inequality yields
        \begin{align*}
            \EE[g(x,\xi)]&\le \ve C_p \EE\left[(1+\Vs(x-\sqrt\ve\xi)^{p-1})(1+\norm{\xi}^{p})\right]\\
            &\le \ve C_p\EE\left[ (1+\Vs(x-\sqrt\ve\xi)^{p-1})^{\frac{(p+1)}{p}}\right]^{\frac p{p+1}}\EE\left[ (1+\norm{\xi}^{p})^{p+1}\right]^{\frac1{p+1}}\\
            &\le 4\ve C_p \EE\left[1+\Vs(x-\sqrt\ve\xi)^{\frac{(p-1)(p+1)}p}\right] \EE\left[ (1+\norm{\xi}^{p})^{p+1}\right]^{\frac1{p+1}}\,,
        \end{align*}
        by using the identities: for $(u,v)\in\RR_+^2$, $(1+u)^{(p+1)/p}\le 4(1+ u^{(p+1)/p})$ and $(1+v)^{p/(p+1)}\le 1+v$.
        Since $\xi$ has bounded moments of any order,
        \[ C'_p:= 4C_p\EE\left[ (1+\norm{\xi}^{p})^{p+1}\right]^{\frac1{p+1}}\]
        is a finite constant and we have 
        \begin{align*}
            \EE\left[g(x,\xi)\right]&\le \ve C'_p\EE\left[ 1+\Vs(x-\sqrt\ve\xi)^{p-\frac{1}{p}}\right]\,.
        \end{align*}
        \item[{\rm ii.}] Recalling \cref{lemma: small intermediate lyapunov lemma p}, we have 
        \begin{align}
            \EE\left[\Vs(\pve_\theta(x,a)+\Sigma\xi)^p\right]&\le\left(1-\ve\cf_\Vs\right)\EE[\Vs(x-\sqrt\ve\xi)^p] +\EE[g(x,\xi)]\notag\\
            &\le \left(1-\ve\frac{\cf_\Vs}2\right)\EE[\Vs(x-\sqrt\ve\xi)^p] \notag\\
            &\quad +\ve \EE\left[C'_p(1+\Vs(x-\sqrt\ve\xi)^{p-\frac{1}{p}})-\frac{\cf_\Vs}2\Vs(x-\sqrt{\ve}\xi)^p\right]
            \,.\label{eq: corr stoch lyap order p proof eq}
        \end{align}
        \item[{\rm iii.}] 
        Note that, for any $p\ge 2$, the function 
        \[ z\in\RR^d\mapsto \frac{\norm{z}^{p-\frac1p}}{1+\norm{z}^p}\in\RR_+\]
        is bounded, so there exists a constant $C_p''>0$ such that, for any $z\in\RR^d$, 
        \[ C'_p\Vs(z)^{p-\frac1p}-\frac{\cf_\Vs}2\Vs(z)^p\le C_p''\,.\]
        Applying this to the expectation in \eqref{eq: corr stoch lyap order p proof eq}, we have 
        \begin{align*}
            \EE\left[\Vs(\pve_\theta(x,a)+\Sigma\xi)^p\right]&\le\left(1-\ve\frac{\cf_\Vs}2\right)\EE[\Vs(x+\sqrt\ve\xi)^p] +\ve(C_p''+C_p')\,.
        \end{align*}
        Letting $\cf_p:=C_p'+C_p''$ completes the proof.
    \end{enumerate}
\end{proof}

\BoundedMomentStateJump

\begin{proof}
    Recall from \cref{cor: small intermediate lyapunov lemma} that we have 
    \begin{align}
        \EE[\Vs(\pve_\theta(x,a)+\Sigma\xi)^p]\le \left(1-\ve\frac{\cf_\Vs}2\right)\EE[\Vs(x+\Sigma\xi)^p]+\ve\cf_p\label{eq: applying cor B4 to PJ order p expectation bound}
    \end{align}
    for any $(x,a,\theta)\in\RR^d\x\Ab\x\Theta$.
    We begin by eliminating the $\Sigma\xi$ from the right-hand side so that we have a proper Lyapunov contraction property on the generator. We expand $\Vs^p\in\Cc^2(\RR^d;[0,+\infty))$ and use the fact that $\EE[\xi]=0$  to obtain
    \begin{align*}
        \EE[\Vs(x+\Sigma\xi)^p]&=\Vs(x)^p+\ve p\EE[\Vs(x+\Delta\Sigma\xi)^{p-1}\Tr[\xi\bar\Sigma\bar\Sigma^\top\xi^\top\nabla^2\Vs(x+\Delta\Sigma\xi)]] \\
        &\,\,\,\,\,\, + \ve p(p-1)\EE[\Vs(x+\Delta\Sigma\xi)^{p-2}\Tr[\xi\bar\Sigma\bar\Sigma^\top\xi^\top\nabla\Vs(x+\Delta\Sigma\xi)\nabla\Vs^\top(x+\Delta\Sigma\xi)]]
    \end{align*}
    for some random variable $\Delta$ taking value in $[0,1]$. This is now upper-bounded by using the Lipschitzness of $\Vs$ and the Cauchy-Schwartz inequality
    \begin{align*}
        \EE[\Vs(x+\Sigma\xi)^p]&\le \Vs(x)^p+\ve p M_\Vs'\snorm{\bar\Sigma}_\op^2\EE[\left(\Vs(x)+M_\Vs\Delta\Sigma\norm{\xi}\right)^{p-1}\norm{\xi}^2] \\
        &\qquad + \ve p(p-1)(M_\Vs)^2\snorm{\bar\Sigma}_\op^2\EE[\left(\Vs(x)+M_\Vs\Delta\Sigma\norm{\xi}\right)^{p-2}\norm{\xi}^2]\,.
    \end{align*}
    By the binomial theorem as in the proof of \cref{lemma: small intermediate lyapunov lemma p}, and as $\abs{\Delta}\le 1$, we have 
    \begin{align*}
        \EE[\Vs(x+\Sigma\xi)^p]&\le \Vs(x)^p + \ve\bigg(pM'_\Vs\snorm{\bar\Sigma}_\op^2\EE\left[\norm{\xi}^2\sum_{k=0}^{p-1}\binom{p-1}{k}\Vs(x)^{k}(M_\Vs\norm{\Sigma}_\op\norm{\xi})^{p-1-k}\right] \\
        &\qquad + p(p-1)(M_\Vs\snorm{\bar\Sigma}_\op)^2\EE\left[\sum_{k=0}^{p-2}\binom{p-2}{k}\Vs(x)^k(M_\Vs\norm{\Sigma}_\op\norm{\xi})^{p-2-k}\right]\bigg).
    \end{align*}
    Since $\norm{\xi}$ is a sub-Gaussian random variable it has moments of all orders, and we can express the interior of the bracket above as a polynomial in $\Vs(x)$ of order $p-1$ with finite coefficients $\{a_{k}\}_{k=0}^{p-1}\subset\RR_+$. Recalling \eqref{eq: applying cor B4 to PJ order p expectation bound}, we thus have
    \begin{align*}
        \EE[\Vs(\pve_\theta(x,a)+\Sigma\xi)^p]&\le (1-\ve\cf_\Vs)\left(\Vs(x)^p + \ve\sum_{k=0}^{p-1} a_k \Vs(x)^{k}\right) + \ve\cf_p\\
        &\le \left(1-\ve\frac{\cf_\Vs}4\right)\Vs(x)^p + \ve\left(\cf_p -\frac{\cf_\Vs}{4}\Vs(x)^p + \sum_{k=0}^{p-1} a_k \Vs(x)^{k}\right) 
    \end{align*} 
    As in part {\rm iii.} of the proof of \cref{{cor: small intermediate lyapunov lemma}}, the interior of the second bracket is a continuous function which goes to $-\infty$ as $\norm{x}\to+\infty$, so there must be a constant $\cf'_p\in\RR_+$ (independent of $\ve$) such that 
    \[ \cf_p + \sup_{x\in\RR^d}\left(-\frac{\cf_\Vs}{4}\Vs(x)^p + \sum_{k=0}^{p-1} a_k \Vs(x)^{k}\right) \le \cf'_p <+\infty\,.\]
    Therefore, we have the desired Lyapunov generator condition 
    \begin{align*}
        \EE[\Vs(\pve_\theta(x,a)+\Sigma\xi)^p]&\le \left(1-\ve\frac{\cf_\Vs}{4}\right)\Vs(x)^p+\ve\cf_p'\,,
    \end{align*}
    which is equivalently written for any $(x,a)\in\RR^d\x\Ab$ as 
    \begin{align}
        \frac1\ve\int (\Vs(\pve_\theta(x,a)+\Sigma e)^p -\Vs(x)^p)\nu(\de e) \le -\frac{\cf_\Vs}4\Vs(x)^p + \cf'_p\,.\label{eq: lyapunov contraction condition order p in integral form} 
    \end{align}
    \noindent By Itô's Lemma, \eqref{eq: lyapunov contraction condition order p in integral form}, and a localisation argument, we have, for any $t\ge t_0\ge0$, that
    \begin{align*}
        \EE[\Vs(X^{x_0,\alpha,\theta}_t)^p]&=\EE[\Vs(X^{x_0,\alpha,\theta}_{t_0})^p]\\
        &\qquad+ \EE\left[\int_{t_0}^t\frac1\ve\int (\Vs(\pve_\theta(X^{x_0,\alpha,\theta}_{s}, \alpha_{s})+\Sigma e)^p-\Vs(X^{x_0,\alpha,\theta}_{s})^p)\nu(\de e)\de s\right]\\
        &\le \EE[\Vs(X^{x_0,\alpha,\theta}_{t_0})^p] -\frac{\cf_\Vs}4 \int_{t_0}^t \EE\left[\Vs(X^{x_0,\alpha,\theta}_{s})^p\right]\de s + (t-t_0)\cf_p'\,.
        \intertext{By a simple comparison argument for ODEs, we then obtain}
        \EE[\Vs(X^{x_0,\alpha,\theta}_t)^p]&\le e^{-\frac{\cf_\Vs}4 t}\Vs(x_0)^p + \frac{4\cf_p'}{\cf_\Vs}\left(1-e^{-\frac{\cf_\Vs}4t}\right).
    \end{align*}
    Using now \cref{asmp: both asmp joint}, we obtain
    \[\EE[\snorm{X^{x_0,\alpha,\theta}_t}^p]\le \frac1{\ell_\Vs^{p}}\left(L_\Vs^{p} e^{-\frac{\cf_\Vs}4 t}\norm{x_0}^p + \frac{4\cf_p'}{\cf_\Vs}\left(1-e^{-\frac{\cf_\Vs}4 t}\right)\right)\,.\]
\end{proof}

%\begin{lemma}[Gr\"onwall's %Lemma]\label{lemma: comparison ODE}
%    Let  $(a,b)\in\RR_+^2$, and let %$f:\RR_+\to\RR$ be a continuous function such that
%    \[ f(t)\le f(0) -a\int_0^tf(s)\de s + tb\,. \]
%    Then, for any $t\ge0$
%    \[ f(t)\le f(0)e^{-at} + \frac{b}{a}(1-e^{-at})\,. \]
%\end{lemma}
% \begin{proof}\hfill
%     \begin{enumerate}
%         \item[{\rm i.}] Let us consider the case in which $b=0$ first. 
        
        %\item[{\rm ii.}]If $b>0$, we can rewrite the inequality as 
    % \[ f(t)- \frac ba \le f(0)-\frac ba - a\int_0^t f(s)-\frac ba \de s\,,\]
    % so that by the arguments of {\rm i.}, 
    % \[ f(t) - \frac ba \le \left(f(0)-\frac ba\right)e^{-at}\,.\]
    % Rearranging yields the result.
    % \end{enumerate}
%\end{proof}

\newpage

\section{Concentration Inequality and Online Prediction Error}\label{app: learning}

%\AsmpBasics

The key result of this section, \cref{prop: conf sets OvR}, builds heavily on \citep[Prop.~5]{russo2013eluder}.  \Cref{prop: conf sets OvR} differs from this existing result in three ways. First, it is \emph{any-time} i.e. does not require \textit{a priori} knowledge of a time horizon. This is a minor technical refinement, but it is of practical importance. Second, it applies to a pure-jump process defined on $\RR_+$. This apparent complexity vanishes when the filtration of the pure-jump process is chosen correctly, as the state process is piece-wise constant. Third, and most important, it applies to learning in a function class ($\Fs_\Theta$) of unbounded drifts for an unbounded process $X^{\alpha,\theta}$, which is an inherent difficulty in handling continuous state RL problems.  

This third extension is non-trivial and leads us to significantly reshuffle the proof structure of \citep{russo2013eluder}, and to incorporate some self-normalised inequality arguments as well as high-probability bounds on the state from \cref{app:stability}. While many of the original ideas are still used, the way they link together has changed and thus we will include, in \cref{app: conf sets}, a complete derivation for the sake of clarity. In this spirit, we will prove a generic result (\cref{thm: eluder learning bound}), which itself implies \cref{prop: conf sets OvR}.

\ConfSetsVanRoy*

\Cref{prop: conf sets OvR} ensures that the sets $(\Cc_n(\delta))_{n\in\NN}$ defined in \eqref{eq: def conf sets} are valid confidence sets. 
In order to bound the regret, we need to go further and to bound the online prediction error of functions within these confidence sets along the trajectory (see. \eqref{eq: conf set width order 1}). 

For any $n\in\RR$, let $\mathrm{d}_{\mathrm{E},n}$ denotes the $2\sqrt{\ve/n}$-eluder dimension of the model class restricted to the set $B_n:=\Bc_2(\sup_{s\le \tau_{n}}\snorm{X_s^{\varpi,\theta^*}})$, i.e. $\mathrm{d}_{\mathrm{E},n}:=\dime(\{f\vert_{B_n}\}_{f\in\Fs_\Theta},2\sqrt{\ve/n})$. In \cref{app:widths}, we derive a general result (\cref{prop: conf set width no lazy updates}) from which \cref{prop: conf sets OvR} follows.

\ConfSetWidthNoLazyUpdateMAINTEXT*

\subsection{Confidence sets}\label{app: conf sets}

In this section, we work in a generic online learning framework, so that our results can be more easily compared and contrasted with \citep{russo2013eluder,osband2014model} and others. We, therefore, introduce some dedicated notation and a stand-alone assumption for this section. 

Consider a set of functions $\Fs$ from $\RR^d\to\RR^d$, and fix $f^*\in\Fs$. We will study pairs of (random) $\RR^d$-valued sequences $((X_i)_{i\in\NN},(Y_{i})_{i\in\NN})$ generated as
\[ Y_i = f^*(X_{i})+ \xi_{i}\]
for $(\xi_i)_{i\in\NN}$ a stochastic process in some filtered probability space $(\Omega',\Hc_\infty,\Hb,\PP)$, with each $\xi_i$ independent of everything else up to time $i$. We take $\Hc_i$ as the completion of $\sigma(\{\xi_j\}_{j\le i})$, for $i\in\NN$, and we let $\Hb=(\Hc_i)_{i\ge 0}$.

Given some $\RR^d$-valued and $\Hb$-adapted sequences $(Z_i)_{i\in\NN}$ and $(Z'_i)_{i\in\NN}$, and some $n\in\NN^*$, let us define
\[ \langle Z \vert Z'\rangle_n := \sum_{i=0}^{n-1} \langle Z_i \vert Z'_i \rangle\mbox{ and } \snorm{Z}_n := \sqrt{\langle Z \vert Z \rangle_n}\,.\]
While $\norm{\cdot}_n$ is not a norm, it plays this role and we follow here the notational convention of \citep{russo2013eluder}. We will extend the definitions of $\langle\cdot\vert\cdot\rangle_n$ and $\norm{\cdot}_n$ to $n=0$ by simply taking the empty sum to be $0$, i.e. $\langle Z,Z'\rangle_0:=0$.

To simplify notation, we will drop the sequence $(X_i)_{i\in\NN}$ when it is an argument to a function inside $\norm{\cdot}_n$ or $\langle\cdot\vert\cdot\rangle_n$: i.e. $\norm{f}_n$ stands for $\norm{(f(X_i))_{i\in\NN}}_n$. With this notation in mind, for any $n\in\NN$, we define $\hat f_n$ as an arbitrary element of
\[  \argmin_{f\in\Fs} \norm{Y-f}^2_n\,.\]
In other words $\hat f_n$ is a non-linear least-square fit in $\Fs$ using the first $n$ points of $(X_i,Y_i)_{i\in\NN}$. 
In this generic setting, we introduce \cref{asmp: eluder}, which in our end-goal application subsumes \cref{asmp: both asmp joint,asmp: basics} and \cref{prop: bounded state jump}.

\begin{assumption}\label{asmp: eluder}
    There is $(L,\Gamma)\in\RR_+^2$ and a function $H_\delta:\NN\to\RR_+$ such that
    \[ \sup_{f\in\Fs}\sup_{x\in\RR^d}\frac{\norm{f(x)}}{1+\norm{x}}\le L\,,\]
    and for all $i\in\NN^*$, $\xi_i$ is an $\Hc_{i-1}$-conditionally $\Gamma^2$-sub-Gaussian random variable, $\xi_0$ is  $\Gamma^2$-sub-Gaussian, and the sequence $(X_i)_{i\in\NN}$ satisfies 
    \[\PP\left(\sup_{n\in\NN}\frac{\norm{X_n}}{H_\delta(n)}>1\right)<\delta\, \]
    for all $\delta\in(0,1)$.
\end{assumption}
Let $(\Cs_n^\Gamma)_{n\in\NN^*}$ denote a deterministic sequence of finite covers of $\Fs$, whose cardinalities are respectively given by $(\Ns_n^\Gamma)_{n\in\NN^*}$, such that for all $n\in\NN^*$
\[\sup_{f\in\Fs}\min_{g\in\Cs_n^\Gamma} \sup_{x\in\Bc(H_\delta(n))}\norm{f(x)-g(x)}\le \frac{\Gamma^2}n\,.\] 
The definition of this cover corresponds to one used in \cite{russo2013eluder} with a domain restricted to lie in the high-probability region of the state process instead of the whole domain. This ensures the cover remains finite for all $n\in\NN^*$.

For any $\delta\in(0,1)$, $n\in\NN^*$, and $f\in\Fs$ let us define the quantities
\begin{align*}
L^1_n(\delta)&:= \log((\Gamma^2 + 8L^2(1+\sup_{i\le n}\snorm{X_i}^2_2))\Ns_n^\Gamma\delta^{-1})\,,\\
L^0_n(\delta)&:= L^1_n(6\delta \pi^{-2}n^{-2})\,,\\
C^1_n(f)&:= \Gamma^2 + \norm{f-f^*}_n^2\\
C^2_n(f)&:=\sup_{i\le n}\snorm{f(X_i)-\hat f_n(X_i)}\,,
\end{align*} 
and the event
\begin{align}
    \Ec^0_n(\delta):=\Bigg\{ &\norm{\hat f_n -f^*}_n\le 2\Gamma \sqrt{L_n^1\left(\frac{3\delta}{\pi^2n^2}\right)} \notag\\
    &+2\sqrt{\Gamma^2+ 2\Gamma\left(n\sup_{g\in\Cs_n^\Gamma} C^2_n(g)\sqrt{2\log\left(  \frac{4\pi^2n^3}{3\delta}\right)}+\! \sqrt{2n\sup_{g\in\Cs_n^\Gamma}C^2_n(g)L^1_n\left(\frac{3\delta}{\pi^2n^2}\right)}\right)}\Bigg\}\!.\label{eq: def event Ec 0 conf set}
\end{align}

Building upon the proof method of \cite{russo2013eluder}, the cornerstone of this section is \cref{lemma: eluder learning big bound lemma}, which shows that, with high-probability, $f^*$ is contained in all the elements of a sequence of confidence sets, each centred at $\hat f_n$ in the $\norm{\cdot}_n$ norm.   

\begin{restatable*}{lemma}{LemmaEluderConfSetProba}\label{lemma: eluder learning big bound lemma}
    Under \cref{asmp: eluder}, for $n\in\NN^*$ and $\delta\in(0,1)$, we have
    \[ \PP\left(\bigcap_{n\in\NN^*} \Ec^0_n(\delta)\right)\ge 1-\delta\,.\]
\end{restatable*}

We begin the proof of \cref{lemma: eluder learning big bound lemma} by giving the concentration inequality of \cref{lemma: eluder self normalised bd on IP}.

\begin{lemma}\label{lemma: eluder self normalised bd on IP}
    Under \cref{asmp: eluder}, for all $n\in\NN^*$, $\delta\in(0,1)$, and $f\in\Fs$
    \[\PP\left( \abs{\langle \xi\vert f-f^* \rangle_n} \ge \Gamma\sqrt{2(\Gamma^2+\norm{f-f^*}_n)\log\left( \frac{\Gamma^2 + \norm{f-f^*}_n}{\delta}\right)}  \right)\le \delta\,.\]
\end{lemma}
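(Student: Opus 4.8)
The plan is to recognise $\langle\xi\vert f-f^*\rangle_n=\sum_{i=0}^{n-1}\langle\xi_i\vert f(X_i)-f^*(X_i)\rangle$ as a scalar self-normalised martingale and to control it by the method of mixtures. Set $g_i:=f(X_i)-f^*(X_i)$. Each $X_i$ is measurable with respect to the noise revealed strictly before $\xi_i$, so $(g_i)$ is predictable, i.e.\ $g_i$ is $\Hc_{i-1}$-measurable, while $\xi_i$ is $\Hc_{i-1}$-conditionally $\Gamma^2$-sub-Gaussian by \cref{asmp: eluder}. Choosing $\lambda=s\,g_i$ in the vector sub-Gaussian inequality shows that the scalar increment $\langle\xi_i\vert g_i\rangle$ is $\Hc_{i-1}$-conditionally $\Gamma^2\norm{g_i}^2$-sub-Gaussian; in particular it is centred, so $M_n:=\langle\xi\vert f-f^*\rangle_n$ is a martingale whose conditional quadratic-variation proxy is the \emph{random} quantity $\Gamma^2\norm{f-f^*}_n^2$. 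It is exactly this randomness of the normaliser that rules out a direct Chernoff bound and calls for a self-normalised argument.

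First I would introduce, for each $s\in\RR$, the exponential process
\[ E_n^s:=\exp\!\left(s\,M_n-\tfrac{s^2\Gamma^2}{2}\norm{f-f^*}_n^2\right),\qquad E_0^s=1, \]
and verify that it is a nonnegative supermartingale: conditioning one step back and invoking the conditional $\Gamma^2\norm{g_{n-1}}^2$-sub-Gaussianity of the last increment gives $\EE[E_n^s\mid\Hc_{n-2}]\le E_{n-1}^s$. The only bookkeeping subtlety is the one-step index shift between the noise filtration $(\Hc_i)$ and the predictable directions $(g_i)$; once aligned, the supermartingale property is immediate. I would then average over $s$ against a centred Gaussian prior $s\sim\Nc(0,\tau^2)$, forming $\bar E_n:=\int_\RR E_n^s\,\tfrac{1}{\sqrt{2\pi}\,\tau}e^{-s^2/2\tau^2}\,\de s$; by Tonelli and the per-$s$ supermartingale property, $\bar E_n$ is a nonnegative supermartingale with $\EE[\bar E_n]\le\bar E_0=1$. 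The Gaussian pseudo-maximisation integral evaluates in closed form to
\[ \bar E_n=\frac{1}{\sqrt{1+\tau^2\Gamma^2\norm{f-f^*}_n^2}}\exp\!\left(\frac{\tau^2 M_n^2}{2\bigl(1+\tau^2\Gamma^2\norm{f-f^*}_n^2\bigr)}\right). \]

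Finally I would apply Markov's inequality to $\bar E_n$ at level $1/\delta$: with probability at most $\delta$ one has $\bar E_n\ge 1/\delta$, and taking logarithms and rearranging turns this into a bound of the shape
\[ M_n^2\;\lesssim\;\Gamma^2\bigl(\Gamma^2+\norm{f-f^*}_n^2\bigr)\log\!\left(\frac{\Gamma^2+\norm{f-f^*}_n^2}{\delta}\right), \]
in which the self-normaliser is exactly $C^1_n(f)=\Gamma^2+\norm{f-f^*}_n^2$ and the precise constants are fixed by the choice of the prior variance $\tau^2$ (e.g.\ $\tau^2=\Gamma^{-4}$). Taking square roots then yields the claimed inequality. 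The single genuine difficulty is the random self-normaliser, which the method of mixtures dispatches in one stroke; the remainder is the routine supermartingale verification and the Gaussian integral.
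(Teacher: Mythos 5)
Your proposal is correct and takes essentially the same route as the paper's proof: both construct the exponential supermartingale $\exp\big(\lambda\langle\xi\vert f-f^*\rangle_n-\tfrac{\lambda^2\Gamma^2}{2}\snorm{f-f^*}_n^2\big)$ from the conditional sub-Gaussianity of the increments along predictable directions, integrate it against a centred Gaussian prior of variance $\Gamma^{-4}$ (the Laplace trick / method of mixtures), and conclude with Markov's inequality. The differences are purely cosmetic --- if anything, your closed-form evaluation of the Gaussian mixture and your handling of the one-step index shift are slightly more careful than the paper's.
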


\begin{proof}
    This proof relies on extensively studied arguments for self-normalised inequalities, but we include it for completeness because it uses non standard constants. Let us begin by fixing $f\in\Fs$. For all $n\in\NN$, let 
    \[ Z_n(f) := \langle \xi\vert f-f^*\rangle_n\,.\]
    For any $\lambda\in\RR$, let us define the process $(M_n^\lambda(f))_{n\in\NN}$ defined by
    \[M_n^\lambda(f):=\exp\left(\lambda Z_n(f)-\frac{\lambda^2\Gamma^2}{2}\norm{f-f^*}_n^{2}\right)\,.\]
    Let us show that $M_n^\lambda(f)$ is a conditional supermartingale. For any $n\in\NN$, we have
    \begin{align}
        \EE\left[ M_{n+1}^\lambda(f)\vert \Hc_n\right] &= M_n^\lambda(f)\EE\left[ \exp\big( \lambda \langle \xi_{n+1}\vert f(X_n) -f^*(X_n)\rangle_{n}\big)\bigg\vert \Hc_{n}\right] e^{- \frac{\lambda^2\Gamma^2}{2}\norm{f(X_n)-f^*(X_n)}^2_{n}}.\label{eq: supermartingale learning term}
    \end{align}
    By the Cauchy-Schwartz inequality 
    \[\abs{\langle\xi_{n}\vert f(X_n)-f^*(X_n)\rangle_{n}}\le \norm{\xi_{n}}_{n}\norm{f(X_n)-f^*(X_n)}_{n}\]
    and thus, since $\xi_{n}$ is conditionally $\Gamma^2$-subgaussian with variance $\Gamma^2$, $\norm{\xi_{n}}$ is $\Gamma^2$-subgaussian. Therefore
    \[\EE\left[ \exp\big( \lambda \langle \xi_{n}\vert f(X_n) -f^*(X_n)\rangle_{n} - \frac{\lambda^2\Gamma^2}{2}\norm{f(X_n)-f^*(X_n)}^2_{n} \big)\vert \Hc_{n}\right] \le 1\]
    and thus, by \eqref{eq: supermartingale learning term}, $M_n^\lambda(f)$ is a supermartingale. By definition of  $\langle\cdot\vert\cdot\rangle_0$ and $\norm{\cdot}_0$, $M_0^\lambda(f)=1$, so that $\EE[M_n^\lambda(f)]\le 1$ for all $n\in\NN$.

    We now perform a Laplace trick. Let $\Phi$ be the Gaussian measure of mean $0$ and variance $\Gamma^{-4}$ on $\RR$, and let us define the process $(M_n(f))_{n\in\NN}$ by
    \begin{align*}
        M_n(f) :&= \int M^\lambda_n(f) \Phi(\de \lambda)\\
        &= \int \exp\left(\lambda Z_n(f)-\frac{\lambda^2\Gamma^2}2\norm{f-f^*}_n^2\right)\Phi(\de \lambda)\\
        &= \frac{1}{\Gamma^2 + \norm{f-f^*}_n^2}\exp\left\{\frac{Z_n^2(f)}{2\Gamma^2(\Gamma^2+\norm{f-f^*}_n^2)}\right\}\,.
    \end{align*}

    By Markov's inequality, $\PP(M_n(f)\ge \delta^{-1})\le \delta$, and thus
    \begin{align*}
         \PP\left( Z_n(f) \ge \Gamma\sqrt{2(\Gamma^2+\norm{f-f^*}_n^2)\log\left( \frac{\Gamma^2 + \norm{f-f^*}_n^2}{\delta}\right)}\right)\le \delta\,.
    \end{align*}
\end{proof}

We will turn to the proof of \cref{lemma: eluder learning big bound lemma}.
Recall \eqref{eq: def event Ec 0 conf set}, which defined for $\delta\in(0,1)$ and $n\in\NN^*$, the event
\begin{align*}
    \Ec^0_n(\delta):=\Bigg\{& \norm{\hat f_n -f^*}_n\le 2\Gamma \sqrt{L_n^1\left(\frac{3\delta}{\pi^2n^2}\right)}\, \\
    &\!+2\sqrt{\Gamma^2+ 2\Gamma\left(n\sup_{g\in\Cs_n^\Gamma} C^2_n(g)\sqrt{2\log\left(  \frac{4\pi^2n^3}{3\delta}\right)}+ \sqrt{2n\sup_{g\in\Cs_n^\Gamma}C^2_n(g)L^1_n\left(\frac{3\delta}{\pi^2n^2}\right)}\right)}
    \Bigg\}\!.
\end{align*}

\LemmaEluderConfSetProba

\begin{proof}
    The proof builds on elements of \cite{russo2013eluder}. We begin by giving two small auxiliary results which we will use.
    \begin{enumerate}
    \item[i.] Let $n\in\NN^*$, and $\delta\in(0,1)$,  by a union bound over the family of conditionally sub-Gaussian random variables $(\norm{\xi_i})_{i\in[n]}$, we have
    \begin{align}
        \PP\left(\sup_{i\le n}\norm{\xi_i}\ge \Gamma\sqrt{2\log\left(\frac{2n}{\delta}\right)}\right) \le \delta\label{eq: eluder bound subgaussian sup}
    \end{align}
    \item[ii.] For any $f\in\Fs$, and $n\in\NN^*$ we have 
    \begin{align}
        \norm{f^*-Y}^2_n -\norm{f-Y}^2_n &= \langle f^*-Y \vert f^*-Y \rangle_n -\langle f-f^*+f^*\!-Y \vert f-f^*+f^*\!-Y\rangle_n \notag\\
        &=  \langle f^*-Y \vert f^*-Y \rangle_n -\langle f-f^*\vert f-f^*\rangle_n \notag\\
        &\qquad+2\langle Y-f^* \vert f-f^*\rangle_n -\langle Y-f^*\vert Y-f^* \rangle_n\notag\\
        &= -\norm{f-f^*}_n^2 + 2\langle \xi\vert f-f^*\rangle_n\,.\label{eq: russo square decomposition}
    \end{align}
    Applying \eqref{eq: russo square decomposition} with $f:=\hat f_n$, the $n$-point non-linear least-square fit, leads to a non positive left hand side and thus
    \[ \norm{\hat f_n -f^*}^2_n\le 2\abs{\langle\xi \vert f-f^* \rangle_n}\,.\] 
    At the same time, for all $n\in\NN^*$, by definition of $\Cs_n^\Gamma$, it holds that for all $g\in\Cs_n^\Gamma$
    \begin{align}
        \norm{\hat f_{n} -f^*}^2_n &\le 2\abs{\langle\xi\vert g-f^*\rangle_n} + 2\abs{\langle \xi \vert \hat f_n-g\rangle_n}\notag\\
        &\le  2\abs{\langle \xi\vert g-f^*\rangle_n} + 2n\sup_{i\le n} \norm{\xi_i}_2C^2_n(g)\,.\label{eq: eluder bound square developement}
    \end{align}
    \end{enumerate}
    Combining  \eqref{eq: eluder bound subgaussian sup} and \eqref{eq: eluder bound square developement}, we obtain, for all $\delta\in(0,1)$, $n\in\NN^*$, and $g\in\Cs_n^\Gamma$,  that
    \begin{align}
    \PP\left(\norm{ \hat f_n -f^*}^2_n \ge 2\abs{\langle\xi \vert g-f^*\rangle_n} + 2nC_n^2(g)\Gamma\sqrt{2\log\left(  \frac{2n}{\delta}\right)}\right)\le \delta \label{eq: eluder cover union bound} 
    \end{align}

    Let us now provide two bounds on $C^1_n(g)$ we will use. For all $n\in\NN^*$, $\delta\in(0,1)$ and $g\in\Cs_n^\Gamma$, let
    \begin{align}
        C^1_n(g) &\le \Gamma^2 + 8L^2(1+\sup_{i\le n}\norm{X_i}^2)\,. \label{eq: Eluder C2 bound 2}\\
        C^1_n(g) &\le \Gamma^2 + \norm{\hat f_n -f^*}^2_n + \norm{g-\hat f_n}^2_n\le C^1_n(\hat f_n) +nC^2_n(g)\,,\label{eq: Eluder C1 bound 1}%\\
       % L_n^1(\delta) &\ge \sqrt{\log(C^1_n(g)\Ns_n\delta^{-1})}\..
    \end{align}

Applying \cref{lemma: eluder self normalised bd on IP} for each $g\in\Cs_n^\Gamma$, by a union bound over $g\in\Cs_n^\Gamma$, we have for any $\delta_0(n)\in(0,1)$ (to be fixed at the end), that
    \begin{align}
        \delta_0(n)&\ge\PP\left(\sup_{g\in\Cs_n^\Gamma} \abs{\langle \xi \vert g- f^*\rangle_n }\ge \Gamma\sqrt{ 2\sup_{g\in\Cs_n^\Gamma}C^1_n(g)\log\left(\frac{\sup_{g\in\Cs_n^\Gamma}C^1_n(g)\Ns_n^\Gamma}{\delta_0(n)}\right) }\right)\,.\notag\\
        \intertext{Applying \eqref{eq: Eluder C2 bound 2} and \eqref{eq: Eluder C1 bound 1} this becomes }
        \delta_0(n)&\ge\PP\vast(\sup_{g\in\Cs_n^\Gamma} \abs{\langle \xi\vert g- f^*\rangle_n }\notag\\
        &\qquad\quad\ge \Gamma\sqrt{ 2(C^1_n(\hat f_n)+n\sup_{g\in\Cs_n^\Gamma}C^2_n(g))\log\left(\frac{(\Gamma^2+8L^2(1+\sup_{i\le n}\norm{X_i}^2))\Ns_n^\Gamma}{\delta_0(n)}\right) }\vast) \notag
        \intertext{and thus}
        \delta_0(n)&\ge\PP\left(\sup_{g\in\Cs_n^\Gamma} \abs{\langle \xi\vert g- f^*\rangle_n }\ge \Gamma\sqrt{2L^1_n(\delta_0(n))}\left(\sqrt{ C^1_n(\hat f_n)}+\sqrt{n\sup_{g\in\Cs_n^\Gamma}C^2_n(g)}\right)\right) \,.\label{eq: eluder sup cover CI}
    \end{align}
    Combining \eqref{eq: eluder cover union bound}  and \eqref{eq: eluder sup cover CI} by a union bound gives us
    \begin{align*}
        \delta_0(n)\ge\PP\Bigg(\norm{ \hat f_n -f^*}^2_n &\ge 2\Gamma\sqrt{2L^1_n\left(\frac{\delta_0(n)}2\right)}\left(\sqrt{ C^1_n(\hat f_n)}+\sqrt{n\sup_{g\in\Cs_n^\Gamma}C^2_n(g)}\right) \\
        &\qquad+ 2nC_n^2(g)\Gamma\sqrt{2\log\left(  \frac{4n}{\delta_0(n)}\right)} \Bigg)\,.
    \end{align*}
    For all $n\in\NN^*$, on the complement of this event (whose probability is at least $1-\delta_0(n)$) we have 
    \begin{align}
        C_n^1(\hat f_n)\le \Gamma^2 + \Gamma\sqrt{2C^1_n(\hat f_n)L_n^1(\delta_0(n)/2)} +h_n^\Gamma\,,\label{eq: eluder dim polynomial}
    \end{align}
    in which
    \begin{align*}
        h_n^\Gamma:= 2\Gamma\left(n\sup_{g\in\Cs_n^\Gamma} C^2_n(g)\sqrt{2\log\left(  \frac{4n}{\delta_0(n)}\right)}+ \sqrt{2n\sup_{g\in\Cs_n^\Gamma}C^2_n(g)L^1_n\left(\frac{\delta_0(n)}2\right)}\right)
        \,.
    \end{align*} 
    Viewing \eqref{eq: eluder dim polynomial} as a second order polynomial in $\sqrt{C^1_n(\hat f_n)}$, we obtain via its roots that
    \begin{align*}
        \sqrt{C_n^1(\hat f_n)}&\le \Gamma \sqrt{L_n^1(\delta_0(n)/2)} + \sqrt{\left(\Gamma \sqrt{L_n^1(\delta_0(n)/2)}\right)^2 +4(\Gamma^2+h_n^\Gamma)}\\
        &\le 2\Gamma \sqrt{L_n^1(\delta_0(n)/2)} +2\sqrt{\Gamma^2+h_n^\Gamma}\,.
    \end{align*}
    Since $\norm{\hat f_n -f^*}_n\le \sqrt{C^1_n(\hat f_n)}$ by definition of $C_n^1(\hat f_n)$, we have 
    \begin{align*}
        \norm{\hat f_n -f^*}_n&\le 2\sqrt{\Gamma^2+ 2\Gamma\left(n\sup_{g\in\Cs_n^\Gamma} C^2_n(g)\sqrt{2\log\left(  \frac{4n}{\delta_0(n)}\right)}+ \sqrt{2n\sup_{g\in\Cs_n^\Gamma}C^2_n(g)L^1_n\left(\frac{\delta_0(n)}2\right)}\right)}\\
        &\quad+2\Gamma \sqrt{L_n^1(\delta_0(n)/2)}\,.
    \end{align*} 
    Therefore, letting
    \begin{align*}
         \Ec^1_n(\delta):=\bigg\{ &\norm{\hat f_n -f^*}_n\le2\Gamma \sqrt{L_n^1\left(\frac\delta 2\right)}  \\
         &\qquad +2\sqrt{\Gamma^2+ 2\Gamma\left(n\sup_{g\in\Cs_n^\Gamma} C^2_n(g)\sqrt{2\log\left(  \frac{4n}{\delta}\right)}+ \sqrt{2n\sup_{g\in\Cs_n^\Gamma}C^2_n(g)L^1_n\left(\frac{\delta}2\right)}\right)}
         \bigg\},
    \end{align*}
    we have, for all $n\in\NN^*$, that $\PP(\Ec^1_n(\delta_0(n)))\ge \delta_0(n)$. Letting $\delta_0(n)= \frac{6}{\pi^2n^2}\delta$, by a union bound we obtain 
    \[\PP\left( \bigcap_{n\in\NN^*}\Ec^1_n(\delta_0(n)) \right)\ge 1 - \delta \frac{6}{\pi^2}\sum_{n=1}^\infty \frac1{n^2}= 1-\delta\,.\]
    Noting that $\Ec^0_n(\delta)=\Ec^1_n(\delta_0(n))$ for all $\delta\in(0,1)$ and $n\in\NN^*$ completes the proof.
\end{proof}

In the proof of \cref{lemma: eluder learning big bound lemma}, we used self-normalised inequalities to generalise the results of \cite{russo2013eluder} to unbounded states. We now incorporate the high probability bound of \cref{asmp: eluder} and formalise confidence sets, which will prove \cref{thm: eluder learning bound}. \Cref{thm: eluder learning bound} can then be specified for our setting by merging it with the results of \cref{app:stability} in \cref{prop: conf sets OvR}.

For $\delta\in(0,1)$, let $\beta_0\in\RR_+$ and let us define the sequence $(\Cc_n(\delta))_{n\in\NN}$ in which 
\begin{align}
    \Cc_n(\delta):= \left\{f\in\Fs: \snorm{f-\hat f_n}_n \le \beta_n\right\}\label{eq: eluder def conf sets}
\end{align}
with 
\begin{align}
    \beta_n(\delta):= \beta_0\vee 2\Gamma\left(\sqrt{1+ 2\left(\sqrt{2\Gamma\log\left(  \frac{8n}{\delta}\right)}+ \sqrt{2L^0_n\left(\frac{\delta}4\right)}\right)}+\sqrt{L_n^0\left(\frac\delta 4\right)}\right)\,.\label{eq: def beta_n appendix}
\end{align}

\begin{theorem}\label{thm: eluder learning bound}
    Under \cref{asmp: eluder}, we have for all $\delta\in(0,1)$
    \[\PP\left(\left\{\bigcap_{n\in\NN^*}\{ f^*\in\Cc_n(\delta)\}\right\}\bigcap \left\{\sup_{n\in\NN^*}\frac{\norm{X_n}}{H_\delta(n)}\le 1\right\}\right)\le \delta\]
\end{theorem}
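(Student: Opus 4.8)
The plan is to prove the asserted ``$\le\delta$'' bound by controlling the \emph{complementary} (failure) event: writing $\Gc:=\{\sup_{n\in\NN^*}\snorm{X_n}/H_\delta(n)\le 1\}$ for the state-confinement event, the complement of the displayed intersection is $\Gc^c\cup\bigcup_{n\in\NN^*}\{f^*\notin\Cc_n(\delta)\}$, and I would show this event has probability at most $\delta$ (equivalently, the displayed good event holds with probability at least $1-\delta$). By a union bound this probability is at most
\[
\PP(\Gc^c)+\PP\Big(\Gc\cap{\textstyle\bigcup_{n\in\NN^*}}\{f^*\notin\Cc_n(\delta)\}\Big),
\]
and I would allocate the budget $\delta$ as $\delta/2$ to each summand: the first via the high-probability state bound of \cref{asmp: eluder}, the second via \cref{lemma: eluder learning big bound lemma}.

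The core step is the deterministic implication that, on $\Gc$, the event $\Ec^0_n(\delta/2)$ of \cref{lemma: eluder learning big bound lemma} forces $f^*\in\Cc_n(\delta)$. Since $H_\delta$ is non-decreasing, on $\Gc$ one has $\snorm{X_i}\le H_\delta(i)\le H_\delta(n)$ for all $i\le n$, so every data point $X_i$, $i\le n$, lies in $\Bc(H_\delta(n))$. Two consequences follow. First, the covering element $g^\dagger\in\Cs_n^\Gamma$ of $\hat f_n$ satisfies $C^2_n(g^\dagger)=\sup_{i\le n}\snorm{g^\dagger(X_i)-\hat f_n(X_i)}\le \Gamma^2/n$ by the defining property of $\Cs_n^\Gamma$ on $\Bc(H_\delta(n))$, so the discretisation term entering the $\Ec^0_n$ bound is controlled. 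Second, the random quantity $\sup_{i\le n}\snorm{X_i}^2$ inside $L^1_n$ is dominated by the deterministic $H_\delta(n)^2$, so $L^1_n(\cdot)$ is replaced by its deterministic surrogate. Substituting $n\,C^2_n(g^\dagger)\le\Gamma^2$ and this surrogate into the right-hand side of the inequality defining $\Ec^0_n(\delta/2)$ collapses it, after taking square roots, into a closed form in $\Gamma$, $H_\delta(n)$, $\Ns_n^\Gamma$ and the logarithms $\log(8n/\delta)$, $L^0_n(\delta/4)$; this is exactly $\beta_n(\delta)$ of \eqref{eq: def beta_n appendix} (note $L^0_n(\delta/4)=L^1_n(3\delta/2\pi^2n^2)$ matches the $\delta/2$-level term in $\Ec^0_n$). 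Hence $\snorm{\hat f_n-f^*}_n\le\beta_n(\delta)$, i.e. $f^*\in\Cc_n(\delta)$.

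With this implication, the second summand is immediate: on $\Gc$ we have $\bigcup_n\{f^*\notin\Cc_n(\delta)\}\subseteq\bigcup_n\big(\Ec^0_n(\delta/2)\big)^c$, so the term is at most $\PP\big(\bigcup_n(\Ec^0_n(\delta/2))^c\big)\le\delta/2$ by \cref{lemma: eluder learning big bound lemma} applied at level $\delta/2$. Together with $\PP(\Gc^c)\le\delta/2$ from \cref{asmp: eluder} (at level $\delta/2$), the union bound gives the claimed $\le\delta$ bound on the failure event; the $\pi^2/6$-summable weights over $n\in\NN^*$ built into $\Ec^0_n$ and into $H_\delta$ are precisely what make the statement any-time (valid simultaneously for all $n$ with no horizon).

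I expect the main obstacle to be the algebraic verification in the middle paragraph that the reduced $\Ec^0_n(\delta/2)$ bound is dominated by $\beta_n(\delta)$. This is essentially reverse-engineering the definition \eqref{eq: def beta_n appendix}: one must track how the $\delta$-budget propagates through the arguments $\delta/4$, $8n/\delta$ and $3\delta/\pi^2n^2$ of the various logarithms, and verify that the self-normalising factor $\Gamma^2+\snorm{\hat f_n-f^*}_n^2$ together with the discretisation term $n\,C^2_n(g^\dagger)\le\Gamma^2$ recombine, after extracting roots from the quadratic in $\sqrt{C^1_n(\hat f_n)}$, into the nested-radical shape of $\beta_n(\delta)$ without constant loss. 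Getting every constant to line up — rather than the conceptual structure, which is a clean union bound plus one deterministic inclusion — is the delicate part.
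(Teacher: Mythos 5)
Your proposal is correct and takes essentially the same route as the paper's proof: on the state-confinement event the restricted cover gives $n\,C^2_n(g^\dagger)\le\Gamma^2$, so the event of \cref{lemma: eluder learning big bound lemma} deterministically collapses into $f^*\in\Cc_n(\delta)$, and a union bound with the state bound of \cref{asmp: eluder} finishes. The only (immaterial) differences are bookkeeping: the paper invokes the lemma and the state bound each at level $\delta$ and obtains $1-2\delta$ with the halving absorbed into the definition of $\beta_n$ in \eqref{eq: def beta_n appendix}, whereas you split $\delta/2+\delta/2$ up front; and your use of the specific covering element $g^\dagger$ of $\hat f_n$ (a minimum, rather than the $\sup_{g\in\Cs_n^\Gamma}$ literally written in \eqref{eq: def event Ec 0 conf set}) coincides with how the paper's own proof reads that event, via $2n\min_{g\in\Cs_n^\Gamma}C^2_n(g)\le 2\Gamma^2$.
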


\begin{proof}
    Fix $\delta\in(0,1)$, and assume $\omega\in\{\omega'\in\Omega: \sup_{n\in\NN^*}\snorm{X_n{(\omega')}}_2/H_\delta(n)\le 1\}$. In this case we have the following bound, for all $n\in\NN^*$
    \begin{align}
        %L_n^0(\delta)&\le  \sqrt{\log\left(\frac{\pi^2n^2(\Gamma^2 + 8L^2(1+H_\delta^2(n)))\Ns_n}{3\delta}\right)}\notag \\
        2n\min_{g\in\Cs_n^\Gamma}C^2_n(g) &\le 2\Gamma^2\notag
    \end{align}
    by definition of $\Cs_n^\Gamma$ as a $\Gamma^2 n^{-1}$ cover on $\Bc_2(H_\delta(n))$. Therefore, the event 
    \[\left\{\bigcap_{n\in\NN^*}\Ec_n^0(\delta)\right\}\bigcap\left\{\sup_{n\in\NN^*}\frac{\snorm{X_n}_2}{H_{\delta}(n)}\le 1\right\}\]
    is contained in the event 
    \[ \Ec^0(\delta):= \left\{\bigcap_{n\in\NN^*}\left\{\norm{f^*-\hat f_n}_n\le \beta_n(2\delta)\right\}\right\}\bigcap\left\{\sup_{n\in\NN^*}\frac{\norm{X_n}_2}{H_\delta(n)}\le 1\right\} \,.\]
    By \cref{lemma: eluder learning big bound lemma}, \cref{asmp: eluder}, and a union bound,
    $\PP\left(\Ec^0(\delta) \right)\ge 1-2\delta$, and we obtain the result by \eqref{eq: eluder def conf sets} and \eqref{eq: def beta_n appendix}, i.e. by definition of $\Cc_n(\delta)$.
\end{proof}

\ConfSetsVanRoy*

\begin{proof}
    The proof follows by applying \cref{thm: eluder learning bound} to this setting. Where $(X_i)_{i\in\NN}:= ((X_{\tau_i}^{\varpi,\theta^*},\varpi_{\tau_i}))_{i\in\NN}$, $(Y_i)_{i\in\NN}:=(X_{\tau_{i+1}}^{\varpi,\theta^*}-X_{\tau_{i}}^{\varpi,\theta^*})_{i\in\NN}$, $\Fs:=\Fs_\Theta$ and with $(\xi_{n+1})_{n\in\NN}$ and $(\beta_{n}(\delta))_{n\in\NN^*}$ as defined in \cref{sec: prelim} and \eqref{eq: def beta_n main text} respectively. This sets $\Gamma=\snorm{\Sigma}_\op=\ve^{\frac12}\snorm{\bar\Sigma}_\op$. The only subtlety is that the process $X^{\varpi,\theta^*}$ is measured at random times, but since these times are independent of anything else, and the process is almost surely constant between them, they do not affect the proof.
\end{proof}

\subsection{Widths of confidence sets}\label{app:widths}

In \cref{app: conf sets}, we showed how to design confidence sets along a trajectory of $X^{\alpha,\theta}$ for learning $\mu$ by using NLLS to minimise a fit error of the form
\[ \sum_{n=\deb}^{N} \norm{\mu_1(X_{\tau_n}^{\alpha,\theta^*},\alpha_{\tau_n}) -\mu_2(X_{\tau_n}^{\alpha,\theta^*},\alpha_{\tau_n}) }\,,\]
for $(\mu_1,\mu_2)\in\Cc_N(\delta)$ and $N\in\NN^*$. 
When analysing the regret of such a learning algorithm this is not sufficient: instead of the fit error, we need to control a prediction error of the form 
\[ \sum_{n=\deb}^{N} \norm{\mu_{\theta_n}(X_{\tau_n}^{\alpha,\theta^*},\alpha_{\tau_n}) -\mu_{\theta^*}(X_{\tau_n}^{\alpha,\theta^*},\alpha_{\tau_n}) }\,,\]
for $(\mu_{\theta_n})_{n\in\NN}\subset\Fs_\theta$ such that $\mu_{\theta_n}\in\Cc_n(\delta)$ for all $n\in\NN$. The difference is that $\mu_{\theta_n}$ changes over time, so that the sum counts the errors in predicting the next state made by the sequence $(\mu_{\theta_n})_{n\in\NN}$.

In fact, since we will want to implement lazy-updates, we will need a more general result where the $\mu_{\theta_n}$ are not all in their respective $\Cc_n(\delta)$ but rather are from a piece-wise constant sequence with $\mu_{\theta_n}:=\mu_{\theta_{k(n)}}\in\Cc_{k(n)}(\delta)$, where $k(n)\le n$ for all $n\in\NN$. 
Therefore, as in \cref{app: conf sets}, we begin by showing a general result in the learning framework of \cite{russo2013eluder} (\cref{prop: conf set width generic}), then apply it to our setting to prove \cref{cor: conf set width no lazy updates}. Using the notation of \cref{app: conf sets}, let 
$\Fs$ be a function class of functions from $\RR^d\to\RR^d$, and recall the arbitrary sequence $(X_n)_{n\in\NN}\subset\RR^d$.

%In \cref{app: conf sets}, we have established how to design confidence sets for learning $\mu$ along a trajectory of $X^{\alpha,\theta}$. In continuing to extend the work of \citeauthor{russo2013eluder}, we need to extend this result to a result bounding the online prediction error of the confidence set, i.e.
%\[ \sum_{n=0}^{n} \norm{\mu_1(X_{\tau_n}^{\alpha,\theta^*},\alpha_{\tau_n}) -\mu_2(X_{\tau_n}^{\alpha,\theta^*},\alpha_{\tau_n}) }_2\,,\]
%for $\mu_1,\mu_2\in \Cc_{n}(\delta)$ and any $t\ge 0$. In fact, we will need a slightly subtler result later, in which we deviate from the sequence of confidence sets $(\Cc_n(\delta))_{n\in\NN}$, and we therefore will prove the general result \cref{prop: conf set width generic}.

% Therefore, as in \cref{app: conf sets}, we begin by showing some general results in the learning framework of \cite{russo2013eluder}. Using the notation of \cref{app: conf sets}, let 
% $\Fs$ be a function class, and consider sequences $(X_i)_{i\in\NN}$ and $(Y_i)_{i\in\NN}$. The eluder dimension \ltodo{generalities? definitions?} ...

The $\epsilon$-eluder dimension of a function class $\Fs$, for $\epsilon\in\RR_+$, introduced in \cite{russo2013eluder} is a notion of dimension which is perfectly tailored to converting fit errors into prediction errors. We defer to \cite{russo2013eluder} for its technical definition.
Unlike \cite{russo2013eluder}, we must adapt our eluder dimension to work with unbounded functions on unbounded processes. Failing to do so would lead our results to be largely vacuous since the eluder dimension of $\Fs$ might be infinite for any $\epsilon$. 

We work with a modified eluder dimension, which takes three arguments: a function class $\Fs$ whose elements have for domain a set $\Xc\subset\RR^d$; a set $S\subset\Xc$; and $\epsilon\in\RR_+$. Our modified eluder dimension is the $\epsilon$-eluder dimension of $\{f\vert_S: f\in \Fs\}$, the class containing the restrictions to $S$ of elements of $\Fs$, which we denote by $\dime^S(\Fs,\epsilon)$. In this way, the eluder dimension of \cite{russo2013eluder} is $\dime^\Xc(\Fs,\epsilon)$. For $n\in\NN^*$, let $B_n:=\Bc_2(\sup_{i\in[n]}\norm{X_i})$ and, for any $u\in\RR_+$, let us define the sequence $(\mathrm{d}_{\mathrm{E},n}^{\Fs}(u))_{n\in\NN^*}$, in which 
\[\mathrm{d}_{\mathrm{E},n}^{\Fs}(u):=\dime^{B_n}\left(\Fs, \frac {2u}{\sqrt{n}}\right)\]
for all $n\in\NN^*$ and $u\in\RR_+$.

\begin{restatable}{proposition}{ConfSetWidthGeneric}\label{prop: conf set width generic}
    Let $(\tilde\beta_i)_{i\in\NN}$ be a non-decreasing positive real-valued sequence, $(\tilde f_i)_{i\in\NN}$, and $(\Fs_{i})_{i\in\NN}$ be a sequence of subsets of $\Fs$ of the form $\Fs_{i}:=\{f\in\Fs : \snorm{f-\tilde f_i}_i\le \tilde\beta_i\}$. Then, for any $n\in\NN$, we have
    \begin{align}
        \sum_{i=\deb}^n \sup_{(f,f')\in\Fs_{n}^2} \norm{f(X_i) - f'(X_i) } &\le 2\tilde\beta_n\sqrt{\mathrm{d}_{\mathrm{E},n}^{\Fs}(\tilde\beta_0)n} +\mathrm{d}_{\mathrm{E},n}^{\Fs}(\tilde\beta_0)\sup_{i\in[n]}\norm{X_i}\,,\label{eq: generic conf set width order 1}
        \intertext{and}
        \sum_{i=\deb}^n  \sup_{(f,f')\in\Fs_{n}^2} \norm{f(X_i) - f'(X_i) }^2 &\le 4\tilde\beta_n^2\mathrm{d}_{\mathrm{E},n}^{\Fs}(\tilde\beta_0)\left(3 + \log\left(\frac{n\sup_{i\in[n]}\norm{X_i}}{16\tilde\beta_n^4(\mathrm{d}_{\mathrm{E},n}^{\Fs}(\tilde\beta_0))^2}\right)\right)\notag\\
        &\qquad + 2\mathrm{d}_{\mathrm{E},n}^{\Fs}(1+2\tilde\beta_n^2\mathrm{d}_{\mathrm{E},n}^{\Fs}(\tilde\beta_0))(1+\sup_{i\in[n]}\norm{X_i})\,.\label{eq: generic conf set width order 2}
    \end{align}
\end{restatable}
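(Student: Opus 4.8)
The plan is to follow the width-summation strategy of Russo and Van Roy, adapted to the class restricted to the data-dependent ball $B_n=\Bc_2(\sup_{i\in[n]}\snorm{X_i})$. Since every $X_i$ with $i\le n$ lies in $B_n$, the widths $w(x):=\sup_{(f,f')\in\Fs_n^2}\snorm{f(x)-f'(x)}$ evaluated at the $X_i$ depend on $\Fs$ only through its restriction to $B_n$, so the relevant notion of dimension is $\dime^{B_n}$ (taken in the vector-valued sense of the cited works, with Euclidean norms). Two elementary observations set up the argument: first, by the triangle inequality the confidence set $\Fs_n=\{f:\snorm{f-\tilde f_n}_n\le\tilde\beta_n\}$ has $\snorm{\cdot}_n$-diameter at most $2\tilde\beta_n$; second, each individual width admits a uniform control, either $w(X_i)\le\snorm{f-f'}_n\le 2\tilde\beta_n$ from the diameter, or via the linear-growth bound of \cref{asmp: eluder}, $w(X_i)\lesssim 1+\snorm{X_i}\le 1+\sup_{j\in[n]}\snorm{X_j}$, which is what produces the $\sup_{i\in[n]}\snorm{X_i}$ factors in the stated bounds.

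The heart of the proof is a counting lemma bounding, for each threshold $\epsilon>0$, the number of large widths
\[N(\epsilon):=\#\{i\in[n]:w(X_i)>\epsilon\}\le\Big(\tfrac{4\tilde\beta_n^2}{\epsilon^2}+1\Big)\dime^{B_n}(\Fs,\epsilon).\]
I would prove this by the greedy block-decomposition: list the large-width indices in order and assign each to the first block in which it is $\epsilon$-independent of the indices already present. Each block is then an $\epsilon$-independent sequence, so has length at most $\dime^{B_n}(\Fs,\epsilon)$. To bound the number of blocks, observe that for an index $i$ with $w(X_i)>\epsilon$ the witnessing pair $(f,f')\in\Fs_n^2$ disagrees by more than $\epsilon$ at $X_i$; for every earlier block on which $i$ is $\epsilon$-dependent, this same pair must disagree by more than $\epsilon$, hence by more than $\epsilon^2$ in squared norm. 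Since $\snorm{f-f'}_n^2\le(2\tilde\beta_n)^2$ caps the total squared disagreement over disjoint blocks, $i$ can be dependent on fewer than $4\tilde\beta_n^2/\epsilon^2$ preceding blocks, so the number of blocks is at most $4\tilde\beta_n^2/\epsilon^2+1$. Monotonicity of the eluder dimension in $\epsilon$ then gives $\dime^{B_n}(\Fs,\epsilon)\le\mathrm{d}_{\mathrm{E},n}^{\Fs}(\tilde\beta_0)$ for all $\epsilon\ge 2\tilde\beta_0/\sqrt n$.

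For the first-order bound I would sort the widths $w_{(1)}\ge\dots\ge w_{(n)}$ and set $d_0:=\mathrm{d}_{\mathrm{E},n}^{\Fs}(\tilde\beta_0)$. Applying the counting lemma at $\epsilon$ just below $w_{(k)}$ gives $k\le(4\tilde\beta_n^2/w_{(k)}^2+1)d_0$ whenever $w_{(k)}>2\tilde\beta_0/\sqrt n$, that is $w_{(k)}\le 2\tilde\beta_n\sqrt{d_0/(k-d_0)}$ for $k>d_0$. I then bound the $d_0$ largest widths by the uniform control, giving the $\mathrm{d}_{\mathrm{E},n}^{\Fs}(\tilde\beta_0)\sup_{i\in[n]}\snorm{X_i}$ term, and sum the harmonic tail $\sum_{k>d_0}2\tilde\beta_n\sqrt{d_0/(k-d_0)}\lesssim\tilde\beta_n\sqrt{d_0 n}$ (the widths below $2\tilde\beta_0/\sqrt n$ contribute at most $2\tilde\beta_0\sqrt n$, absorbed into this term), yielding $2\tilde\beta_n\sqrt{\mathrm{d}_{\mathrm{E},n}^{\Fs}(\tilde\beta_0)\,n}$.

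The second-order bound uses the same template but squares the widths: the tail $\sum_{k>d_0}w_{(k)}^2\le\sum_{k>d_0}4\tilde\beta_n^2 d_0/(k-d_0)$ is a harmonic sum which integrates to the logarithmic factor $4\tilde\beta_n^2\mathrm{d}_{\mathrm{E},n}^{\Fs}(\tilde\beta_0)\big(3+\log(\cdot)\big)$, while the largest widths are split off at the coarser threshold $\epsilon_1:=2(1+2\tilde\beta_n^2 d_0)/\sqrt n$, whose count is controlled by $\dime^{B_n}(\Fs,\epsilon_1)=\mathrm{d}_{\mathrm{E},n}^{\Fs}(1+2\tilde\beta_n^2 d_0)$ and whose squared values are bounded by the uniform $(1+\sup_{i\in[n]}\snorm{X_i})$ control, producing the remaining term. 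The main obstacle is the counting lemma in the unbounded setting: the classical version is stated for uniformly bounded classes on a fixed domain, and making it work here forces me to (i) restrict to the random ball $B_n$ so the eluder dimension stays finite, and (ii) replace the global value-range bound by the confidence-set diameter $2\tilde\beta_n$ in the block count, while tracking the scales $2\tilde\beta_0/\sqrt n$ and $\epsilon_1$ carefully so that the eluder dimension appears with exactly the arguments $\tilde\beta_0$ and $1+2\tilde\beta_n^2 d_0$, and so that the logarithmic constant comes out as in the statement.
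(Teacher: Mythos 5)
Your overall architecture is the same as the paper's: a counting lemma for the number of widths exceeding a threshold $\epsilon$, proved by the greedy block decomposition with the confidence-set diameter $2\tilde\beta_n$ replacing the global range bound and the domain restricted to $B_n$ (this is exactly \cref{lemma: eluder dimension russo}, which the paper obtains by modifying \citep[Prop.~3]{russo2013eluder}), followed by a deterministic conversion of exceedance counts into summed widths. The conversion is where you genuinely diverge: you use the classical sorted-widths/harmonic-sum argument of Russo and Van Roy, while the paper proves a standalone functional inequality (\cref{lemma: widths functional inequality}) by a layer-cake integration of the counting bound with an optimised threshold, and then substitutes $\zeta_n^\epsilon=4\tilde\beta_n^2\mathrm{d}_{\mathrm{E},n}^{\Fs}(\tilde\beta_0)$, $\chi^\epsilon=\mathrm{d}_{\mathrm{E},n}^{\Fs}(\tilde\beta_0)$. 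For the first-order bound \eqref{eq: generic conf set width order 1} your route is sound and gives the stated shape; neither your constants (roughly $6\tilde\beta_n\sqrt{\cdot}$, and $2L(1+\sup_i\snorm{X_i})$ rather than $\sup_i\snorm{X_i}$ for the capped widths) nor the paper's own substitution (which yields $4\tilde\beta_n\sqrt{\cdot}$) reproduce the literal factors in the statement, so that looseness is shared and not a defect of your approach.

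The genuine gap is in your second-order argument. You read $2\mathrm{d}_{\mathrm{E},n}^{\Fs}(1+2\tilde\beta_n^2\mathrm{d}_{\mathrm{E},n}^{\Fs}(\tilde\beta_0))(1+\sup_{i}\snorm{X_i})$ as an eluder dimension evaluated at the coarser scale $\epsilon_1:=2(1+2\tilde\beta_n^2 d_0)/\sqrt n$, where $d_0:=\mathrm{d}_{\mathrm{E},n}^{\Fs}(\tilde\beta_0)$, and you split off the widths exceeding $\epsilon_1$, asserting that their count is controlled by $\dime^{B_n}(\Fs,\epsilon_1)$. The counting lemma does not give this: it gives at most $(4\tilde\beta_n^2/\epsilon_1^2+1)\dime^{B_n}(\Fs,\epsilon_1)$ exceedances, and since $\epsilon_1=O(1/\sqrt n)$ the prefactor is of order $n$ — essentially all $n$ widths can exceed a threshold of order $n^{-1/2}$ — so pairing this count with the uniform cap on squared widths produces a term linear in $n$, which is vacuous. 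The notation misled you: in the paper that term is a \emph{product}, namely $\chi^\epsilon(2+\zeta_n^\epsilon)(1+\sup_i x_i^2)=2\,\mathrm{d}_{\mathrm{E},n}^{\Fs}(\tilde\beta_0)\,(1+2\tilde\beta_n^2\mathrm{d}_{\mathrm{E},n}^{\Fs}(\tilde\beta_0))\,(1+\sup_i x_i^2)$ coming straight out of \eqref{eq: order 2 function ineq}; the first argument ``$(\tilde\beta_0)$'' is dropped typographically in the statement, as one can check against the instantiation \eqref{eq: conf set width order 2}, where the same quantity reads $2\mathrm{d}_{\mathrm{E},N_t}(1+2\beta_{N_t}^2\mathrm{d}_{\mathrm{E},N_t})(1+\sup_{s\le t}\snorm{X_s^{\alpha,\theta}}^2)$. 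Under the correct reading, no coarser-scale dimension is needed and your own template already closes the proof: cap the top $d_0$ sorted squared widths by the uniform bound, use $w_{(k)}^2\le 4\tilde\beta_n^2 d_0/(k-d_0)$ beyond, and let the crossover between the cap and the harmonic bound supply the logarithmic factor — keeping the square on the cap (squared widths carry $(1+\sup_i\snorm{X_i})^2$, consistent with the $1+\sup_i x_i^2$ in \cref{lemma: widths functional inequality}).
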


To prove \cref{prop: conf set width generic}, the key result of \cite{russo2013eluder} we leverage is \cref{lemma: eluder dimension russo} which we combine with  two functional inequalities given in \cref{lemma: widths functional inequality}.

For a function class $\Fs$ with domain $\Xc\subset\RR^d$, and any $x\in\Xc$, let us define
\[ \Lambda(\Fs; x) = \sup_{(f_1,f_2)\in\Fs^2}\norm{f_1(x)-f_2(x)}\,.\]
The quantity $\Lambda(\Fs,x)$ is the maximal prediction gap at $x$ between two functions in $\Fs$. Bounding the prediction error along $(X_i)_{i\in\NN}$ of a sequence of function classes $(\Fs_i)_{i\in\NN}\subset\Fs$ means bounding $\sum_{i=1}^n\Lambda(\Fs_i,X_i)$ in terms of $n\in\NN$. 

\begin{restatable}{lemma}{BoundedSumWidths}[{\citep[Prop.3]{russo2013eluder}}]\label{lemma: eluder dimension russo}
    Let $(\tilde f_i)_{i\in\NN}$ be a sequence of elements of $\Fs$, $(\Fs_{i})_{i\in\NN}$ be a sequence of subsets of $\Fs$ of the form $\Fs_{i}:=\{f\in\Fs : \snorm{f-\tilde f_i}_i\le \tilde\beta_i\}$.
    For any $\epsilon\in(0,1)$ and $n\in\NN$, one has  
    \[ \sum_{i=1}^n \1_{\{\Lambda(\Fs_{i};X_i)>\epsilon\}} \le \left(\frac{4\tilde\beta_n^2}{\epsilon^2}+1\right)\dime^{B_n}\left(\Fs,\epsilon\right)\,.\]
\end{restatable}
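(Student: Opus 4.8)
The plan is to adapt the argument of \cite[Prop.~3]{russo2013eluder} to the restricted eluder dimension $\dime^{B_n}$, proceeding in two stages: first I would bound, for each round of large width, the number of disjoint past subsequences on which the current point can be $\epsilon$-dependent; then I would run a greedy bucketing argument to convert this into a bound on the total number of large-width rounds. Throughout I work with the notion of $\epsilon$-dependence underlying the eluder dimension: a point $x$ is $\epsilon$-dependent on a subsequence $S$ of past points (with respect to $\Fs$) when every pair $f,f'\in\Fs$ with $\sum_{y\in S}\snorm{f(y)-f'(y)}^2\le\epsilon^2$ also satisfies $\snorm{f(x)-f'(x)}\le\epsilon$, and $\epsilon$-independent otherwise. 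Since all of $X_1,\dots,X_n$ lie in $B_n=\Bc_2(\sup_{i\in[n]}\norm{X_i})$, every dependence or independence relation among them can be read off from the restricted class $\{f\vert_{B_n}\}_{f\in\Fs}$, which is precisely why $\dime^{B_n}$ is the relevant quantity.

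For the first stage I would fix a round $i$ with $\Lambda(\Fs_i;X_i)>\epsilon$. By definition of the width there is a pair $f,f'\in\Fs_i$ with $\snorm{f(X_i)-f'(X_i)}>\epsilon$; the triangle inequality, membership in $\Fs_i$, and the monotonicity $\tilde\beta_i\le\tilde\beta_n$ then yield the key budget $\snorm{f-f'}_i^2\le(2\tilde\beta_n)^2=4\tilde\beta_n^2$. If $X_i$ were $\epsilon$-dependent on disjoint subsequences $S_1,\dots,S_m$ of $(X_1,\dots,X_{i-1})$, the contrapositive of the dependence definition, applied to this very pair, would force $\sum_{y\in S_k}\snorm{f(y)-f'(y)}^2>\epsilon^2$ for each $k$; summing over the disjoint $S_k$ gives $m\epsilon^2<\snorm{f-f'}_i^2\le4\tilde\beta_n^2$, hence $m<4\tilde\beta_n^2/\epsilon^2$. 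This is the crucial counting inequality: on a large-width round the current point is $\epsilon$-dependent on fewer than $4\tilde\beta_n^2/\epsilon^2$ disjoint past subsequences.

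For the second stage I would enumerate the rounds $t_1<\dots<t_N$ at which the width exceeds $\epsilon$, where $N:=\sum_{i=1}^n\1_{\{\Lambda(\Fs_i;X_i)>\epsilon\}}$, and process them in order, assigning each $X_{t_j}$ to an existing bucket on whose current contents it is $\epsilon$-\emph{independent}, and opening a new bucket only when $X_{t_j}$ is $\epsilon$-dependent on all current buckets. By construction each bucket is a sequence in which every element is $\epsilon$-independent of its predecessors, so (taking $\epsilon'=\epsilon$ in the definition of the eluder dimension) its length is at most $\dime^{B_n}(\Fs,\epsilon)$. A new bucket is opened for $X_{t_j}$ only when $X_{t_j}$ is dependent on every current bucket, and these are disjoint past subsequences, so by the first stage this can happen only while the number of buckets is strictly below $4\tilde\beta_n^2/\epsilon^2$; thus at most $\lceil4\tilde\beta_n^2/\epsilon^2\rceil\le4\tilde\beta_n^2/\epsilon^2+1$ buckets are ever created. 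Multiplying the bucket count by the per-bucket length bound gives $N\le(4\tilde\beta_n^2/\epsilon^2+1)\dime^{B_n}(\Fs,\epsilon)$, which is the claim.

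The main obstacle is the bookkeeping in the second stage: one must check that the greedy assignment keeps the buckets genuinely disjoint and each internally $\epsilon$-independent (so the eluder bound applies per bucket), while invoking the first-stage counting inequality on exactly the collection of current buckets to cap their number. The only genuinely new ingredient relative to \cite{russo2013eluder} is the systematic restriction to $\{f\vert_{B_n}\}$ and the use of the monotonicity of $(\tilde\beta_i)_{i\in\NN}$ to replace each $\tilde\beta_i$ by $\tilde\beta_n$; the remaining steps are a faithful transcription of their proof.
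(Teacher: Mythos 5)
Your proposal is correct and follows essentially the same route as the paper: the paper's proof of \cref{lemma: eluder dimension russo} simply defers to \citep[Prop.~3]{russo2013eluder}, noting that the only modification is the uniform bound on $\snorm{f-f'}_n$ for $(f,f')\in\Fs_n^2$ (via the triangle inequality and monotonicity of $(\tilde\beta_i)_{i\in\NN}$), together with the restriction of all dependence relations to $B_n$ — exactly the two ingredients you identify. Your two-stage argument (the disjoint-subsequence counting bound $m<4\tilde\beta_n^2/\epsilon^2$ followed by the greedy bucketing capped by the eluder dimension) is a faithful and correctly detailed transcription of Russo--Van Roy's proof, including the observation that since all $X_i$ lie in $B_n$, $\epsilon$-(in)dependence with respect to $\Fs$ coincides with that of $\{f\vert_{B_n}\}_{f\in\Fs}$, so $\dime^{B_n}(\Fs,\epsilon)$ is indeed the quantity that bounds each bucket's length.
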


\begin{proof}
    Following the proof of \citep[Prop.3]{russo2013eluder}, the only modification involves the bound $\snorm{\overline{f} -\underline{f}}_n \le \tilde\beta_n$, for any $(\overline{f}, \underline{f})\in\Fs_{n}^2$, which holds by assumption.  
\end{proof}

\begin{lemma}\label{lemma: widths functional inequality}
    Let $(x_i)_{i\in\NN}\in\RR_+^{\NN^*}$. Assume there is a family of positive sequences $((\zeta^\epsilon_n)_{n\in\NN})_{\epsilon\in\RR_+}$ and a family of positive constants $(\chi^\epsilon)_{\epsilon\in\RR_+}$ such that, for any $n\in\NN^*$ and $\epsilon >0$,
    \begin{align}
        \sum_{i=1}^n \1_{\{x_i>\epsilon\}}\le \frac{\zeta^\epsilon_n}{\epsilon^{2}} +\chi^\epsilon\, \label{eq: asmp of widths functional inequality}
    \end{align}
    then the following two inequalities hold
    \begin{align}
        \sum_{i=1}^n x_i &\le 2\sqrt{n\zeta^\epsilon_n} + \chi^\epsilon\sup_{i\in[n]}x_i \label{eq: order 1 function ineq}\\
        \sum_{i=1}^n x_i^2&\le\zeta_n^\epsilon \left(3 + \log\left(\frac{n\sup_{i\in[n]}x_i^2}{(\zeta_n^\epsilon)^2}\right)\right) + \chi^\epsilon(2+\zeta_n^\epsilon)(1+\sup_{i\in[n]}x_i^2)\,.\label{eq: order 2 function ineq}
    \end{align}
\end{lemma}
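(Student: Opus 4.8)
The plan is to derive both inequalities from the single distributional hypothesis \eqref{eq: asmp of widths functional inequality} via the layer-cake (Cavalieri) representation of the sums, choosing the integration cut-off so as to balance the two regimes of the counting bound. Throughout I fix $n$ and $\epsilon$ and abbreviate $\zeta := \zeta_n^\epsilon$, $\chi := \chi^\epsilon$, $M := \sup_{i\in[n]} x_i$, and write $N(t) := |\{i\in[n] : x_i > t\}|$, so that the hypothesis reads $N(t) \le \zeta t^{-2} + \chi$ while trivially $N(t) \le n$.

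For \eqref{eq: order 1 function ineq} I would start from $\sum_{i=1}^n x_i = \int_0^M N(t)\,\de t$ and split the integral at $t_\star := \sqrt{\zeta/n}$. On $[0,t_\star]$ I bound $N(t) \le n$, contributing $n t_\star = \sqrt{n\zeta}$; on $[t_\star, M]$ I use $N(t) \le \zeta t^{-2} + \chi$, whose first part integrates to at most $\zeta/t_\star = \sqrt{n\zeta}$ and whose second part integrates to at most $\chi M$. Summing the three contributions gives exactly $2\sqrt{n\zeta} + \chi M$, which is \eqref{eq: order 1 function ineq}; the degenerate case $t_\star \ge M$ is handled directly by $\sum_{i=1}^n x_i \le nM \le n t_\star = \sqrt{n\zeta}$.

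For the quadratic bound \eqref{eq: order 2 function ineq} I would use the analogous identity $\sum_{i=1}^n x_i^2 = \int_0^M 2t\,N(t)\,\de t$ and split at $t_\star := \zeta/\sqrt{n}$. The head $\int_0^{t_\star} 2t n\,\de t = n t_\star^2$ is a constant independent of $M$; the tail $\int_{t_\star}^M 2t(\zeta t^{-2} + \chi)\,\de t = 2\zeta\log(M/t_\star) + \chi(M^2 - t_\star^2)$ produces the logarithmic factor, and here $2\log(M/t_\star) = \log(M^2/t_\star^2) = \log(nM^2/\zeta^2)$ is precisely the argument appearing in \eqref{eq: order 2 function ineq}. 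Collecting these contributions and consolidating the non-logarithmic terms into the additive constant $3\zeta$ and the envelope $\chi(2+\zeta)(1+M^2)$ then delivers the stated form.

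The genuine subtlety — and the step I expect to require the most care — is that the counting bound at a threshold $t$ a priori involves $\zeta_n^t$ and $\chi^t$ rather than the fixed-scale constants $\zeta = \zeta_n^\epsilon$ and $\chi = \chi^\epsilon$, so replacing them over the whole tail $[t_\star, M]$ is only legitimate when these families are monotone (non-increasing) in the scale and $t_\star \ge \epsilon$; indeed a constant sequence $x_i\equiv c$ with $\zeta_n^t = nt^2$ for $t<c$ shows the bare hypothesis is \emph{insufficient} without such monotonicity. This is exactly the behaviour of the eluder dimension in its accuracy parameter, which is how the lemma is invoked through \cref{lemma: eluder dimension russo}, but it should be made explicit. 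The remaining work is the purely mechanical bookkeeping of the additive constants (the leading $3$ and the $(2+\zeta)(1+M^2)$ envelope), where one must verify that the head term $n t_\star^2$ and the residual pieces of the tail are dominated by that envelope, together with the boundary case $t_\star \ge M$; these steps are routine once the two integrals above are in place.
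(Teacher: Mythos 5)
Your treatment of \eqref{eq: order 1 function ineq} is correct and is essentially the paper's own argument in different notation: writing $M:=\sup_{i\in[n]}x_i$, $\zeta:=\zeta_n^\epsilon$, $\chi:=\chi^\epsilon$ as you do, the paper evaluates $\sum_{i}(x_i-\epsilon)\1_{\{x_i>\epsilon\}}=\sum_i\int_\epsilon^{x_i}\1_{\{x_i>u\}}\,\de u$, bounds the counting function on $[\epsilon,M]$, adds back $n\epsilon$, and sets $\epsilon=\sqrt{\zeta/n}$ --- which is exactly your split of $\int_0^M N(t)\,\de t$ at $t_\star=\sqrt{\zeta/n}$. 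Your monotonicity caveat is also genuine and not pedantry: the paper, too, bounds $N(u)$ for every $u\ge\epsilon$ by the scale-$\epsilon$ constants, which \eqref{eq: asmp of widths functional inequality} alone does not justify; this is harmless in the intended application only because the eluder dimension is non-increasing in its accuracy argument, a fact the paper invokes explicitly in the proof of \cref{prop: conf set width generic}.

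The gap is in \eqref{eq: order 2 function ineq}, and it is not the ``routine bookkeeping'' you defer: with your split at $t_\star=\zeta/\sqrt n$ the head contributes $nt_\star^2=\zeta^2$, and $\zeta^2$ is \emph{not} dominated by $3\zeta+\chi(2+\zeta)(1+M^2)$ when $\zeta>1$ and $\chi$ is small. No bookkeeping can repair this, because the inequality as stated is false in that regime: take $x_i\equiv1$, $\zeta_n^\epsilon\equiv n$, $\chi^\epsilon\equiv\tfrac14$ (constant in $\epsilon$, so monotonicity is not the culprit); the hypothesis holds for every $n$ and $\epsilon$, yet for $n\ge e^4$ the right-hand side of \eqref{eq: order 2 function ineq} equals $n(3-\log n)+\tfrac12(2+n)<0$, while the left-hand side is $n$. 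In fairness, you have reproduced the paper's defect rather than introduced one: the paper's proof also takes $\epsilon=\zeta_n^\epsilon/\sqrt n$, which creates the same $n\epsilon^2=\zeta^2$ term (twice, in its last display), and that term is silently dropped from the final bound. What your computation (and the paper's) actually proves is obtained by splitting at $\sqrt{\zeta/n}$, as in part (i): the head is then $nt_\star^2=\zeta$ and the tail gives $2\zeta\log(M/t_\star)+\chi M^2$, so that, still under the monotonicity proviso,
\begin{align*}
\sum_{i=1}^n x_i^2\;\le\; \zeta\Big(1+\log_+\!\big(nM^2/\zeta\big)\Big)+\chi M^2\,,\qquad \log_+:=\max\{\log,0\}\,,
\end{align*}
where the truncation $\log_+$ also covers the boundary case $t_\star\ge M$. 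This has $\zeta$ rather than $\zeta^2$ inside the logarithm, which is all that the downstream uses of the lemma (\cref{prop: conf set width generic}, the lazy-update count, and the regret bound) require, since there the logarithm is absorbed into polylogarithmic factors anyway; but neither your argument nor the paper's establishes \eqref{eq: order 2 function ineq} exactly as stated.
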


\begin{proof}\hfill
\begin{enumerate}
    \item[{\rm i.}] For $\epsilon>0$, we have by \eqref{eq: asmp of widths functional inequality}
    \begin{align*}
        \sum_{i=1}^n (x_i-\epsilon)\1_{\{x_i>\epsilon\}} &= \sum_{i=1}^n\int_\epsilon^{x_i}\1_{\{x_i>u\}}\de u\\
        &\le \int_\epsilon^{\sup_{i\in[n]}x_i}\sum_{i=1}^n \1_{\{x_i>u\}}\de u\\
        &\le \int_\epsilon^{\sup_{i\in[n]}x_i} \frac{\zeta^\epsilon_n}{u^2}+\chi^\epsilon \de u\\
        &= \chi\sup_{i\in[n]}x_i -\frac{\zeta^\epsilon_n}{ \sup_{i\in[n]}x_i} - \chi^\epsilon\epsilon + \frac{\zeta^\epsilon_n}{\epsilon}\,,
    \end{align*}
    and thus 
    \begin{align}
        \sum_{i=1}^n (x_i-\epsilon)\1_{\{x_i>\epsilon\}} \le \frac{\zeta^\epsilon_n}{\epsilon} + \chi^\epsilon\sup_{i\in[n]}x_i\,. \label{eq: function ineq 1 intermediate}
    \end{align}
    Combining \eqref{eq: function ineq 1 intermediate} with  
    \[ \sum_{i=1}^n (x_i-\epsilon) \le \sum_{i=1}^n (x_i-\epsilon)\1_{\{x_i>\epsilon\}}\]
    yields
    \[ \sum_{i=1}^n x_i \le n\epsilon + \frac{\zeta^\epsilon_n}{\epsilon} + \chi^\epsilon\sup_{i\in[n]}x_i\,.\]
    Setting $\epsilon = \sqrt{\zeta^\epsilon_n/n}$ yields \eqref{eq: order 1 function ineq}.
    \item[{\rm ii.}]  To prove \eqref{eq: order 2 function ineq}, we iterate the bound \eqref{eq: function ineq 1 intermediate}
    \begin{align*}
        \sum_{i=1}^n (x_i-\epsilon)^2\1_{\{x_i>\epsilon\}} &= 2\sum_{i=1}^n \int_\ve^{x_i}(x_i-u)\1_{\{x_i>u\}}\de u\\
        &\le 2\sum_{i=1}^n\int_\epsilon^{\sup_{i\in[n]}x_i}(x_i-u)\1_{\{x_i>u\}} \de u \\
        &\le 2\int_\epsilon^{\sup_{i\in[n]}x_i} \frac{\zeta^\epsilon_n}{\epsilon} + \chi^\epsilon\sup_{i\in[n]}x_i \de u \\
        &\le 2\left(\chi(\sup_{i\in[n]}x_i^2- \sup_{i\in[n]}x_i\epsilon) + \zeta_n^\epsilon \log\left(\frac{\sup_{i\in[n]}x_i}{\epsilon}\right)\right)\\
        &\le 2\zeta_n^\epsilon \log\left(\frac{\sup_{i\in[n]}x_i}{\epsilon}\right) +  2\chi^\epsilon\sup_{i\in[n]}x_i^2\,.
    \end{align*}
        Now, by some algebraic manipulations of $\sum_{i=1}^n x_i^2$, completing the square, discarding negative terms, and using \eqref{eq: function ineq 1 intermediate} in the third step, we get
    \begin{align*}
        \sum_{i=1}^n x_i^2 &\le \sum_{i=1}^n x_i^2\1_{\{x_i>\epsilon\}} + \epsilon^2\sum_{i=1}^n\1_{\{x_i>\epsilon\}}\\
        &\le \sum_{i=1}^n (x_i-\epsilon)^2\1_{\{x_i>\epsilon\}} +2\epsilon\sum_{i=1}^nx_i\1_{\{x_i>\epsilon\}} + n\epsilon^2 \\
        &\le 2\zeta_n^\epsilon \log\left(\frac{\sup_{i\in[n]}x_i}{\epsilon}\right) +  2\chi^\epsilon\sup_{i\in[n]}x_i^2 +\epsilon\left(\frac{\zeta_n^\epsilon}{\epsilon} + \chi^\epsilon\sup_{i\in[n]}x_i + \epsilon n\right) + n\epsilon^2\,.
    \end{align*}
    Taking $\epsilon=\zeta^\epsilon_n/\sqrt{n}$ and factoring, using also $u\le 1+u^2$ for $u\in\RR$, yields
    \[\sum_{i=1}^n x_i^2\le\zeta_n^\epsilon \left(3 + \log\left(\frac{n\sup_{i\in[n]}x_i^2}{(\zeta_n^\epsilon)^2}\right)\right) + \chi^\epsilon(2+\zeta_n^\epsilon)(1+\sup_{i\in[n]}x_i^2)\,.\]
\end{enumerate}\end{proof}

\ConfSetWidthGeneric*

\begin{proof}
    The proof consists in applying \cref{lemma: widths functional inequality} to \cref{lemma: eluder dimension russo}, with
    $x_i = \Lambda(\Fs_i,X_i)$, $\zeta_n^\epsilon=4\tilde\beta^2\dime^{B_n}(\Fs,\epsilon)$ ($B_n:=\Bc_2(\sup_{i\in[n]}\norm{X_i})$), and $\chi^\epsilon= \dime^{B_n}(\Fs,\epsilon)$. When we set the value of $\epsilon$ in the proof of \cref{lemma: widths functional inequality}, $\chi^\epsilon$ becomes
    \[\dime^{B_n}\left(\Fs,\sqrt{frac{4\tilde\beta^2_n}{n}}\right)\le \dime^{B_n}\left(\Fs,\sqrt{\frac{4\tilde\beta_0^2}{n}}\right)\]
    as $(\tilde\beta_n)_{n\in\NN}$ is non-decreasing and the eluder dimension is decreasing in its third argument. An analogue remark holds for $\zeta^\epsilon_n$.
    We can thus substitute $\zeta_n^\epsilon = 4\tilde\beta^2_n\mathrm{d}_{\mathrm{E},n}^{\Fs}(\tilde\beta_0)$ and $\chi^\epsilon =\mathrm{d}_{\mathrm{E},n}^{\Fs}(\tilde\beta_0)$ in \eqref{eq: order 1 function ineq} and \eqref{eq: order 2 function ineq}, which gives the result.
\end{proof}

We now apply \cref{prop: conf set width generic} to our setting. For $n\in\NN^*$, let us recall the shorthand notation
\begin{align} 
\mathrm{d}_{\mathrm{E},n}:=\dime^{B_n}\left(\Fs_\Theta, 2\sqrt{\frac {\ve}{n}}\right) \label{eq: def eluder dim in appendix}
\end{align}
in which we extended the notation from $(X_i)_{i\in\NN}$ to $X^{\alpha,\theta}$ in the evident manner. 

 \begin{restatable}{proposition}{ConfSetWidthNoLazyUpdate}\label{prop: conf set width no lazy updates}
    Under \cref{asmp: basics,asmp: both asmp joint}, for any $(\alpha,\theta)\in\Ac\x\Theta$ and $t\in\RR_+$, any non-decreasing positive real-valued sequence $(\tilde\beta_n)_{n\in\NN}$, any $(\tilde\mu_n)_{n\in\NN}\subset\Fs_\Theta$, and any sequence $(\Fs_n)_{n\in\NN}$ of subsets of $\Fs_\theta$ of the form 
    \[ \Fs_n=\left\{\mu\in\Fs_\Theta: \sqrt{\sum_{i=0}^{n-1} \norm{\mu_n(X^{\alpha,\theta}_{\tau_{i}},\alpha_{\tau_i}) - \tilde\mu_n(X^{\alpha,\theta}_{\tau_{i}},\alpha_{\tau_i})}_2^2}\le \tilde \beta_n   \right\}\,,\]
    we have
    \begin{align}
        \sum_{n=\deb}^{N_t} \sup_{(\mu_1,\mu_2)\in\Fs_n} \norm{\mu_1(X_{\tau_n}^{\alpha,\theta},\alpha_{\tau_n}) - \mu_2(X_{\tau_n}^{\alpha,\theta},\alpha_{\tau_n})  } &\le 2\beta_{N_t}\sqrt{\mathrm{d}_{\mathrm{E},{N_t}}} +\mathrm{d}_{\mathrm{E},{N_t}}\sup_{s\le t}\norm{X_s^{\alpha,\theta}}\,,\label{eq: conf set width order 1}
    \end{align}
    and
    \begin{align}
        &\sum_{n=\deb}^{N_t} \sup_{(\mu_1,\mu_2)\in\Fs_n} \norm{\mu_1(X_{\tau_n}^{\alpha,\theta},\alpha_{\tau_n}) - \mu_2(X_{\tau_n}^{\alpha,\theta},\alpha_{\tau_n})  }^2\notag\\
        &\le 4\beta_{N_T}^2\mathrm{d}_{\mathrm{E},{N_t}}\left(3 + \log\left(\frac{N_t\sup_{s\le t}\norm{X_s^{\alpha,\theta}}}{16\beta_{N_t}^4\mathrm{d}_{\mathrm{E},{N_t}}^2}\right)\right)+ 2\mathrm{d}_{\mathrm{E},N_t}(1+2\beta_{N_t}^2\mathrm{d}_{\mathrm{E},N_t})(1+\sup_{s\le t}\norm{X_s^{\alpha,\theta}}^2).\label{eq: conf set width order 2}
    \end{align}
\end{restatable}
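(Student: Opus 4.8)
The plan is to obtain \cref{prop: conf set width no lazy updates} as a direct instantiation of the generic bound \cref{prop: conf set width generic} along the realised trajectory. All the analytic content (the eluder-dimension counting of \cref{lemma: eluder dimension russo} and the functional inequalities of \cref{lemma: widths functional inequality}) is already packaged into that proposition, so the work here is purely to set up the correspondence between the abstract online-learning objects of \cref{app: conf sets} and \cref{app:widths} and the jump-process quantities, and then to read off the two displays.

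First I would fix $(\alpha,\theta)\in\Ac\x\Theta$ and make the identification $(X_i)_{i\in\NN} := ((X^{\alpha,\theta}_{\tau_i},\alpha_{\tau_i}))_{i\in\NN}$ and $\Fs := \Fs_\Theta$, so that for any $\mu,\mu'\in\Fs_\Theta$ the abstract seminorm $\norm{\mu-\mu'}_n$ coincides with $\big(\sum_{i=0}^{n-1}\norm{\mu(X^{\alpha,\theta}_{\tau_i},\alpha_{\tau_i})-\mu'(X^{\alpha,\theta}_{\tau_i},\alpha_{\tau_i})}^2\big)^{1/2}$. Under this dictionary the set $\Fs_n$ from the statement is exactly the abstract confidence set $\{\mu\in\Fs : \norm{\mu-\tilde\mu_n}_n\le\tilde\beta_n\}$ centred at $\tilde f_n=\tilde\mu_n$, so \cref{prop: conf set width generic} applies verbatim (its left-hand side being understood, as in its proof via $x_i=\Lambda(\Fs_i,X_i)$, with the confidence set $\Fs_i$ evaluated at the contemporaneous point $X_i$, which is precisely the form in \eqref{eq: conf set width order 1}–\eqref{eq: conf set width order 2}). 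The only point requiring care is that the state is observed at the random Poisson times $\tau_i$; but as in the proof of \cref{prop: conf sets OvR} this is harmless, since the arrival times are independent of the marks and $X^{\alpha,\theta}$ is a.s.\ piecewise constant between consecutive jumps, so the abstract framework of \cref{app: conf sets} — in which $\norm{\cdot}_n$ and the eluder dimension ignore the labelling of time — is unaffected.

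Next I would evaluate \cref{prop: conf set width generic} at $n=N_t$, the number of interactions in $[0,t]$, and perform two substitutions. For the eluder term, taking $\tilde\beta_0=\ve^{1/2}$ gives $\mathrm{d}^{\Fs}_{\mathrm{E},n}(\tilde\beta_0)=\dime^{B_n}(\Fs_\Theta,2\tilde\beta_0/\sqrt n)=\dime^{B_n}(\Fs_\Theta,2\sqrt{\ve/n})$, which is exactly $\mathrm{d}_{\mathrm{E},N_t}$ as defined in \eqref{eq: def eluder dim in appendix}. For the radius term, since $X^{\alpha,\theta}$ is piecewise constant and changes only at the $\tau_i$, the restriction ball $B_{N_t}=\Bc_2(\sup_{i\in[N_t]}\norm{X^{\alpha,\theta}_{\tau_i}})$ read in the state variable equals $\Bc_2(\sup_{s\le t}\norm{X^{\alpha,\theta}_s})$, so $\sup_{i\in[N_t]}\norm{X_i}$ becomes $\sup_{s\le t}\norm{X^{\alpha,\theta}_s}$. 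Substituting these identifications into \eqref{eq: generic conf set width order 1} and \eqref{eq: generic conf set width order 2} produces the two displayed bounds.

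A useful observation is that because the statement quantifies over an \emph{arbitrary} non-decreasing sequence $(\tilde\beta_n)$ and \emph{arbitrary} centres $(\tilde\mu_n)$ — rather than over the specific NLLS fit and the $\beta_n$ of the main text — no probabilistic input is needed: the bound is deterministic given the trajectory, and \cref{asmp: basics,asmp: both asmp joint} enter only through \cref{prop: conf set width generic}, via the linear-growth control of $\Fs_\Theta$ that keeps $B_n$ and hence the eluder dimension finite. The main, and essentially the only, obstacle is therefore the bookkeeping: ensuring the abstract dimension $\mathrm{d}^{\Fs}_{\mathrm{E},n}(\tilde\beta_0)$ is matched to $\mathrm{d}_{\mathrm{E},N_t}$ with the correct argument $2\sqrt{\ve/n}$ (the choice $\tilde\beta_0=\ve^{1/2}$), that the domain restriction $B_n$ is read in the state variable, and that the factors of $N_t$, $\mathrm{d}_{\mathrm{E},N_t}$ and $\sup_{s\le t}\norm{X^{\alpha,\theta}_s}$ land in their stated places.
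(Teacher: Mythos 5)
Your proposal is correct and is exactly the paper's proof: the paper disposes of this proposition in one line ("immediate by applying \cref{prop: conf set width generic} to our setting, as we did in the proof of \cref{prop: conf sets OvR}"), i.e.\ the same dictionary $(X_i)=((X^{\alpha,\theta}_{\tau_i},\alpha_{\tau_i}))$, $\Fs=\Fs_\Theta$, the same dismissal of the random arrival times via independence and piecewise constancy, and the same identification of $\mathrm{d}^{\Fs}_{\mathrm{E},n}(\tilde\beta_0)$ with $\mathrm{d}_{\mathrm{E},N_t}$. Your write-up merely makes explicit the bookkeeping (the choice $\tilde\beta_0=\ve^{1/2}$, the contemporaneous-$\Fs_i$ reading of the generic bound, and the purely pathwise nature of the statement) that the paper leaves implicit.
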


%\ConfSetWidthNoLazyUpdate

\begin{proof}
    Immediate by applying \cref{prop: conf set width generic} to our setting, as we did in the proof of \cref{prop: conf sets OvR}.
\end{proof}

Under the event of \cref{prop: conf sets OvR}, which ensures that $\theta^*\in\cap_{n\in\NN}\Cc_n(\delta)$, we can derive from \cref{prop: conf set width no lazy updates} a bound on the prediction error relative to the true dynamics $X^{\alpha,\theta^*}$ generated by the control $\alpha\in\Ac$, in particular we are interested in $\alpha=\varpi$.

\ConfSetWidthNoLazyUpdateMAINTEXT*
\begin{proof}
    This follows from \cref{prop: conf set width no lazy updates} by choosing $(\tilde\beta_n)_{n\in\NN}=(\beta_n(\delta))_{n\in\NN}$ and $(\Fs_n)_{n\in\NN}=(\Cc_n(\delta))_{n\in\NN}$, i.e. choosing $(\tilde\mu_n)_{n\in\NN}=(\mu_{\hat\theta_n})_{n\in\NN}$, the NLLS fit on $n$ points. It is key to notice that these choices of $(\tilde\beta_n)_{n\in\NN}$, $(\Fs_n)_{n\in\NN}$, and $(\tilde\mu_n)_{n\in\NN}$ are adapted to $\Fb$, and therefore we can apply \cref{prop: conf set width no lazy updates} on the event of \cref{prop: conf sets OvR} without issues. This yields
        \begin{align*}
        \sum_{n=\deb}^{N_t}\norm{\mu_{\hat\theta_n}(X_{\tau_n}^{\alpha,\theta^*},\alpha_{\tau_n}) - \mu_{\theta^*}(X_{\tau_n}^{\alpha,\theta^*},\alpha_{\tau_n})  } &\le 2\beta_{N_t}(\delta)\sqrt{\mathrm{d}_{\mathrm{E},{N_t}}} +\mathrm{d}_{\mathrm{E},{N_t}}H_\delta(N_T)\,,
        \intertext{and}
        \sum_{n=\deb}^{N_t} \norm{\mu_{\hat\theta_n}(X_{\tau_n}^{\alpha,\theta^*},\alpha_{\tau_n}) - \mu_{\theta^*}(X_{\tau_n}^{\alpha,\theta^*},\alpha_{\tau_n})    }^2&\le 4\beta_{N_T}(\delta)^2\mathrm{d}_{\mathrm{E},{N_t}}\left(3 + \log\left(\frac{N_t H_\delta(N_t)}{16\beta_{N_t}(\delta)^4\mathrm{d}_{\mathrm{E},{N_t}}^2}\right)\right)\notag\\
        &\qquad + 2\mathrm{d}_{\mathrm{E},N_t}(1+2\beta_{N_t}(\delta)^2\mathrm{d}_{\mathrm{E},N_t})(1+H_\delta^2(N_t))\,.
    \end{align*}
    To obtain the estimates of \eqref{eq: main text conf set width order 1}--\eqref{eq: main text conf set width order 2}, it suffices to recall the definitions of $\beta_n(\delta)$ (i.e. \eqref{eq: def beta_n main text}) and $H_\delta(n)$ (i.e. \eqref{eq: def H delta}).
\end{proof}

\newpage

\section{Planning and Diffusive Limit Approximation}\label{app:ctrl}

Our work builds upon \citep{abeille_diffusive_2022}, but with specialised results for our setting. This paper recovers the key results of this section (\cref{prop: PropertiesRhoJump,prop: PropertiesRhoDiff,prop: approx diff limit}) under a stronger and more abstract set of assumptions. For the comfort of the reader we thus present the necessary steps to extend their results to our assumptions. Since our assumptions do not directly subsume theirs, we exhibit in each case from \cref{asmp: basics,asmp: both asmp joint} how to recover the keystone results which underpin the technical arguments of \citep{abeille_diffusive_2022}.

We begin by the well-posedness results for the pure jump case (\cref{prop: PropertiesRhoJump}) and the diffusive limit case (\cref{prop: PropertiesRhoDiff}), and then focus on the approximation result linking the two regimes (\cref{prop: approx diff limit}). In \cite{abeille_diffusive_2022}, \cref{prop: PropertiesRhoJump} corresponds to Theorem 2.3. and Remark 2.4. In \cref{app: proof of PJ HJB}, we show how it follows from \cref{asmp: both asmp joint,asmp: basics} by proving the two intermediary results used in \cite{abeille_diffusive_2022} to prove the result. 

\PropertiesRhoJump*
%restate prop

In \cite{abeille_diffusive_2022}, \cref{prop: PropertiesRhoDiff} corresponds to Theorem 3.4. In \cref{app: proof diff HJB}, we show that it also follows from \cref{asmp: both asmp joint,asmp: basics} by proving that \cite[Assumption 5]{abeille_diffusive_2022} holds under \cref{asmp: basics,asmp: both asmp joint}.

\PropertiesRhoDiff*
%repeat for each 

\begin{remark}
    \cref{prop: PropertiesRhoDiff}.({\rm iii.}) is not stated as is in \cite[Thm.~3.4]{abeille_diffusive_2022}, but it follows from it by the same arguments as \cite[Remark 2.4]{abeille_diffusive_2022}. 
\end{remark}

\Cref{prop: PropertiesRhoJump,prop: PropertiesRhoDiff} together ensure that both the prelimit and limit regimes are well posed, while \cref{prop: approx diff limit} gives the rate of convergence of the control problems along this limit. This result is essentially contained in the proof of \cite[Thm. 3.6]{abeille_diffusive_2022}, but since its statement is different, we include a proof for completeness in \cref{subsec:approx diff limit}.

\PropRhoApprox*

\subsection{Proof of \texorpdfstring{\cref{prop: PropertiesRhoJump}}{Proposition \ref{prop: PropertiesRhoJump}}}\label{app: proof of PJ HJB}

In \cite{abeille_diffusive_2022}, Theorem 2.3 and Remark 2.4 follow from Lemmas A.1 and A.2, which respectively give a mixing condition and a moment bound for $X^{\alpha,\theta}$. We already proved \cite[Lemma A.2]{abeille_diffusive_2022} in \cref{lemma: moment boundedness condition from paper}. Moreover, \cref{lemma: intermediate lipschitzness jump} which reproduced \cite[Lemmas A.1]{abeille_diffusive_2022} holds with only minor modifications of the proof from \cite{abeille_diffusive_2022}. 

\BoundedMomentStateJump*

\begin{restatable}{lemma}{LemmaLipschitzProcess}\label{lemma: intermediate lipschitzness jump}
    For any $(x,x')\in\RR^d\x\RR^d$, $\theta\in\parset$, and $\alpha\in\Ac$, 
        \[ \EE\left[\snorm{X_t^{x,\alpha,\theta}-X_t^{x',\alpha,\theta}}\right]\le \frac{L_\Vs}{\ell_\Vs}\norm{x-x'}e^{-\cf_\Vs t}\,\]
    for any $t\in[0,+\infty)$.
\end{restatable}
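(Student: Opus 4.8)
The plan is to couple the two trajectories $X^{x,\alpha,\theta}$ and $X^{x',\alpha,\theta}$ by driving them with the \emph{same} mark sequence $(\xi_n)_n$ and the \emph{same} control $\alpha$, and then to exploit the contraction \eqref{eq:lyapunov asmp joint on jump problem} of \cref{asmp: both asmp joint}. The structural fact that makes this work is that a control $\alpha\in\Ac$ is $\Fb$-adapted, and $\Fb$ is generated by the clock and its marks only --- it does not observe the state. Consequently, along any fixed $\omega$ the action $\alpha_{\tau_{n-1}}$ is identical for the two initial conditions, so subtracting the two recursions in \eqref{eq: not def process} cancels the additive noise $\Sigma\xi_n$ entirely, leaving
\[
  X_{\tau_n}^{x,\alpha,\theta}-X_{\tau_n}^{x',\alpha,\theta}
  =\pve_\theta(X_{\tau_{n-1}}^{x,\alpha,\theta},\alpha_{\tau_{n-1}})
   -\pve_\theta(X_{\tau_{n-1}}^{x',\alpha,\theta},\alpha_{\tau_{n-1}})\,,
\]
where $\pve_\theta(\cdot,a)=\cdot+\mu_\theta(\cdot,a)$. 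This reduces a stochastic recursion to a deterministic, pathwise contraction, which is the crux of the argument.

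Next I would apply $\Vs$ to both sides and invoke the contraction \eqref{eq:lyapunov asmp joint on jump problem} pointwise at the (random but common) action $a=\alpha_{\tau_{n-1}}(\omega)$; this is legitimate since the inequality holds for every $a\in\Ab$. It yields $\Vs(X_{\tau_n}^{x}-X_{\tau_n}^{x'})\le(1-\ve\cf_\Vs)\,\Vs(X_{\tau_{n-1}}^{x}-X_{\tau_{n-1}}^{x'})$, and a one-line induction gives $\Vs(X_{\tau_n}^{x}-X_{\tau_n}^{x'})\le(1-\ve\cf_\Vs)^{n}\,\Vs(x-x')$. Because the process is piecewise constant with $X_t^{\cdot}=X_{\tau_{N_t}}^{\cdot}$, sandwiching with the norm equivalence (condition (i.) of \cref{asmp: both asmp joint}) on both ends turns this into the pathwise estimate $\snorm{X_t^{x,\alpha,\theta}-X_t^{x',\alpha,\theta}}\le(L_\Vs/\ell_\Vs)\,(1-\ve\cf_\Vs)^{N_t}\,\norm{x-x'}$.

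Finally I would take expectations and integrate out the Poisson clock. As $N_t$ is Poisson with mean $\ve^{-1}t$, its probability generating function gives $\EE[(1-\ve\cf_\Vs)^{N_t}]=\exp\!\big(\ve^{-1}t\,((1-\ve\cf_\Vs)-1)\big)=e^{-\cf_\Vs t}$, whence $\EE[\snorm{X_t^{x,\alpha,\theta}-X_t^{x',\alpha,\theta}}]\le(L_\Vs/\ell_\Vs)\,\norm{x-x'}\,e^{-\cf_\Vs t}$, the claimed bound. Here I would also note, for consistency, that $1-\ve\cf_\Vs\in[0,1)$: nonnegativity is forced by \eqref{eq:lyapunov asmp joint on jump problem} together with $\Vs\ge0$ for $x\neq x'$.

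The arithmetic is light, so the only delicate point is the coupling step. One must be sure the noise truly cancels, which rests on $\alpha$ being adapted to the clock-and-marks filtration rather than being a state feedback: only then do the two trajectories take identical actions along each path and the difference satisfies a noise-free contraction. Once that is in place, the geometric decay and the Poisson generating-function identity are routine, and --- in contrast with the companion moment and high-probability stability results --- no concentration or moment estimates enter.
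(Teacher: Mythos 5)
Your proof is correct, and it is worth noting that it is more self-contained than what the paper actually provides: the paper's ``proof'' of this lemma is a one-line deferral to \citep[Lemma A.1]{abeille_diffusive_2022}, remarking only that \cref{asmp: both asmp joint} can be used directly without the higher-order Lyapunov function of that reference. The mechanism underlying both arguments is the same synchronous coupling --- identical marks $(\xi_n)$ and, crucially, the identical control process, which is legitimate exactly for the reason you identify: $\Ac$ consists of $\Fb$-adapted processes and $\Fb$ is generated by the clock and marks, not by the state, so the noise cancels in the difference recursion. Where you diverge is in the execution: the paper's family of results of this type (\cref{lemma: moment boundedness condition from paper}, \cref{lemma: intermediate lipschitzness diffusion}) is proved through the continuous-time machinery --- a generator-level Lyapunov inequality, It\^o's formula with localisation, and an ODE comparison argument --- and the cited proof follows that pattern. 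Your route replaces all of this with a pathwise induction over jumps giving $\Vs(X^{x}_{\tau_n}-X^{x'}_{\tau_n})\le(1-\ve\cf_\Vs)^n\Vs(x-x')$, followed by the Poisson generating-function identity $\EE[(1-\ve\cf_\Vs)^{N_t}]=e^{-\cf_\Vs t}$. This is more elementary (no stochastic calculus for jump processes is needed, and no smoothing of $\Vs$ to handle its non-differentiability at the origin, since no Taylor expansion is ever taken) and in fact yields a slightly stronger conclusion: an almost-sure bound conditional on $N_t$, of which the stated expectation bound is an immediate corollary with exactly the right constant. The only point you gloss over is the degenerate event where the two trajectories collide, so that $\Vs$ would be evaluated at $0$ where it is not defined; this is harmless because the noise-free difference recursion shows that a zero difference propagates as zero, but a careful write-up should split the induction on this case (or extend $\Vs$ by continuity to $0$, which condition (i.) of \cref{asmp: both asmp joint} permits).
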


\begin{proof}
    We can follow the proof of \cite{abeille_diffusive_2022} using \cref{asmp: both asmp joint} directly without resorting to the higher order Lyapunov function $\zeta$ which they use.
\end{proof}

\subsection{Proof of \texorpdfstring{\cref{prop: PropertiesRhoDiff}}{Proposition \ref{prop: PropertiesRhoDiff}}}\label{app: proof diff HJB}

\Cref{prop: PropertiesRhoDiff}, such as it is stated in \cite[Thn~3.4.]{abeille_diffusive_2022} relies on their Assumption 5. This assumption contains two conditions, which we will show respectively in \cref{lemma: intermediate lipschitzness diffusion,lemma: contraction in p power}.

As detailed in \cite[Remark 3.2.(i)]{abeille_diffusive_2022}, the first condition can be shown by proving an analogue of \cite[Lemma A.1]{abeille_diffusive_2022} for the diffusive limit process \eqref{eq: def diffusion}. In terms of arguments of the proof, this analogue requires only a change in the stochastic generator used in Itô's Lemma\footnote{For a general overview of this sort of stability results and of Stochastic Lyapunov conditions in the diffusive case, see e.g. \citep[{$\S$~5.7}]{khasminskii_stochastic_2012}.}. In the proof of \cref{lemma: intermediate lipschitzness diffusion}, we, therefore, show how to adapt \cite[Lemma A.1]{abeille_diffusive_2022} to the generator of the diffusion under \cref{asmp: both asmp joint,asmp: basics}.

In the proof of \cite[Lemma A.1]{abeille_diffusive_2022}, there are two key steps. First, study the discounted version of the control problem, and show that it is equi-Lipschitz continuous in the discount, which rests on the result in \cref{lemma: intermediate lipschitzness diffusion}. Then one takes the vanishing discount limit in the HJB equation using the theory of viscosity solutions to complete the proof.

\begin{lemma}\label{lemma: intermediate lipschitzness diffusion}
    For any $(x_0,x_0')\in\RR^d\x\RR^d$, $\theta\in\parset$, $\alpha\in\Ac$,  
        \[ \EE\left[\norm{\bar X_t^{x,\alpha,\theta}-\bar X_t^{x',\alpha,\theta}}\right]\le \frac{L_\Vs}{\ell_\Vs}\norm{x-x'}e^{-\cf_\Vs t}\,\]
        for any $t\in[0,+\infty)$.
\end{lemma}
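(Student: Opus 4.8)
The plan is to mirror the proof of \cref{lemma: intermediate lipschitzness jump} (the pure-jump analogue, \cite[Lemma A.1]{abeille_diffusive_2022}), replacing the one-step jump recursion by an application of It\^o's formula driven by the diffusion generator. Write $\Delta_t := \bar X_t^{x,\alpha,\theta} - \bar X_t^{x',\alpha,\theta}$ for the difference of the two solutions run with the \emph{same} control $\alpha$ and the \emph{same} Brownian motion $W$. The crucial structural observation is that, since $\bar\Sigma$ is a constant matrix independent of the state, the diffusion terms cancel exactly in the difference:
\[\de\Delta_t = \big(\bar\mu_\theta(\bar X_t^{x,\alpha,\theta},\alpha_t) - \bar\mu_\theta(\bar X_t^{x',\alpha,\theta},\alpha_t)\big)\de t,\]
so $\Delta$ carries no martingale part and has absolutely continuous paths. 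Consequently, It\^o's formula applied to $\Vs(\Delta_t)$ produces no second-order (Laplacian) correction term and only the drift part of the generator survives, which is exactly why the diffusive computation reduces to a change of generator relative to the jump case.

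Next I would extract the infinitesimal form of the contraction condition \eqref{eq:lyapunov asmp joint on jump problem} of \cref{asmp: both asmp joint}. Setting $z := x - x'$ and $v := \bar\mu_\theta(x,a) - \bar\mu_\theta(x',a)$, condition (iii) reads $\Vs(z + \ve v) \le (1 - \ve\cf_\Vs)\Vs(z)$ for every $\ve \in (0,1)$. Rearranging into $\ve^{-1}(\Vs(z+\ve v) - \Vs(z)) \le -\cf_\Vs\Vs(z)$ and letting $\ve \downarrow 0$ — legitimate for $z \neq 0$, where $\Vs$ is $\Cc^2$ by condition (ii) — yields the pointwise dissipativity estimate
\[\langle \nabla\Vs(z), \bar\mu_\theta(x,a) - \bar\mu_\theta(x',a)\rangle \le -\cf_\Vs\Vs(z), \qquad z = x - x' \neq 0.\]

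I would then feed this into the chain rule along the path. On any interval where $\Delta_t \neq 0$ the map $t \mapsto \Vs(\Delta_t)$ is $\Cc^1$, and using the dissipativity estimate with $z = \Delta_t$,
\[\frac{\de}{\de t}\Vs(\Delta_t) = \langle \nabla\Vs(\Delta_t), \bar\mu_\theta(\bar X_t^{x,\alpha,\theta},\alpha_t) - \bar\mu_\theta(\bar X_t^{x',\alpha,\theta},\alpha_t)\rangle \le -\cf_\Vs\Vs(\Delta_t).\]
Gr\"onwall's inequality gives $\Vs(\Delta_t) \le \Vs(\Delta_0)e^{-\cf_\Vs t}$, and sandwiching with condition (i), $\ell_\Vs\norm{\cdot} \le \Vs \le L_\Vs\norm{\cdot}$, converts this into the desired norm bound
\[\norm{\Delta_t} \le \ell_\Vs^{-1}\Vs(\Delta_t) \le \ell_\Vs^{-1}L_\Vs\norm{x-x'}e^{-\cf_\Vs t},\]
which is already a pathwise estimate, so taking expectations is immediate.

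The one genuine obstacle is the non-differentiability of $\Vs$ at the origin, precisely where the chain rule above fails. I would handle it via the hitting time $T_0 := \inf\{t \ge 0 : \Delta_t = 0\}$: the Gr\"onwall argument runs unimpeded on $[0,T_0)$, while for $t \ge T_0$ pathwise uniqueness of the SDE forces $\Delta_t \equiv 0$ (once the two trajectories meet they coincide, as they share the same noise), so $\Vs(\Delta_t) = 0$ and the bound holds trivially. Continuity of $\Vs$ with the natural convention $\Vs(0) = 0$ (consistent with condition (i)) stitches the two regimes together, yielding the inequality for all $t \ge 0$.
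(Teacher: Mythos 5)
Your proof is correct, but it handles the crux of the argument --- the non-differentiability of $\Vs$ at the origin --- by a genuinely different device than the paper. Both proofs start identically: the synchronous coupling (same control, same Brownian motion) kills the martingale part of the difference $\Delta_t$, and sending $\ve\downarrow0$ in \cref{asmp: both asmp joint}.(iii.) yields the infinitesimal dissipativity $\langle\nabla\Vs(z),\bar\mu_\theta(x,a)-\bar\mu_\theta(x',a)\rangle\le-\cf_\Vs\Vs(z)$ for $z=x-x'\neq0$ (the paper phrases this as a Taylor expansion in \eqref{eq: diffusion contraction condition}). From there the paper regularises: it introduces $\Vs_\iota:=\Vs\,\mathrm{erf}(\iota\Vs)$, which is $\Cc^1$ on all of $\RR^d$ and shown to satisfy the same dissipativity inequality \emph{everywhere} (including at $z=0$, by continuity), applies It\^o's lemma to $\Vs_\iota(\Delta_t)$ in expectation, runs the ODE comparison argument, and only then removes the regularisation. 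You instead stay pathwise: Gr\"onwall on $[0,T_0)$ up to the coalescence time $T_0$, plus the observation that Lipschitzness of $\bar\mu_\theta$ (available from \cref{asmp: basics}) and the shared noise force $\Delta_t\equiv0$ after $T_0$ --- note that the clean way to state this last step is another pathwise Gr\"onwall on $\norm{\Delta_t}\le L_0\int_{T_0}^t\norm{\Delta_s}\,\de s$, rather than an appeal to abstract pathwise uniqueness at a random time. Your route is more elementary (no smoothing, no It\^o, no limit in $\iota$) and delivers a strictly stronger conclusion, namely an almost-sure exponential contraction of which the stated expectation bound is an immediate corollary; the paper's regularisation, on the other hand, avoids any case-split at a random time and works entirely with global inequalities in expectation, which is the form in which the argument is imported from and reused in \cite{abeille_diffusive_2022}. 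Both are valid proofs of the lemma.
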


\def\erf{\mathrm{erf}}
\begin{proof} If $x_0=x_0'$, this is trivially true by pathwise-uniqueness, so we suppose $x_0\neq x_0'$. Let us consider $(x_1,x_2)\in\RR^d\x\RR^d$ with $x_1\neq x_2$. By a Taylor expansion in \eqref{eq:lyapunov asmp joint on jump problem}, we obtain as $\ve\to0$
        \begin{align}
             (\bar\mu(x_1,a)-\bar\mu(x_2,a))^\top\nabla \Vs(x_1-x_2) \le -\cf_\Vs\Vs(x_1-x_2)\,.\label{eq: diffusion contraction condition}
        \end{align}
        The Lyapunov function $\Vs$ is not differentiable at $0$, so we will construct an approximating sequence for it. Let $\erf$ denote the error function and let $\Vs_\iota:= \Vs\erf(\iota \Vs)$ for $\iota>0$. Note that $\Vs_\iota\in\Cc^1(\RR^d;\RR_+)$ and $\Vs_\iota$ is Lipschitz, let us show that it satisfies \eqref{eq: diffusion contraction condition} everywhere.

        Let $z:= x_1-x_2$. Since $z\neq0$ we have
        \[\nabla\Vs_\iota(z)=\nabla\Vs(z)\left(\erf(\iota \Vs(z)) + \frac{2\iota}{\sqrt{\pi}}\Vs(z)e^{-\iota^2\Vs^2(z)}\right)\,.\]
        By \cref{asmp: both asmp joint}, this implies that 
        \begin{align}
            (\bar\mu_\theta(x_1,a) -\bar\mu_\theta(x_2,a))^\top\nabla\Vs_\iota(z) &\le -\cf_\Vs\Vs(z)\erf(\iota\Vs(z))-\frac{2\iota}{\sqrt{\pi}}\cf_\Vs\Vs(z)^2e^{-\iota^2\Vs^2(z)}\notag\\
            &\le -\cf_\Vs\Vs_\iota(z)\,.\label{eq: contraction v iota}
        \end{align}
        Since $\nabla\Vs_\iota$ is continuous in $z$, and so is the right-hand side, we can let $\norm{z}\to0$ and conclude the bound also holds for $x_1=x_2$. 

        We now apply It\^o's lemma for the process $\bar X^{x,\alpha,\theta}-\bar X^{x',\alpha,\theta}$ to $\Vs_\iota$. Using \eqref{eq: contraction v iota}, this yields, for $t\ge t_0\ge 0$,
        \begin{align*}
            \EE\bigg[&\Vs_\iota\Big(\bar X_t^{x_0,\alpha,\theta} - \bar X_t^{x_0',\alpha,\theta} \Big)\bigg]\\
            &\le  \EE\bigg[\Vs_\iota\Big(\bar X_{t_0}^{x_0,\alpha,\theta} - \bar X_{t_0}^{x_0',\alpha,\theta}  \Big)\bigg]\\
            &+\EE\left[\int_{t_0}^t \!\!\bigg(\!\bar\mu_\theta\Big(\bar X_s^{x_0,\alpha,\theta} ,  \alpha_s\Big) - \bar\mu_\theta\Big(\bar X_s^{x_0',\alpha,\theta} ,\alpha_s\Big)\!\!\bigg)^{\!\!\top}\!\!\nabla \Vs_\iota\Big(\bar X_s^{x_0,\alpha,\theta}  - \bar X_s^{x_0',\alpha,\theta} \Big)\de s\right] \\
            &\le  \EE\bigg[\Vs_\iota\Big(\bar X_{t_0}^{x_0,\alpha,\theta} - \bar X_{t_0}^{x_0',\alpha,\theta}  \Big)\bigg]  - \int_{t_0}^t \cf_\Vs\EE\left[\Vs_\iota\Big(X_s^{x_0,\alpha,\theta} -X_s^{x_0',\alpha,\theta} \Big)\right] \de s\,. 
        \end{align*}
        We conclude by the same ODE comparison argument as in the proof of \cref{lemma: moment boundedness condition from paper} and then pass to the limit as $\iota\to0$ to obtain the claimed result using \cref{asmp: both asmp joint}.(i). 
\end{proof}

While \cref{lemma: intermediate lipschitzness diffusion} showed that \citep[Assumption 5.({\rm i})]{abeille_diffusive_2022} is implied by \cref{asmp: both asmp joint,asmp: basics}. It remains now to verify their Assumption 5.({\rm ii}). Note that by \citep[Remark 3.2.({\rm ii})]{abeille_diffusive_2022}, an equation of the form of their (3.3) is sufficient to do so. \Cref{lemma: contraction in p power} gives exactly this result with \eqref{eq: near contraction diff power p}, by noting that \citep[(3.4)]{abeille_diffusive_2022} holds by \cref{asmp: both asmp joint}.

\def\bcvs{\bar\cf_p}
\begin{lemma}\label{lemma: contraction in p power}
    Under \cref{asmp: both asmp joint,asmp: basics}, for any $p\ge 2$ there are $(\bcvs,\bcvs')\in\RR_+^2$ such that
    \begin{align}
         \bar\mu_\theta(x,a)^\top\nabla\Vs(x)^p+\Tr[\bar\Sigma\bar\Sigma^\top\nabla^2\Vs(x)^p]\le -\bcvs\Vs(x)^p+\bcvs'\,\label{eq: near contraction diff power p}
    \end{align}
    for any $(x,a,\theta)\in\RR^d\x\Ab\x\Theta$.
\end{lemma}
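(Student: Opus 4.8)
The plan is to bound the diffusion generator applied to $\Vs^p$ directly via the chain rule, reducing everything to the first-order contraction \eqref{eq: diffusion contraction condition} already established in the proof of \cref{lemma: intermediate lipschitzness diffusion}, together with the gradient and Hessian bounds of \cref{asmp: both asmp joint}. First I would record a single-process version of the drift contraction: specialising \eqref{eq: diffusion contraction condition} to $x_1=x\neq 0$ and $x_2=0$ gives $(\bar\mu_\theta(x,a)-\bar\mu_\theta(0,a))^\top\nabla\Vs(x)\le -\cf_\Vs\Vs(x)$, whence, using $\snorm{\bar\mu_\theta(0,a)}\le L_0$ from \cref{asmp: basics} and $\snorm{\nabla\Vs(x)}\le M_\Vs$ from condition (ii.) of \cref{asmp: both asmp joint},
\[
\bar\mu_\theta(x,a)^\top\nabla\Vs(x)\le -\cf_\Vs\Vs(x)+L_0 M_\Vs,\qquad x\in\RR_*^d.
\]

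Next, on $\RR_*^d$, where $\Vs$ is $\Cc^2$, the chain rule yields $\nabla\Vs^p=p\Vs^{p-1}\nabla\Vs$ and $\nabla^2\Vs^p=p(p-1)\Vs^{p-2}\nabla\Vs\nabla\Vs^\top+p\Vs^{p-1}\nabla^2\Vs$. Substituting, the drift term becomes $p\Vs^{p-1}\bar\mu_\theta^\top\nabla\Vs\le -p\cf_\Vs\Vs^p+pL_0M_\Vs\Vs^{p-1}$, while the second-order term is controlled, using $\nabla\Vs^\top\bar\Sigma\bar\Sigma^\top\nabla\Vs\le\snorm{\bar\Sigma}_\op^2 M_\Vs^2$ and $\Tr[\bar\Sigma\bar\Sigma^\top\nabla^2\Vs]\le d\snorm{\bar\Sigma}_\op^2 M_\Vs'$ together with $\snorm{\bar\Sigma}_\op\le L_0$, by
\[
\Tr[\bar\Sigma\bar\Sigma^\top\nabla^2\Vs^p]\le p(p-1)L_0^2 M_\Vs^2\,\Vs^{p-2}+pdL_0^2 M_\Vs'\,\Vs^{p-1}.
\]
Summing the two contributions, the left-hand side of \eqref{eq: near contraction diff power p} is at most $-p\cf_\Vs\Vs^p+Q(\Vs)$, where $Q$ is a polynomial in $\Vs\ge 0$ of degree at most $p-1$ with nonnegative coefficients depending only on $p,L_0,M_\Vs,M_\Vs'$ and $d$.

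Finally, I would absorb the lower-order polynomial exactly as in part (iii.) of the proof of \cref{cor: small intermediate lyapunov lemma}: the map $s\mapsto -\tfrac{p\cf_\Vs}{2}s^p+Q(s)$ is continuous on $[0,\infty)$ and tends to $-\infty$, hence is bounded above by a finite constant, which I take as $\bcvs'$; setting $\bcvs:=\tfrac{p\cf_\Vs}{2}$ then gives \eqref{eq: near contraction diff power p} for all $x\in\RR_*^d$. The one genuinely delicate point is the non-differentiability of $\Vs$ at the origin, so that $\Vs^p$ need not be $\Cc^2$ there (notably for $p=2$, where $\nabla\Vs\nabla\Vs^\top$ need not be continuous at $0$). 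I would handle $x=0$ as in the proof of \cref{lemma: intermediate lipschitzness diffusion}, replacing $\Vs$ by the smooth surrogate $\Vs_\iota:=\Vs\,\mathrm{erf}(\iota\Vs)\in\Cc^1(\RR^d;\RR_+)$, running the computation above for $\Vs_\iota^p$ (whose first two derivatives are dominated by those of $\Vs$ uniformly in $\iota$, as in \eqref{eq: contraction v iota}), and passing to the limit $\iota\downarrow 0$ using condition (i.) of \cref{asmp: both asmp joint}. This mollification/limit step, rather than the algebra, is where the only real care is required.
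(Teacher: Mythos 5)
Your proof is correct and follows essentially the same route as the paper: reduce to the single-point first-order drift contraction $\bar\mu_\theta(x,a)^\top\nabla\Vs(x)\le-\cf_\Vs\Vs(x)+L_0M_\Vs$ (the paper obtains this by taking $x'=0$ in a second-order Taylor expansion of \eqref{eq:lyapunov asmp joint on jump problem} as $\ve\to0$, which is equivalent to your use of \eqref{eq: diffusion contraction condition}), apply the chain rule to $\Vs^p$, bound the cross terms via $M_\Vs$, $M_\Vs'$ and $\snorm{\bar\Sigma}_\op$, and absorb the degree-$(p-1)$ polynomial into a constant exactly as in part (iii.) of the proof of \cref{cor: small intermediate lyapunov lemma}. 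Your handling of the origin is in fact more careful than the paper's, which simply asserts that $\Vs^p$ extends to $\Cc^2(\RR^d)$ (questionable precisely for $p=2$, as you note); the only slip is that the mollification $\Vs_\iota=\Vs\,\mathrm{erf}(\iota\Vs)$ converges to $\Vs$ as $\iota\to\infty$, not $\iota\downarrow0$ --- a directional typo the paper's own proof of \cref{lemma: intermediate lipschitzness diffusion} also contains.
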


\begin{proof}
    Let us take $(x,x')\in\RR^d\x\RR^d$ such that $\norm{x-x'}\ge \ve/(1-\ve L_0)$, which implies $\norm{x-x'+\Delta(\mu_\theta(x,a)-\mu_\theta(x',a))}>0$ for any $\Delta\in[0,1]$ and for all $(a,\theta)\in\Ab\x\Theta$ and we can expand \eqref{eq:lyapunov asmp joint on jump problem}, which gives 
    \begin{align*}
        -\ve\cf_\Vs \Vs(x-x')&\ge (\mu_\theta(x,a)-\mu_\theta(x',a))^\top\nabla\Vs(x-x')\\
        &\qquad + \frac12(\mu_\theta(x,a)-\mu_\theta(x',a))^\top\nabla^2\Vs(\hat x)(\mu_\theta(x,a)-\mu_\theta(x',a))\,, 
    \end{align*} 
    in which $\hat x = x+\hat \Delta (x'-x)$ for some $\hat \Delta\in[0,1]$. Thus 
    \begin{align*}
        (\bar\mu_\theta(x,a)-\bar\mu_\theta &(x',a))^\top\nabla\Vs(x-x')  \\
        &\le -\cf_\Vs\Vs(x-x')-\frac{\ve}2(\bar\mu_\theta(x,a)-\bar\mu_\theta(x',a))^\top\nabla^2\Vs(\hat x)(\bar\mu_\theta(x,a)-\bar\mu_\theta(x',a))\,.
    \end{align*} 
    Letting $\ve\to0$, the constraint on $(x,x')$ vanishes as well as the second term (on compact sets), and we recover
    \[(\bar\mu_\theta(x,a)-\bar\mu_\theta(x',a))^\top\nabla \Vs(x-x')+\frac12\Tr[\bar\Sigma\bar\Sigma^\top \nabla^2\Vs(x-x')]\le -\cf_\Vs\Vs(x-x') +\frac d2\snorm{\bar\Sigma}^2_\op M_\Vs'\,.\]
    Taking $x'=0$ implies that 
    \[ \bar\mu_\theta(x,a)^\top\nabla\Vs(x) +\frac12\Tr[\bar\Sigma\bar\Sigma^\top\nabla^2\Vs(x)]\le -\cf_\Vs\Vs(x) + C \]
    for all $(x,a)\in\RR_*^d\x\Ab$, in which $C:=d\snorm{\bar\Sigma}_\op^2 M_\Vs'/2 + L_0M_\Vs$.
    
    Notice that, since $\Vs\in\Cc^2(\RR^d_*;\RR_+)$ and vanishes at $0$ (see \cref{asmp: basics}), $\Vs(\cdot)^p$ can be extended by continuity at $0$ so that $\Vs(\cdot)^p\in\Cc^2(\RR^d;\RR_+)$. For any $(x,a,\theta)\in\RR^d\x\Ab\x\parset$, let 
    \begin{align*}
        k(x,a):&=\bar\mu_\theta(x,a)^\top\nabla\Vs(x)^p+\frac12\Tr\left[\bar\Sigma\bar\Sigma^\top\nabla^2\Vs(x)^p\right]\\
        &= p\bar\mu_\theta(x,a)^\top\nabla\Vs(x)\Vs(x)^{p-1}\\
        &\qquad+ \frac12\Tr\left[\bar\Sigma\bar\Sigma^\top \left(p\Vs(x)^{p-1}\nabla^2\Vs(x) + p(p-1)\Vs(x)^{p-2}\nabla\Vs(x)\nabla^\top\Vs(x)\right) \right]\\
        &=p\Vs^{p-1}(x)\left(\bar\mu_\theta(x,a)^\top\nabla\Vs(x)+\frac12\Tr[\bar\Sigma\bar\Sigma^\top\nabla^2\Vs(x)]\right) \\
        &\qquad + \frac{p(p-1)}2 \Vs(x)^{p-2}\Tr[\bar\Sigma\bar\Sigma^\top\nabla\Vs(x)\nabla^\top\Vs(x)]\\
        &\le -p\cf_\Vs\Vs(x)^p + Cp\Vs(x)^{p-1} + \frac{dp(p-1)}2(\snorm{\bar\Sigma}_\op M_\Vs)^2\Vs(x)^{p-2}
    \end{align*}
    and we can now choose $\bcvs=-p\cf_\Vs/2$, for which there exists a constant $\bcvs'$ such that 
    \[-\bcvs \Vs^p(x)+ Cp\Vs^{p-1}(x) + \frac{dp(p-1)}2(\snorm{\bar\Sigma}_\op M_\Vs)^2\Vs^{p-2}(x)  \le \bcvs'\]
    for all $x\in\RR^d$.
\end{proof}

\subsection{Proof of \texorpdfstring{\cref{prop: approx diff limit}}{Proposition \ref{prop: approx diff limit}}}\label{subsec:approx diff limit}

The rest of this section is dedicated to showing \cref{prop: approx diff limit} using modifications of the proof of \citep[Thm.~3.6.]{abeille_diffusive_2022} to which it corresponds. Here we produce a self-contained proof in order to clarify how \eqref{eq: approx diff limit evolution} is derived from the proof.

\PropRhoApprox*

\begin{proof}
    The first part of \cref{prop: approx diff limit}, i.e. \eqref{eq: approx diff limit for rho}, corresponds to \citep[Thm.~3.6.]{abeille_diffusive_2022}, which we previously showed holds in our setting by verifying its assumptions. 
    We now prove the second claim. Let
    \[\delta r_\theta^\ve (x,a) := \bar\mu_\theta(x,a)^\top\nabla\bar W_\theta^*(x)+\frac12\Tr[\bar\Sigma\bar\Sigma^\top\nabla^2\bar W^*_\theta(x)]-\frac1\ve\left(\EE\left[ \bar W_\theta^*(\pve_\theta(x,a)+\Sigma\xi)\right] -\bar W^*_\theta(x)\right).\]
    From \eqref{eq: prelim HJB diff},  and \cref{prop: PropertiesRhoDiff}.(iii.) we have 
    \begin{align*}
        \bar\rho^*_\theta &= \max_{a\in\Ab}\left\{\bar\mu_\theta(x,a)^\top\nabla\bar W^*_\theta +\frac12\Tr[\bar\Sigma\bar\Sigma^\top\nabla^2\bar W^*_\theta(x)] + \bar r(x,a)\right\}\\
        &=\bar\mu_\theta(x,\bar\pi^*_\theta(x))^\top
        \nabla \bar W^*_\theta(x) + \frac12\Tr[\bar\Sigma\bar\Sigma^\top\nabla^2\bar W^*_\theta(x)] + \bar r(x,\bar\pi^*_\theta(x))
        \intertext{which implies}
        \ve\rho_\theta^{\bar\pi^*_\theta}(0)&=\EE[ \bar W_\theta^*(\pve_\theta(x,\bar\pi^*_\theta(x))\!+\!\Sigma\xi)]\! -\!\bar W^*_\theta(x) + r(x,\bar\pi^*_\theta(x)) + \ve(\delta r_\theta^\ve(x,\bar\pi^*_\theta(x))+ \bar\rho^*_\theta-\rho_\theta^{\bar\pi^*_\theta}(0)).
    \end{align*}
    Note that $\abs{\delta r_\theta^\ve(x,\bar\pi_\theta^*(x))}\le\sup_{a\in\Ab}\abs{\delta r_\theta^\ve(x,a)}$, which by \citep[(3.10)]{abeille_diffusive_2022} is bounded by $c_\gamma\ve^{\frac{\gamma}{2}}(1+\norm{x}^3)$ for some constant $c_\gamma>0$. An application of \eqref{eq: approx diff limit for rho} yields
    \[ \bar\rho_\theta^*- \rho^{\bar\pi_\theta^*}_\theta(0) = \bar\rho_\theta^*-\rho^*_\theta +\rho^*_\theta -\rho^{\bar\pi_\theta^*}_\theta(0)\le 2C_\gamma \ve^{\frac \gamma2} \]
    and, at the same time, $\bar\rho^*_\theta-\rho_\theta^{\bar\pi^*_\theta}(0)\ge \bar\rho^*_\theta -\rho^*_\theta\ge -C_\gamma\ve^{\frac\gamma2}$.
    Therefore, there is a function $e_\theta:\RR^d\to\RR$ such that \eqref{eq: approx diff limit evolution} holds, which also satisfies  
    \[\abs{e_\theta(x)}\le (2C_\gamma +c_\gamma)\ve^{1+\frac\gamma2}(1+\norm{x}^3) \,. \]
\end{proof}

\newpage

\section{Regret Analysis}\label{app: regret bounds}

In this final appendix, we complete the analysis of the regret of \cref{alg: RL 1} and prove \cref{thm: regret alg 1}. First, we will give the regret decomposition, and then in the later sections we will bound terms one by one calling upon the results of the previous appendices.

\RegretAlgOne*

\subsection{Regret Decomposition}\label{subsec: regret decom}

%\cref{thm: regret alg 1}
Recall that we defined $k:n\in\NN\mapsto k(n)$ as the map associating to each event $n$ the episode of \cref{alg: RL 1} in which they occur. Like in \cref{subsec:regret}, let us define $\theta_n=\tilde\theta_{k(n)}$ for all $n\in\NN$. The regret of \cref{alg: RL 1}, which generates the control $\varpi\in\Ac$, is
\begin{align*}
    \Rc_T(\varpi) &:= T\rho^*_{\theta^*} - \sum_{n=\deb}^{N_T}r(X_{\tau_n}^{\varpi,\theta^*},\varpi_{\tau_n})
    \intertext{By definition of $\varpi$ in \cref{alg: RL 1}, $\varpi_{\tau_n} = \bar\pi^*_{\theta_n}(X_{\tau_n}^{\varpi,\theta^*})$, so that }
    &\Rc_T(\varpi) := T\rho^*_{\theta^*} - \sum_{n=\deb}^{N_T}r(X_{\tau_n}^{\varpi,\theta^*},\bar\pi_{\theta_n}^*(X_{\tau_n}^{\varpi,\theta^*}))
\end{align*}

At the heart of the decomposition is the use of the HJB-type equation \eqref{eq: approx diff limit evolution} applied for each $n$ at the point $X_{\tau_n}^{\varpi,\theta^*}$. For clarity, let us introduce for all $n\in\NN$ the random variable $\tilde X^{\varpi,\theta_n}_{\tau_{n+1}}$ equal in distribution, conditionally on $\Fc_{\tau_n}$, to the random variable $\pve_{\theta_n}(X^{\varpi,\theta^*}_{\tau_n},\varpi_{\tau_n})+\Sigma\xi_{n+1}$. With this notation \eqref{eq: approx diff limit evolution} becomes
\begin{align}
    \ve\rho_{\theta_n}^{\bar\pi^*_{\theta_n}}(0) &=\EE[ \bar W_{\theta_n}^*(\tilde X^{\varpi,\theta_n}_{\tau_{n+1}})\vert \Fc_{\tau_n}] -\bar W^*_{\theta_n}(X_{\tau_n}^{\varpi,\theta^*}) +r(X_{\tau_n}^{\varpi,\theta^*},\bar\pi_{\theta_n}^*(X_{\tau_n}^{\varpi,\theta^*})) + e_{\theta_n}(X_{\tau_n}^{\varpi,\theta^*})\,.\label{eq: approx diff limit evolution 2}
\end{align}

This \textit{imagined} evolution of the system represents the counterfactual induced by a single step transition at time $\tau_{n+1}$, according to the belief in $\theta_n$. With this notation, applying \eqref{eq: approx diff limit evolution 2} yields
\begin{align}
        \Rc_T(\varpi)&= T\rho^*_{\theta^*}- \sum_{n=\deb}^{N_T} \ve\rho_{\theta_n}^{\bar\pi^*_{\theta_n}}(0)+ \sum_{n=\deb}^{N_T} e_{\theta^*}(X_{\tau_n}^{\varpi,\theta^*})+\sum_{n=\deb}^{N_T}\EE[\bar W_{\theta_n}^*(\tilde X_{\tau_{n+1}}^{\varpi,\theta_n})\vert \Fc_{\tau_n}] - \bar W_{\theta_n}^*(X_{\tau_n}^{\varpi,\theta^*})\notag\,.\\
        &=(T-\ve N_T)\rho^*_{\theta^*} \tag{$R_1$}\label{eq: regret pf R1} \\
        &\quad + \ve \sum_{n=\deb}^{N_T}(\rho^*_{\theta^*} - \rho_{\theta_n}^{\bar\pi^*_{\theta_n}}(0) )+ \sum_{n=\deb}^{N_T} e_{\theta^*}(X_{\tau_n}^{\varpi,\theta^*}) \tag{$R_2$}\label{eq: regret pf R2} \\
        &\quad + \sum_{n=\deb}^{N_T}\EE[\bar W_{\theta_n}^*(\tilde X_{\tau_{n+1}}^{\varpi,\theta_n})\vert \Fc_{\tau_n}] - \bar W_{\theta_n}^*(X_{\tau_n}^{\varpi,\theta^*}).\label{eq: regret remainder of first decomp}
    \end{align}
    The first term, \eqref{eq: regret pf R1}, quantifies the deviation of the Poisson clock from its mean. On the other hand, \eqref{eq: regret pf R2} quantifies both the optimistic nature of \cref{alg: RL 1} and the approximation error of its approximate planning. The third term, \eqref{eq: regret remainder of first decomp}, resembles a martingale (up to reordering), but it fails to be one on two key counts. First, the element from the family of functions $(\bar W^*_{\theta_n})_{n\in\NN}$ used at each step $n$ changes. Second, the expectation terms are with respect to the counterfactual transitions $(\tilde X^{\varpi,\theta^*}_{\tau_{n+1}})_{n\in\NN}$ while the random terms use the real transitions $(X^{\varpi,\theta^*}_{\tau_{n+1}})_{n\in\NN}$. 

    Note that we can control the difference between the counterfactual and the real trajectory at a one-step time horizon, by using 
    \begin{align}
        \tilde X^{\varpi,\theta}_{\tau_{n+1}}\overset{\mathrm{d}}{=}X^{\varpi,\theta^*}_{\tau_{n+1}}-\mu_{\theta^*}(X^{\varpi,\theta^*}_{\tau_n},\varpi_{\tau_n}) + \mu_{\theta}(X^{\varpi,\theta^*}_{\tau_n},\varpi_{\tau_n}) \,,\label{eq: counterfactual removal bound}
    \end{align}
    in which $\overset{\mathrm{d}}{=}$ denotes equality in the same conditionally distributional sense as above.
    By adding and subtracting relevant terms to exhibit the key quantities we get:
        \begin{align}
        \sum_{n=\deb}^{N_T}\EE[\bar W_{\theta_n}^*(\tilde X_{\tau_{n+1}}^{\varpi,\theta_n})\vert \Fc_{\tau_n}] - \bar W_{\theta_n}^*(X_{\tau_n}^{\varpi,\theta^*})\notag
        &\le \sum_{n=\deb}^{N_T}\EE[\bar W_{\theta_n}^*(\tilde X_{\tau_{n+1}}^{\varpi,\theta_n})\vert \Fc_{\tau_n}] - \EE[\bar W_{\theta_n}^*( X_{\tau_{n+1}}^{\varpi,\theta^*})\vert \Fc_{\tau_n}] \notag\\
        &\quad + \sum_{n=\deb}^{N_T} \EE[\bar W_{\theta_n}^*( X_{\tau_{n+1}}^{\varpi,\theta^*})\vert \Fc_{\tau_n}]-\EE[\bar W_{\theta_{n+1}}^*( X_{\tau_{n+1}}^{\varpi,\theta^*})\vert \Fc_{\tau_n}] \notag\\
        &\quad + \sum_{n=\deb}^{N_T} \EE[\bar W_{\theta_{n+1}}^*( X_{\tau_{n+1}}^{\varpi,\theta^*})\vert \Fc_{\tau_n}] - \bar W_{\theta_n}^*(X_{\tau_n}^{\varpi,\theta^*})\notag\,.
    \end{align}
    Using \eqref{eq: counterfactual removal bound}, and the uniform $L_W$-Lipschitzness of $(\bar W^*_{\theta_n})_{n\in\NN}$, we get for each $n\in\NN$
    \[\EE[\bar W_{\theta_n}^*(\tilde X_{\tau_{n+1}}^{\varpi,\theta_n})\vert \Fc_{\tau_n}] - \EE[ \bar W_{\theta_n}^*( X_{\tau_{n+1}}^{\varpi,\theta^*})\vert \Fc_{\tau_n}] \leq L_W \norm{\mu_{\theta_n}(X_{\tau_n}^{\varpi,\theta^*},\varpi_{\tau_n})-\mu_{\theta^*}(X_{\tau_n}^{\varpi,\theta^*},\varpi_{\tau_n})}\] 
    and thus the regret term \eqref{eq: regret remainder of first decomp} is bounded by
    \begin{align}
        \sum_{n=1}^{N_T}\EE[\bar W_{\theta_n}^*(\tilde X_{\tau_{n+1}}^{\varpi,\theta_n})\vert \Fc_{\tau_n}]& - \bar W_{\theta_n}^*(X_{\tau_n}^{\varpi,\theta^*})%
         \le R_3+R_4+R_5 \notag%
        \intertext{in which}
        R_3:&= L_W\sum_{n=\deb}^{N_T}\norm{\mu_{\theta_n}(X_{\tau_n}^{\varpi,\theta^*},\varpi_{\tau_n})-\mu_{\theta^*}(X_{\tau_n}^{\varpi,\theta^*},\varpi_{\tau_n})}\tag{$R_3$}\label{eq: regret pf R3} \\
        R_4:&=\sum_{n=\deb}^{N_T} \EE[ \bar W^*_{\theta_n}( X_{\tau_{n+1}}^{\varpi,\theta^*})-\bar W^*_{\theta_{n+1}}( X^{\varpi,\theta^*}_{\tau_{n+1}})\vert \Fc_{\tau_n}]\tag{$R_4$}\label{eq: regret pf R4} \\
        R_5:&=\sum_{n=\deb}^{N_T} \EE[\bar W^*_{\theta_{n+1}}( X^{\varpi,\theta^*}_{\tau_{n+1}})\vert \Fc_{\tau_{n}}] - \bar W^*_{\theta_{n}}(X_{\tau_n}^{\varpi,\theta^*})\tag{$R_5$}\label{eq: regret pf R5}\,.
    \end{align}
    At the end of this decomposition, we have constructed a true martingale in \eqref{eq: regret pf R5}, which we bound in \cref{subsec: regret decomp R5}. The first term \eqref{eq: regret pf R3} accumulates the fit error described in \cref{cor: conf set width no lazy updates}, up to the lazy updates, which we study in \cref{subsec: regret decomp R3}. The term \eqref{eq: regret pf R4} is bounded by the number of effective updates of $\theta_n$ (namely, $\sum_{n=\deb}^{N_T}\1_{\{\theta_{n+1}\neq\theta_n\}}$) in  \cref{subsec: regret decomp R4}.
    Finally, the bounds on \eqref{eq: regret pf R1} and \eqref{eq: regret pf R2} are given in \cref{subsec: regret decomp R1,subsec: regret decomp R2} respectively.

    To combine the high-probability events used to bound \eqref{eq: regret pf R1} and \eqref{eq: regret pf R5}, with the event of \cref{prop: conf sets OvR} used by the other terms, we will perform a union bound. This corresponds to the $\delta/3$ used in the definition of the confidence sets of \cref{alg: RL 1}.
    
\subsection{Bounding the Poisson clock variation term \texorpdfstring{\eqref{eq: regret pf R1}}{{(R1)}}}\label{subsec: regret decomp R1}

        We bound \eqref{eq: regret pf R1} using \cref{lemma: concentration of Poisson Measure} which is a standard sub-exponential concentration result, see e.g.~\citep[Lemma~4.1]{buldygin_metric_2000}. It implies 
        \[\PP\left(\abs{T-\ve N_T} \ge 2\sqrt{\ve T\log\left(\frac{6}{\delta}\right)}\vee 2\ve \log\left(\frac{6}{\delta}\right)\right) \le \frac\delta3\,.\]
        
    \begin{lemma}\label{lemma: concentration of Poisson Measure}
        For any $T\in\RR_+^*$ and $\delta\in(0,1)$,
        \[\PP\left(\abs{\ve N_T-T}>2\sqrt{\ve T\log\left(\frac2\delta\right)}\vee 2\ve \log\left(\frac2\delta\right)  \right)\le \delta\,.\]
    \end{lemma}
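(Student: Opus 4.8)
The plan is to recognise $N_T$ as a Poisson random variable and reduce the statement to a standard two-sided Bernstein tail bound for a centred Poisson, which I then invert at the stated threshold. The only work is bookkeeping: matching the Poisson parameter to the rescaling by $\ve$, and matching the two tail regimes to the $\vee$ in the threshold.

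First I would observe that, since the arrival clock $(N_t)_{t\in\RR_+}$ is Poisson of intensity $\ve^{-1}$, the count $N_T$ is a Poisson random variable with mean $\lambda := T\ve^{-1}$, so that $\EE[\ve N_T]=T$ and $\ve N_T - T = \ve(N_T-\lambda)$. Because $\ve^{-1}T=\lambda$, dividing the threshold by $\ve$ (using $2\sqrt{\ve T L}/\ve = 2\sqrt{\lambda L}$ and $2\ve L/\ve = 2L$ with $L:=\log(2/\delta)$) turns the claim into the equivalent $\ve$-free statement
\[
\PP\big(|N_T-\lambda| > 2\sqrt{\lambda\log(2/\delta)}\vee 2\log(2/\delta)\big)\le\delta .
\]
This is now a pure concentration statement for a Poisson variable and is precisely the kind of sub-exponential bound cited from \citep{buldygin_metric_2000}.

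Next I would invoke the Bernstein inequality for the centred Poisson $X:=N_T$. Its log-MGF satisfies $\log\EE[e^{s(X-\lambda)}]=\lambda(e^s-1-s)\le \lambda s^2/(2(1-s/3))$ for $s\in(0,3)$, and the Chernoff argument (the lower tail being sharper, as it is sub-Gaussian) yields
\[
\PP(|X-\lambda|\ge t)\le 2\exp\!\left(-\frac{t^2}{2(\lambda+t/3)}\right).
\]
I would then evaluate this at $t^*:=2\sqrt{\lambda L}\vee 2L$ in two regimes. If $\lambda\ge L$ then $t^*=2\sqrt{\lambda L}\le 2\lambda$, hence $\lambda+t^*/3\le \tfrac53\lambda$ and the exponent is at least $(2\sqrt{\lambda L})^2/(2\cdot\tfrac53\lambda)=\tfrac65 L$; if $\lambda<L$ then $t^*=2L$, hence $\lambda+t^*/3<\tfrac53 L$ and the exponent is again at least $(2L)^2/(2\cdot\tfrac53 L)=\tfrac65 L$. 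In either case $\PP(|X-\lambda|\ge t^*)\le 2e^{-\frac65 L}=2(\delta/2)^{6/5}=\delta\,(\delta/2)^{1/5}\le\delta$, which is the desired bound.

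I do not expect a genuine obstacle here; the result is routine. The two points demanding care are consistency of the Poisson parameter ($\lambda=T\ve^{-1}$, so that $\ve N_T$ is centred at $T$ and the $\ve$-rescaling of the threshold is carried out correctly), and verifying that the generous constants in $2\sqrt{\lambda L}$ and $2L$ leave enough slack to absorb the leading factor $2$ in the two-sided bound — which they do, as the margin $(\delta/2)^{1/5}\le 1$ shows.
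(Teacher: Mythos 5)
Your proof is correct and takes essentially the same route as the paper's: both centre $N_T$ at its mean $\upsilon=T\ve^{-1}$, apply the Chernoff method to the Poisson moment generating function, and treat the sub-Gaussian and sub-exponential regimes separately so that the $\vee$ in the threshold and the $\log(2/\delta)$ absorb the two-sided factor of $2$. The only cosmetic difference is that you use the Bernstein form $\lambda s^2/(2(1-s/3))$ of the MGF bound, whereas the paper bounds the MGF by $e^{\lambda^2\upsilon}$ on $|\lambda|\le1$ and invokes the standard two-regime sub-exponential tail from \citep{buldygin_metric_2000}.
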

    
    \begin{proof}
        Let $\upsilon:=\ve^{-1}T$.
        For any $\lambda\in[-1,1]$, $\EE[e^{\lambda (N_T-\upsilon)}]= \exp(\upsilon(e^{\lambda}-1-\lambda))\le e^{\lambda^2\upsilon}$. Therefore, $N_T$ is $(\sqrt{2}\upsilon,1)$-subexponential (see e.g.~\citep{buldygin_metric_2000}) and therefore,
        \[ \PP\left(\abs{N_T-\upsilon} > \epsilon\right)\le \begin{cases} e^{-\frac{\epsilon^2}{4\upsilon}} \mbox{ for }\epsilon\in(0,2\upsilon]\\ e^{-\frac \epsilon2} \mbox{ for } \epsilon >2\upsilon\end{cases}\,,\]
        which implies
        \[ \PP\left(\abs{N_T-\upsilon} > 2\sqrt{\upsilon\log\left(\frac2\delta\right)}\1_{\{\delta\ge e^{-\upsilon}\}} + 2\log\left(\frac2\delta\right)\1_{\{\delta \le e^{-\upsilon}\}}\right)\le \delta\,.\] 
    \end{proof}

\subsection{Bounding the optimistic approximation term \texorpdfstring{\eqref{eq: regret pf R2}}{{(R2)}}}\label{subsec: regret decomp R2}

 There are two terms in \eqref{eq: regret pf R2}. The second is the most straightforward as it can be bounded by applying the bound on $e_{\theta^*}$ of \cref{prop: approx diff limit}, which yields
        \[ \sum_{n=\deb}^{N_T} e_{\theta^*}(X_{\tau_n}^{\varpi,\theta^*}) \le 2C_\gamma'N_T\ve^{1+\frac\gamma2}(1+\sup_{s\le T}\snorm{X_s^{\varpi,\theta^*}}^3)\,.\]
        We decompose the remaining term of \eqref{eq: regret pf R2} into
        \begin{align*}
            \ve \sum_{n=\deb}^{N_T} (\rho^*_{\theta^*} - \rho^{\bar\pi^*_{\theta_n}}_{\theta_n})
            &= \ve \sum_{n=\deb}^{N_T}\left(\rho^*_{\theta^*} - \bar\rho^*_{\theta^*} +\bar\rho^{\bar\pi^*_{\theta^*}}_{\theta^*}-\bar\rho^{\bar\pi^*_{\theta_n}}_{\theta_n}+\bar\rho^*_{\theta_n}-\rho^*_{\theta_n}+ \rho^*_{\theta_n} - \rho^{\bar\pi^*_{\theta_n}}_{\theta_n}\right)\\
            &\le 4 N_TC_\gamma \ve^{1+\frac\gamma2} +\ve \sum_{n=\deb}^{N_T}\left(\bar\rho^{\bar\pi^*_{\theta^*}}_{\theta^*}-\bar\rho^{\bar\pi^*_{\theta_n}}_{\theta_n}\right)
        \end{align*}
        by applying \cref{prop: approx diff limit} to all but the second pair of terms. 

        On the event of \cref{prop: conf sets OvR}, with $\delta/3$ in place of $\delta$, we have $\theta^*\in\cap_{n\in\NN^*}\Cc_n(\delta/3)$ and thus, by definition of \cref{alg: RL 1}, $\bar\rho^{\bar\pi^*_{\theta^*}}_{\theta^*}-\bar\rho^{\bar\pi^*_{\theta_n}}_{\theta_n}\le 0$ for all $n\in\NN^*$ . Thus, on this event we have
        \begin{align*}
            \ve \sum_{n=\deb}^{N_T} (\rho^*_{\theta^*} - \rho^{\bar\pi^*_{\theta_n}}_{\theta_n})
            &\le 4N_TC_\gamma\ve^{1+\frac\gamma2}\,.
        \end{align*}

\subsection{Bounding the prediction error term \texorpdfstring{\eqref{eq: regret pf R3}}{{(R3)}}}\label{subsec: regret decomp R3}
        Because of the lazy updates, $\mu_{\theta_n}=\mu_{\theta_{k(n)}}$ is chosen within $\Cc_{k(n)}(\delta/3)$ instead of $\Cc_{n}(\delta/3)$ preventing us from using directly \cref{prop: conf set width no lazy updates}.
        Nevertheless, the lazy update-scheme is designed not to degrade the overall learning performance by more than a constant factor. Leveraging \eqref{eq: def lazsy update}, 
        \begin{align}
            \sum_{i=\deb}^{n-1} \norm{\mu_{\theta_n}(X_{\tau_i}^{\varpi,\theta^*},\varpi_{\tau_i})-\mu_{\theta^*}(X_{\tau_i}^{\varpi,\theta^*},\varpi_{\tau_i})} &\le \begin{cases} 2\beta_n(\delta/3) \mbox{ if } n< n_k\\ \beta_n(\delta/3) \mbox{ if } n=n_k\end{cases}
        \end{align}
        As a result, $\mu_{\theta_n}$is chosen within an inflated version of $\Cc_n(\delta/3)$, defined as in \eqref{eq: def conf sets} but with $\beta_n(\delta/3)$  replaced by $2\beta_n(\delta/3)$. Thus, we can follow the same arguments as in the proof of \cref{cor: conf set width no lazy updates}, by applying \cref{prop: conf set width no lazy updates} to the inflated confidence sets, up to the constant factor $2$ in the bounds. And therefore on the event of \cref{prop: conf sets OvR}, we have
        \begin{align*}
            R_3&=L_W\sum_{n=1}^{N_T} \norm{\mu_{\theta_n}(X_{\tau_n}^{\varpi,\theta^*},\varpi_{\tau_n})-\mu_{\theta^*}(X_{\tau_n}^{\varpi,\theta^*},\varpi_{\tau_n})} \\
            &\le 6L_W\beta_{N_T}(\delta/3)\sqrt{\mathrm{d}_{\mathrm{E},N_t}}+ L_W\mathrm{d}_{\mathrm{E},N_t}H_{\delta/3}(N_T)\,.
        \end{align*}

\subsection{Bounding the lazy-update term \texorpdfstring{\eqref{eq: regret pf R4}}{{(R4)}}}\label{subsec: regret decomp R4}

        We observe that \eqref{eq: regret pf R4} is bounded by
        \begin{align*}
            R_4&=\sum_{n=\deb}^{N_T} \EE[ \bar W^*_{\theta_n}( X_{\tau_{n+1}}^{\varpi,\theta^*})-\bar W^*_{\theta_{n+1}}(X^{\varpi,\theta^*}_{\tau_{n+1}})\vert \Fc_{\tau_n}]\\
            &\le 2L_W\sum_{n=\deb}^{N_T} \EE\left[ \left(1+ \snorm{ X_{\tau_{n+1}}^{\varpi,\theta^*}}\right)\1_{\{\theta_{n}\neq\theta_{n+1}\}} \vert \Fc_{\tau_n}\right]\\
            &\le 2L_W\sum_{n=\deb}^{N_T}\left((1+\ve L_0)(1+\snorm{X_{\tau_{n}}^{\varpi,\theta^*}})+ \ve^{\frac12}\snorm{\bar\Sigma}_\op \EE\left[\norm{\xi_{n+1}} \vert \Fc_{\tau_n}\right]\right)\1_{\{\theta_{n}\neq\theta_{n+1}\}}\\
            &\le 2L_W(1+\ve L_0)\left(1+\sup_{s\le T}\snorm{X_s^{\varpi,\theta^*}} + \sqrt{d}\ve^{\frac12}\snorm{\bar\Sigma}_\op\right)\sum_{n=\deb}^{N_T}\1_{\{\theta_{n}\neq\theta_{n+1}\}}\,.
        \end{align*}
        Thus bounding the number of updates with \cref{lemma: number of updates} bounds \eqref{eq: regret pf R4}.
    \begin{lemma}\label{lemma: number of updates}
        Under \cref{asmp: basics,asmp: both asmp joint}, \cref{alg: RL 1} generates episodes which satisfy for all $T\in\RR_+$ and $\delta\in(0,1)$
    \begin{align*}
        \sum_{n=\deb}^{N_T}\1_{\{\theta_{n}\neq\theta_{n+1}\}} &\le 4\beta_{N_T}(\delta/3)^2\mathrm{d}_{\mathrm{E},{N_t}}\left(3 + \log\left(\frac{N_t\sup_{s\le t}\snorm{X_s^{\varpi,\theta^*}}}{16\beta_{N_t}(\delta/3)^4\mathrm{d}_{\mathrm{E},{N_t}}^2}\right)\right)\notag\\
        &\qquad + 2\mathrm{d}_{\mathrm{E},N_t}(1+2\beta_{N_t}(\delta/3)^2\mathrm{d}_{\mathrm{E},N_t})(1+\sup_{s\le t}\snorm{X_s^{\varpi,\theta^*}}^2)\,.
    \end{align*}
    \end{lemma}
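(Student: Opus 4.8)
The plan is to read $\sum_{n=\deb}^{N_T}\1_{\{\theta_n\neq\theta_{n+1}\}}$ as (essentially) the number of episodes $K$ opened by \cref{alg: RL 1} up to time $T$: since $\theta_n=\tilde\theta_{k(n)}$, the value of $\theta_n$ changes precisely when a new episode begins, i.e.\ when the lazy criterion \eqref{eq: def lazsy update} first fires. Hence it suffices to bound $K$, and the natural device is to charge each episode to a definite amount of the accumulated second-order prediction width, then to control that width by \cref{prop: conf set width no lazy updates}.

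First I would quantify the information gathered in a single episode. Let $n_k$ denote the start of episode $k$, with frozen fit $\hat\theta_{n_k}$ and confidence set $\Cc_{n_k}(\delta/3)$ used throughout it. By the very definition of $\Cc_{n_k}(\delta/3)$ in \eqref{eq: def conf sets}, at the start one has $\sup_{\theta\in\Cc_{n_k}(\delta/3)}\sum_{i=0}^{n_k-1}\norm{\mu_\theta(X_{\tau_i}^{\varpi,\theta^*},\varpi_{\tau_i})-\mu_{\hat\theta_{n_k}}(X_{\tau_i}^{\varpi,\theta^*},\varpi_{\tau_i})}^2\le\beta_{n_k}(\delta/3)^2$, whereas the firing of \eqref{eq: def lazsy update} at $n_{k+1}$ forces the same supremum, now over $i\le n_{k+1}$, to exceed $4\beta_{n_{k+1}}(\delta/3)^2$. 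Subtracting these, and passing the supremum through the incremental sum (legitimate since $\hat\theta_{n_k}\in\Cc_{n_k}(\delta/3)$), each episode accrues at least $3\beta_{n_k}(\delta/3)^2$ of the single-step squared widths $\sup_{(\theta_1,\theta_2)\in\Cc_{k(n)}(\delta/3)^2}\norm{\mu_{\theta_1}-\mu_{\theta_2}}^2$. Summing these per-episode lower bounds (episode boundaries overlap at most twice, costing only a constant) bounds $K$ by the total second-order prediction width along the trajectory.

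It then remains to control that total width by \cref{prop: conf set width no lazy updates}. The subtlety, resolved exactly as in \cref{subsec: regret decomp R3}, is that the frozen set $\Cc_{n_k}(\delta/3)$ is built from only $n_k$ samples, while the proposition requires sets of the form $\{\mu:\snorm{\mu-\tilde\mu_n}_n\le\tilde\beta_n\}$ indexed by the current count $n$ with non-decreasing radii. Since \eqref{eq: def lazsy update} has not yet fired within the episode, every $\theta\in\Cc_{k(n)}(\delta/3)$ lies in the inflated set $\{\mu:\snorm{\mu-\mu_{\hat\theta_{k(n)}}}_n\le 2\beta_n(\delta/3)\}$, which is admissible (the centres $\mu_{\hat\theta_{k(n)}}\in\Fs_\Theta$ are $\Fb$-adapted and the radii $2\beta_n(\delta/3)$ non-decreasing). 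Applying the second-order estimate \eqref{eq: conf set width order 2} to this inflated sequence, on the event of \cref{prop: conf sets OvR} where $\sup_{s\le t}\snorm{X_s^{\varpi,\theta^*}}\le H_{\delta/3}(N_t)$, yields the stated right-hand side.

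The main obstacle is the constant bookkeeping in the last step. The per-episode charge is only of size $\beta_0^2=\ve$, so a naive chaining of ``$K\lesssim \ve^{-1}\times$(width sum)'' with the loose aggregate bound \eqref{eq: conf set width order 2} produces a spurious $\ve^{-1}$ factor. Obtaining exactly \eqref{eq: conf set width order 2} requires counting the updates directly against the threshold schedule $\beta_n(\delta/3)$ through the eluder-dimension argument of \cref{lemma: eluder dimension russo} (equivalently \cref{lemma: widths functional inequality}), so that the favourable $\ve$-scaling of the genuine second-order width is exploited rather than discarded; tracking these factors carefully so that the update count is dominated by \eqref{eq: conf set width order 2} itself, and not by an $\ve^{-1}$-inflated surrogate, is the delicate part of the proof.
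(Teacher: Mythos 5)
Your overall route is the paper's own: read the sum as the number of episodes, use \eqref{eq: def conf sets} together with the firing of \eqref{eq: def lazsy update} to charge each completed episode a quantum of order $\beta_{n_k}(\delta/3)^2$ of accumulated squared width, and then bound the total squared width along the trajectory by applying \eqref{eq: conf set width order 2} of \cref{prop: conf set width no lazy updates} to the inflated (radius $2\beta_n(\delta/3)$, adapted-centre) confidence sets. Up to and including that point your write-up faithfully reconstructs the argument given in \cref{subsec: regret decomp R4}.

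The obstacle you isolate in your last paragraph, however, is genuine, and you should know that the paper does \emph{not} overcome it: its proof is exactly the naive chaining you warn against. It lower-bounds $\sum_{k}\beta_{n_k}(\delta/3)^2 \ge K_T\,\beta_0(\delta/3)^2 = K_T\,\ve$ (with $K_T$ the episode count), upper-bounds $\sum_{n}\Lambda(\Cc_{k(n)}(\delta/3);\cdot)^2$ by the right-hand side of \eqref{eq: conf set width order 2}, and then asserts the lemma ``as $\beta_0(\delta/3)^2=\ve$''. This only establishes $K_T\le \ve^{-1}$ times the stated right-hand side, which is strictly weaker: since $\beta_n(\delta/3)^2=\Theta(\ve)$ up to logarithmic factors, the per-episode charge is $\Theta(\ve)$, and while the first term of \eqref{eq: conf set width order 2} carries a compensating factor $\beta_{N_T}^2=\Theta(\ve\cdot\mathrm{polylog})$, the second term $2\mathrm{d}_{\mathrm{E},N_t}(1+2\beta_{N_t}^2\mathrm{d}_{\mathrm{E},N_t})(1+\sup_{s\le t}\snorm{X_s^{\varpi,\theta^*}}^2)$ carries no factor of $\ve$ at all, so the $\ve^{-1}$ cannot be absorbed. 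Consequently neither your proposal nor the paper's own argument proves the inequality as stated; closing the gap would require the sharper counting you gesture at, for instance splitting steps at width scale $\sqrt\ve$: episodes terminated by a single width exceeding $\sqrt\ve$ are counted directly by \cref{lemma: eluder dimension russo} at that scale (giving of order $(\beta_{N_T}^2/\ve)\,\mathrm{d}_{\mathrm{E}}$, i.e.\ $\mathrm{d}_{\mathrm{E}}$ times polylogarithms, such episodes), while the total squared mass of widths below $\sqrt\ve$ is $O(\ve\,\mathrm{d}_{\mathrm{E}}\cdot\mathrm{polylog})$ by a version of \cref{lemma: widths functional inequality} truncated at $\sqrt\ve$, so that dividing by the $\Theta(\ve)$ quantum yields an $\ve$-free count. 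So your proposal is incomplete as a proof, but its diagnosis is correct and exposes a real discrepancy between the lemma's statement and the proof the paper supplies for it.
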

    
    \begin{proof}
        Consider $k\in\NN^*$, by \eqref{eq: def lazsy update}, each time we trigger an update we have 
        \begin{align*}
            2\beta_{n_k}(\delta/3)^2 &< \sup_{\mu_\theta\in\Cc_{n_{k-1}}(\delta)} \norm{\mu_\theta-\mu_{\hat\theta{n_{k-1}}}}_{n_{k}}^2\\
            &\le \sup_{\mu_\theta\in\Cc_{n_{k-1}}(\delta)} \norm{\mu_\theta-\mu_{\hat\theta{n_{k-1}}}}_{n_{k-1}}^2 \\
            &\qquad+ \sup_{\mu_\theta\in\Cc_{n_{k-1}}(\delta)}\sum_{n=n_{k-1}+1}^{n_k} \norm{\mu_\theta(X_{\tau_n}^{\varpi,\theta},\varpi_{\tau_n})-\mu_{\hat\theta{n_{k-1}}}(X_{\tau_n}^{\varpi,\theta},\varpi_{\tau_n})}^2\\
            &\le\beta_{n_k}(\delta/3)^2+ \sum_{n=n_{k-1}+1}^{n_k} \Lambda(\Cc_{n_{k-1}}(\delta/3); X_{\tau_n}^{\varpi,\theta},\varpi_{\tau_n})^2\,.
        \end{align*}
    Summing over all episodes, since the sequence $(\beta_n(\delta/3))_{n\in\NN}$ is non-decreasing, we have that for all $T\in\RR_+$
    \[ \sum_{n=\deb}^{N_T} \Lambda(\Cc_{n_k}(\delta/3); (X_{\tau_n},\varpi_{\tau_n}))^2\ge \sum_{k=\deb}^{K_T}\beta_{n_k}(\delta/3)^2\ge K_T\beta_0(\delta/3)^2\,,\]
    in which $K_T:=k(N_T)\in\NN$ is the number of episodes by time $T$. An application of the second part of \cref{prop: conf set width no lazy updates}, i.e. \eqref{eq: conf set width order 2} now yields the desired result as $\beta_0(\delta/3)^2=\ve$.
    \end{proof}

\subsection{Bounding the martingale term \texorpdfstring{\eqref{eq: regret pf R5}}{{(R5)}}}\label{subsec: regret decomp R5}

    Let
    \[Z_n:=\EE[\bar W^{*}_{\theta_n}(X^{\alpha,\theta^*}_{\tau_n})\vert\Fc_{n-1}]-\bar W^{*}_{\theta_n}(X^{\alpha,\theta^*}_{\tau_n})\,.\] 
    By definition
    \[R_5= \EE[\bar W^*_{\theta_{N_T+1}}( X^{\varpi,\theta^*}_{\tau_{N_T+1}})\vert \Fc_{\tau_{N_T}}] + \bar W^*_{\theta_0}(x_0) + \sum_{n=\deb}^{N_T} Z_n\,.\]
    On the one hand, $Z_n$ is a $L_W\snorm{\Sigma}_\op$-Lipschitz function of $\xi_n$, which is Gaussian and of mean $0$. Therefore, by \citep[Thm~5.5]{boucheron_concentration_2013}, $Z_n$ is $L_W\norm{\Sigma}_\op$-sub-Gaussian and
    \begin{align}
        \PP\left(\sum_{n=\deb}^{N_T}Z_n> L_W\snorm{\bar\Sigma}_\op\sqrt{2\ve N_T\log\left(\frac1\delta\right)}\right)\le \delta\,.\label{eq: R5 Z_N martingale part}
    \end{align}

    On the other hand, by the uniform Lipschitzness of $(\bar W^*_\theta)_{\theta\in\Theta}$, $\bar W^*_{\theta_0}(x_0)\le L_W(1+\norm{x_0})$ and 
    \begin{align}
        \EE[\bar W^*_{\theta_{N_T+1}}( X^{\varpi,\theta^*}_{\tau_{N_T+1}})\vert \Fc_{\tau_{N_T}}] & \le L_W(1+\EE[\snorm{X^{\varpi,\theta^*}_{\tau_{N_T+1}}}\vert \Fc_{\tau_{N_T}}])\notag\\
        &\le L_W(1+\ve L_0+(1+\ve L_0)\snorm{X^{\varpi,\theta^*}_{\tau_{N_T}}} + \ve^{\frac12}\snorm{\bar\Sigma}_\op\EE[\norm{\xi_{N_T+1}}\vert\Fc_{\tau_{N_T}}])\notag\\
        &\le L_W(1+\ve L_0)\left(1+\sup_{s\le T}\snorm{X^{\varpi,\theta^*}_s}_2 + \ve^{\frac12}\snorm{\bar\Sigma}_\op \sqrt{d} L_W\right).\label{eq: R5 remainder terms}
    \end{align}
    Combining \eqref{eq: R5 Z_N martingale part} and \eqref{eq: R5 remainder terms} yields
    \begin{align}
        R_5 \le L_W\norm{\bar\Sigma}_\op\sqrt{2\ve N_T\log\left(\frac3\delta\right)}+ 2L_W(1+\ve L_0)(1+\sup_{s\le T}\snorm{X^{\varpi,\theta^*}_s} + \ve^{\frac12}\snorm{\bar\Sigma}_\op\sqrt d L_W)
    \end{align}
    with probability at least $1-\delta/3$.
    
\subsection{Collecting the bounds}
    
    We conclude the proof of \cref{thm: regret alg 1} by collecting all the terms from Appendices \ref{subsec: regret decomp R1}--\ref{subsec: regret decomp R5} and simplifying them. By a union bound over the events listed in steps \cref{subsec: regret decomp R1,subsec: regret decomp R3,subsec: regret decomp R5}, with probability at least $1-\delta$ 
    \begin{align*}
        \Rc_T(\varpi)&\le 2L_0\left(\sqrt{\ve T\log\left(\frac{6}{\delta}\right)}\vee 2\ve \log\left(\frac{6}{\delta}\right)\right)\\
        &\quad + 4N_TC_\gamma\ve^{1+\frac\gamma2}+2C_\gamma'N_T\ve^{1+\frac\gamma2}(1+H_{\delta/3}^3(N_T))\\
        &\quad +  6L_W\beta_{N_T}(\delta/3)\sqrt{\mathrm{d}_{\mathrm{E},N_T}}+ L_W\mathrm{d}_{\mathrm{E},N_T}H_{\delta/3}(N_T)\\
        &\quad +2L_W(1+\ve L_0)\left((1+H_{\delta/3}(N_T)+ d\ve^{\frac12}\snorm{\bar\Sigma}_\op\right) \vvast(4\beta_{N_T}(\delta/3)^2\mathrm{d}_{\mathrm{E},{N_t}}\Bigg(3\notag\\
        &\quad\; + \log\left(\frac{N_t H_{\delta/3}(N_T)}{16\beta_{N_T}(\delta/3)^4\mathrm{d}_{\mathrm{E},{N_T}}^2}\right)\!\!\Bigg)+ 2\mathrm{d}_{\mathrm{E},N_T}(1+2\beta_{N_t}(\delta/3)^2\mathrm{d}_{\mathrm{E},N_T})(1+H_{\delta/3}(N_T)^2)\!\vvast)\notag\\
        &\quad + L_W\norm{\bar\Sigma}_\op\sqrt{2\ve N_T\log\left(\frac3\delta\right)}+ 2L_W(1+\ve L_0)(1+H_{\delta/3}(N_T) + \ve^{\frac12}\snorm{\bar\Sigma}_\op\sqrt d L_W).
    \end{align*}
    This can be more simply expressed for some constants $C_\Rc^{(i)}\in\RR_+$, $i\in[5]$, as
    \begin{align*}
        \Rc_T(\varpi)&\le C_\Rc^{(1)}(C_\gamma+C_\gamma')\ve^{1+\frac\gamma2}N_T\log(N_T)^3 + C_\Rc^{(2)}\sqrt{\mathrm{d}_{\mathrm{E},{N_T}}\ve N_T\log\left( \frac{N_T(1+\ve\Ns_{N_T}^\ve)}\delta\right)} \\
        &\qquad+  C_\Rc^{(3)}\Big(1+\ve\mathrm{d}_{\mathrm{E},{N_T}}\log(N_T)\log(N_T(1+\ve\Ns^\ve_{N_T}))\Big)\mathrm{d}_{\mathrm{E},{N_T}}\log(N_T)^4\\
        &\qquad+ C_\Rc^{(4)} \sqrt{\ve T \log\left(\frac{1}\delta\right)} +C_\Rc^{(5)}\left(1+\log\left(\frac{1}\delta\right)\right)\,
    \end{align*}
    still with probability at least $1-\delta$.
    On this high-probability event we can write $\Rc_T(\varpi)$ (up rounding up $T\ve^{-1}$ where necessary and up to a change in the constants) as 
    \begin{align*}
        \Rc_T(\varpi)&\le C_\Rc^{(1)}(C_\gamma+C_\gamma')\ve^{\frac\gamma2}T\log\left(\frac T\ve\right) + C_\Rc^{(2)}\sqrt{\mathrm{d}_{\mathrm{E},{T\ve^{-1}}}T\log\left( \frac{T\ve^{-1}(1+\ve\Ns_{T\ve^{-1}})}\delta\right)} \\
        &\qquad+  C_\Rc^{(3)}\Big(1+\ve\mathrm{d}_{\mathrm{E},{T\ve^{-1}}}\log(T\ve^{-1})\log(T\ve^{-1}(1+\ve\Ns^\ve_{T\ve^{-1}}))\Big)\mathrm{d}_{\mathrm{E},{T\ve^{-1}}}\log(T\ve^{-1})^4\\
        &\qquad+ C_\Rc^{(4)} \sqrt{\ve T \log\left(\frac{1}\delta\right)} +C_\Rc^{(5)}\left(1+\log\left(\frac{1}\delta\right)\right)\,.
        \end{align*}
    Considering only the two dominant terms and ignoring logarithmic factors we get the claimed bound.

%%%%%%%%%%%%%%%%%%%%%%%%%%%%%%%%%%%%%%%%%%%%%%%%%%%%%%%%%%%%%%%%%%%%%%%%%%%%%%%
%%%%%%%%%%%%%%%%%%%%%%%%%%%%%%%%%%%%%%%%%%%%%%%%%%%%%%%%%%%%%%%%%%%%%%%%%%%%%%%
%\end{appendices}
\end{document}